\documentclass[11 pt]{elsarticle}
\usepackage{framed,multirow}
\usepackage[utf8]{inputenc}
\usepackage[T1]{fontenc}
\usepackage{siunitx}
\usepackage{graphicx}

\usepackage{hyperref}

\usepackage[ruled,vlined]{algorithm2e}
\usepackage{xcolor}

\SetKwInOut{KwRequire}{Require}
\SetKwInOut{KwEnsure}{Ensure}
\SetKwComment{Comment}{$\triangleright$\ }{}
\newcommand{\graycomment}[1]{\Comment{\textcolor{gray}{#1}}}

\newcounter{algline}

\DontPrintSemicolon

\usepackage{makecell}
\usepackage{subcaption}
\usepackage{hyperref,float}
\usepackage{booktabs}

\usepackage{tabu}
\usepackage{float}
\usepackage{placeins}
\usepackage{multirow} 
\usepackage{amssymb}
\usepackage{latexsym}
\usepackage{amsmath}
\usepackage[
left=3.8cm,
right=3.8cm,
top=4.8cm,
bottom=4.8cm
]{geometry}

\usepackage{caption}
\usepackage{amsthm}

\theoremstyle{plain}
\newtheorem{theorem}{Theorem}[section]

\theoremstyle{definition}
\newtheorem{definition}[theorem]{Definition}

\theoremstyle{remark}

\usepackage{amsfonts}
\usepackage{float}
\usepackage{url}
\usepackage{xcolor}

\definecolor{newcolor}{rgb}{.8,.349,.1}

\journal{arXiv}

\usepackage[nolist]{acronym}
\begin{acronym}
\acro{PDE}[PDE]{partial differential equations}
\acro{DG}{discontinuous Galerkin}
\acro{BC}{boundary condition}
\acro{KANs}{Kolmogorov-Arnold networks}
\acro{KAN}{Kolmogorov-Arnold network}
\acro{SciML}{Scientific machine learning}
\acro{BCs}{\ac{BC}s}
\acro{MLPs}{multilayer perceptrons}
\acro{ML}{machine learning}
\acro{RBFs}{radial basis functions}
\acro{PINNs}{physics-informed neural networks}
\acro{DeepONet}{deep operator networks}
\acro{EDNNs}{evolutionary deep neural networks}
\acro{EvoKAN}{evolutionary \ac{KAN}}
\acro{GP}{Gaussian process}
\acro{SAV}{scalar auxiliary variable}
\acro{EDNN}{evolutionary deep neural networks}
\acro{EvoKAN}{evolutionary \ac{KAN}}
\acro{PINN}{physics-informed neural networks}
\acro{RTE}{radiative transfer equation}
\acro{DG}{discontinuous Galerkin}
\end{acronym}

\begin{document}
% \linenumbers
% ========================================= %
% ================ authors ================ %
% ========================================= %

\begin{frontmatter}

\title{A Hyperbolic Neural Closure for M1 Radiation Transfer}

\author[1]{Bongseok Kim}
% \ead{kim4853@purdue.edu}

\author[2]{Jiahao Zhang}
% \ead{zhang4174@purdue.edu}

\author[3]{Johannes Krotz}
% \ead{jkrotz@nd.edu}

\author[4,5]{Dinshaw Balsara}
% \ead{dbalsara@nd.edu}

\author[3]{Ryan McClarren}
% \ead{rmcclarr@nd.edu}

\author[1,2]{Guang Lin\corref{cor1}}

\cortext[cor1]{Corresponding author.}
\ead{guanglin@purdue.edu}

\address[1]{School of Mechanical Engineering, Purdue University, West Lafayette, IN, USA}

\address[2]{Department of Mathematics, Purdue University, West Lafayette, IN, USA}

\address[3]{Department of Aerospace and Mechanical Engineering, University of Notre Dame, Notre Dame, IN, USA}

\address[4]{Department of Physics and Astronomy, University of Notre Dame, Notre Dame, IN, USA}
\address[5]{Department of Applied and Computational Mathematics and Statistics, University of Notre Dame, Notre Dame, IN, USA}

% \end{frontmatter}

% ========================================= %
% =============== abstract ================ %
% ========================================= %
\begin{NoHyper}
\begin{abstract}
% intro
In radiation transfer simulations, an M1 method achieves substantial computational savings by replacing the full angular transport equation with a low-order moment system. Because this reduced system is not closed, a closure model is required to represent the unknown higher-order moments using lower-order moments.
While machine learning (ML)-based closures can improve accuracy beyond classical analytic closures, unconstrained learned closures may produce non-real characteristic speeds and consequently cause numerical solver breakdown.
% proposed method
To guarantee real eigenvalues of the Jacobian associated with ML closures, we propose a hyperbolic neural closure for the M1 radiative transfer system.
Rather than directly predicting closure terms, we parameterize the Jacobian through two neural networks: (i) a symmetric matrix network and (ii) a strictly convex entropy network whose Hessian defines a positive definite symmetrizer.
These components are combined to yield a Jacobian that is similar to a symmetric matrix, thereby ensuring real eigenvalues. The closure is then reconstructed by numerical integration of the learned Jacobian field along a prescribed integration path.
% results
Numerical experiments show that the proposed closure not only achieves higher closure accuracy than classical analytic closures, but also improves solution accuracy and remains stable in discontinuous Galerkin simulations for radiative transfer problems.
\end{abstract}

\begin{keyword}
Radiation transfer, M1 method, Moment closure, Deep learning, Discontinuous Galerkin method, Hyperbolicity
\end{keyword}
\end{NoHyper}

%%Graphical abstract

\end{frontmatter}

\section{Introduction}
% intro
%% background
Radiative transfer arises in diverse physical phenomena, including astrophysics \cite{gonzalez2007heracles}, plasma physics \cite{fleck1971implicit}, atmospheric science \cite{cahalan2005i3rc}, biomedical optics \cite{klose2002optical_forward}, and heat transfer \cite{chai1994finite}.
The \ac{RTE} is a kinetic transport equation whose direct numerical simulation is computationally expensive because the solution depends on spatial, angular, frequency, and temporal variables.
%% Moment method
Moment methods~\cite{levermore1996moment} reduce computational complexity by replacing the full angular transport equation with a finite hierarchy of moment equations. Among these formulations, the M1 method~\cite{murchikova2017analytic,levermore1984relating} truncates the hierarchy at second order, leading to a hyperbolic system for radiation energy density and radiation flux together with a closure relation for the radiation pressure tensor.
%% problem statement
The remaining difficulty is the \textit{closure problem}: the moment system contains more unknowns than governing equations, leaving the radiation pressure tensor unspecified.
%% classical approach
A widely used approach to address the closure problem is to use analytic closure models for the radiation pressure tensor, such as the Levermore closure~\cite{levermore1984relating}, the Minerbo closure~\cite{minerbo1978maximum}, and several related formulations surveyed in~\cite{murchikova2017analytic}.
%% problem of classical analytic closure
However, classical closure models can become restrictive in strongly anisotropic transport regimes~\cite{frank2007radiative,hauck2011entropy} and struggle to represent complex angular radiation distributions~\cite{hauck2011entropy}.

% difficulty of using a data-driven approach
As an alternative to classical closure models, \ac{ML}-based closures may offer a promising approach for constructing more accurate closure relations for M1 radiative transfer.
However, the \ac{ML}-based closure can violate \textit{hyperbolicity} of the resulting M1 system, since the learned closure directly modifies the flux Jacobian governing wave propagation.
For systems of conservation laws, preserving hyperbolicity requires the directional flux Jacobian to have real eigenvalues and a complete set of eigenvectors, ensuring well-posedness of the Cauchy problem and finite speed propagation~\cite{dafermos2005hyperbolic,godlewski2013numerical}.
This requirement is particularly critical in radiative transfer simulations, where transport equations are typically solved using numerical methods supporting high-order discretization, such as \ac{DG} methods~\cite{reed1973triangular,cockburn1989tvb,cockburn1990tvb,cockburn1998runge}, weighted essentially non-oscillatory schemes \cite{balsara2020weno}, and approximate Riemann solvers~\cite{balsara2018hlli,balsara2025multidimensional}, all of which rely on physically consistent wave propagation.

% ML closure literature
Recent studies have explored various \ac{ML}-based approaches for moment closure modeling in kinetic equations and radiative transfer. 
Existing approaches include low-dimensional generalized moment representations for kinetic equations~\cite{han2019uniformly}, direct prediction of closure terms from resolved moment variables~\cite{scoggins2021sprays,bois2022eulerpoisson}, and invariant moment closure models for kinetic equations~\cite{li2023invariance}. 
Other studies approximated entropy-based closures through entropy ansatz coefficients~\cite{porteous2023structure}, convex entropy potentials for entropy variable reconstruction~\cite{schotthofer2021convex}, and convex approximations of regularized entropy minimization for radiative transport~\cite{schotthofer2025regularized}. 
% Recent studies on radiative transfer and kinetic transport introduced gradient-based closures for the spatial derivative of the unclosed higher order moment~\cite{huang2022gradient}, followed by formulations that enforce global hyperbolicity through sufficient coefficient conditions for symmetrizable hyperbolicity and impose additional coefficient conditions to control physical characteristic speeds~\cite{huang2023global,huang2023characteristic}.
% The above studies demonstrate the importance of preserving hyperbolicity and physically admissible characteristic speeds in closure modeling.
% However, a general framework for constructing hyperbolicity-preserving ML closures remains absent.
% Existing approaches typically enforce hyperbolicity through explicit algebraic conditions derived from low-dimensional spectral structures, such as characteristic polynomial or determinant conditions. Because these conditions depend explicitly on the spectral structure of a particular system, extending such constructions to general multidimensional hyperbolic systems becomes increasingly difficult.
Recent studies on radiative transfer and kinetic transport introduced gradient based closures for the spatial derivative of the unclosed higher order moment~\cite{huang2022gradient}, followed by formulations that enforce global hyperbolicity through sufficient coefficient conditions for symmetrizable hyperbolicity and additional constraints on the characteristic speeds~\cite{huang2023global,huang2023characteristic}. More recently, these ideas were extended to multidimensional moment systems by constructing symmetrizable hyperbolic neural closures through matrix parameterizations designed for the underlying system~\cite{huang2026series4}. These studies demonstrate the importance of preserving hyperbolicity and physically admissible characteristic speeds in \ac{ML} closure models.
However, these methods derive hyperbolicity from explicit algebraic conditions that depend on the spectral or matrix structure of a particular system. Extending such constructions to more complex settings, such as higher order moment models or higher dimensional systems, generally requires deriving new algebraic conditions tailored to each case. This motivates the development of a more systematic construction that does not rely on problem specific algebraic conditions.

To develop a hyperbolic closure model, we draw attention to classical symmetrization theory.
From this perspective, symmetrization provides a direct route to hyperbolicity for systems of conservation laws~\cite{dafermos2005hyperbolic,tadmor2016entropy}.
A recent parametric formulation, SymCLaw~\cite{liu2026parametric}, uses entropy symmetrization to parameterize learned fluxes through a strictly convex entropy function and an associated entropy flux potential.
In this work, we propose a hyperbolic neural closure framework for closure modeling in hyperbolic PDE systems, demonstrated on the M1 radiative transfer system.
The closure is constructed by parameterizing the flux Jacobian induced by the unknown closure variable in a symmetrizable form. A structural network produces symmetric matrix factors, while an input convex entropy network with a positive quadratic term produces a strictly positive definite Hessian that acts as a symmetrizer. The resulting closure induced directional Jacobians are similar to symmetric matrices and therefore have real eigenvalues. The radiation pressure tensor is obtained by numerical path integration of this Jacobian field, and local differential features are used only as auxiliary inputs for improving pressure accuracy.
Unlike SymCLaw~\cite{liu2026parametric}, which directly parameterizes the entire flux function, the present work addresses closure modeling where the known physical flux structure is preserved and only the unclosed higher-order moments are learned.
The main contributions of this work are as follows:
\begin{enumerate}
\item \textbf{Hyperbolicity.}
The proposed framework establishes a new approach for constructing ML closures that preserve hyperbolic structure through symmetrizable Jacobian representation, where the closure variable is recovered by numerical path integration.

\item \textbf{Closure accuracy.}
By incorporating gradient-based spatial information unavailable in classical analytic closures, the proposed model improves pressure tensor prediction accuracy, particularly for the off-diagonal component.

\item \textbf{DG solution accuracy and stability.}
The proposed closure model outperforms classical analytic closures in \ac{DG} solution accuracy and remains stable when coupled with a \ac{DG} solver for M1 radiative transfer.
\end{enumerate}

The remainder of this paper is organized as follows. In Sec.~\ref{Sec:preliminaries}, we review the preliminaries on hyperbolic conservation laws, entropy structure, and symmetrization relevant to the M1 radiative transfer system. In Sec.~\ref{Sec:method}, we present the proposed hyperbolic neural closure framework, including the symmetrizable Jacobian construction and the recovery of the radiation pressure tensor. In Sec.~\ref{Sec:numerical_experiments}, we report numerical experiments to assess both closure accuracy and the preservation of hyperbolicity in the resulting moment system. Finally, Sec.~\ref{sec:conclusion} concludes the paper.

\section{Preliminaries}
\label{Sec:preliminaries}
% Preliminareis
% 1.1 General hyperbolic systems of conservation laws
% 1.2 Parametric hyperbolic system
% 2. M1 radiation transfer review
% We begin by reviewing the theoretical foundations for parametric hyperbolic systems of conservation laws~\cite{liu2026parametric}, which later form the basis of the proposed closure model represented in Sec.~\ref{Sec:method}.
We begin by reviewing the theoretical foundations for parametric hyperbolic systems of conservation laws~\cite{liu2026parametric}, which form the basis of the proposed closure model in Sec.~\ref{Sec:method}.

\subsection{Hyperbolic conservation laws}

Our primary goal is to discover a closure model that resides in the hyperbolicity guaranteed admissible set. In this work, the underlying theory used to ensure hyperbolicity at the PDE level plays an important role in the development of the closure model.
To establish this connection, we first consider the general hyperbolic PDEs of the $d$-dimensional system:
\begin{equation}
\frac{\partial \mathbf{u}}{\partial t} + \sum_{i=1}^d \frac{\partial f_i(\mathbf{u})}{\partial x_i} = 0,
\qquad
x \in \Omega \subset \mathbb R^d,
\qquad
t \in (0,T),
\label{eq:prelim_general_claw}
\end{equation}
where $\mathbf{u}=(u_1,\dots,u_p)^\top \in \mathcal D \subset \mathbb R^p$ is the vector of conserved variables and $\mathcal D$ is assumed to be convex. For each $1 \le i \le d$, the flux Jacobian is $A_i(\mathbf{u}):=\nabla_\mathbf{u} f_i(\mathbf{u})= \big\{ \frac{\partial f_i(\mathbf{\mathbf{u}})}{\partial u_j}  \big\}_{1 \leq i,j \leq p} \in \mathbb R^{p\times p}$, and for any unit vector $\mathbf{n}=(n_1,\dots,n_d)^\top \in \mathbb R^{d}$, we define the directional Jacobian $A(\mathbf{u};\mathbf{n}):=\sum_{i=1}^d n_i A_i(\mathbf{u})$.

\begin{definition}[Hyperbolicity]
The system \eqref{eq:prelim_general_claw} is hyperbolic on $\mathcal D$ if, for every $\mathbf{u} \in \mathcal D$ and every $\mathbf{n} \in \mathbb R^{d}$, the matrix $A(\mathbf{u};\mathbf{n})$ has $p$ real eigenvalues and a complete set of linearly independent eigenvectors.
\end{definition}

\subsection{Parametric hyperbolic conservation laws}

For systems of conservation laws, convex entropy functions provide a classical approach for symmetrization.
When the entropy function $\eta(\cdot)$ is strongly convex, we can define the entropy variables $\mathbf{v} := (\nabla_\mathbf{u}\eta(\mathbf{u}))^\top$, where the variables $\mathbf{u}$ and $\mathbf{v}$ are in a one-to-one correspondence.
Writing $g_i(\mathbf{v}):=f_i(\mathbf{u}(\mathbf{v}))$ for $1 \le i \le d$, the conservation law can be recast as
\begin{equation}
\nabla_\mathbf{v} \mathbf{u}(\mathbf{v})\,\frac{\partial \mathbf{v}}{\partial t}
+
\sum_{i=1}^d \nabla_\mathbf{v} g_i(\mathbf{v})\,\frac{\partial \mathbf{v}}{\partial x_i}
=
0.
\label{eq:prelim_symmetric_form}
\end{equation}
Since $\eta(\cdot)$ is strongly convex, its Hessian $\mathcal{H}_\mathbf{u}\eta(\mathbf{u}):=\nabla_\mathbf{u}^2\eta(\mathbf{u})$ is symmetric positive definite, and therefore $\nabla_\mathbf{v} \mathbf{u}(\mathbf{v})=\bigl(\mathcal{H}_\mathbf{u}\eta(\mathbf{u})\bigr)^{-1}$ is also symmetric positive definite.
Here, the entropy function follows the classical entropy symmetrization framework and is introduced solely to construct a positive definite symmetrizer for the learned Jacobian representation.

\begin{theorem}[Symmetrization]
\label{thm:prelim_symmetrization}
Let $\eta(u)$ be strongly convex. Then $\eta(\cdot)$ is an entropy for \eqref{eq:prelim_general_claw} if and only if $\nabla_\mathbf{v}\mathbf{u}(\mathbf{v})$ is symmetric positive definite and, for each $1 \le i \le d$, the matrix $\nabla_\mathbf{v} g_i(\mathbf{v})$ is symmetric. 
In that case, \eqref{eq:prelim_symmetric_form} is a symmetrized form of \eqref{eq:prelim_general_claw}. For every $\mathbf{n} \in \mathbb R^{d}$, the matrix $A(\mathbf{u};\mathbf{n})=\sum_{i=1}^d n_i \nabla_\mathbf{u} f_i(\mathbf{u})$ is similar to the symmetric matrix
\begin{equation}
\bigl(\nabla_\mathbf{u} \mathbf{v}(\mathbf{u})\bigr)^{1/2}
\left(
\sum_{i=1}^d n_i \nabla_\mathbf{v} g_i(\mathbf{v})
\right)
\bigl(\nabla_\mathbf{u} \mathbf{v}(\mathbf{u})\bigr)^{1/2}.
\label{eq:prelim_similarity_matrix}
\end{equation}
Hence, the existence of a strictly convex entropy implies hyperbolicity.
\end{theorem}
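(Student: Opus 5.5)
The plan is to prove the equivalence through the compatibility condition for an entropy flux. I take the standard definition: $\eta$ is an entropy for \eqref{eq:prelim_general_claw} if there exist entropy fluxes $q_i$ with $\nabla_\mathbf{u} q_i = \nabla_\mathbf{u}\eta\,\nabla_\mathbf{u} f_i$ for each $i$.

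\textbf{Forward direction.} Since $\eta$ is strongly convex, $\mathbf{u}\mapsto\mathbf{v}=(\nabla_\mathbf{u}\eta)^\top$ is a diffeomorphism onto its image, by the inverse function theorem together with strict monotonicity of the gradient. Its Jacobian is $\nabla_\mathbf{u}\mathbf{v}=\mathcal{H}_\mathbf{u}\eta$, so $\nabla_\mathbf{v}\mathbf{u}=(\mathcal{H}_\mathbf{u}\eta)^{-1}$ is symmetric positive definite. This holds regardless of the entropy property. Next introduce the Legendre transforms
\[
\eta^*(\mathbf{v})=\mathbf{v}^\top\mathbf{u}(\mathbf{v})-\eta(\mathbf{u}(\mathbf{v})),
\qquad
\psi_i(\mathbf{v})=\mathbf{v}^\top g_i(\mathbf{v})-q_i(\mathbf{u}(\mathbf{v})).
\]
Differentiating gives $\nabla_\mathbf{v}\eta^*=\mathbf{u}^\top$, using $\nabla_\mathbf{u}\eta=\mathbf{v}^\top$. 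Likewise $\nabla_\mathbf{v}\psi_i = g_i^\top + \mathbf{v}^\top\nabla_\mathbf{v} g_i - \nabla_\mathbf{u} q_i\,\nabla_\mathbf{v}\mathbf{u}$. The entropy flux relation gives $\nabla_\mathbf{u} q_i\nabla_\mathbf{v}\mathbf{u}=\mathbf{v}^\top\nabla_\mathbf{u} f_i\nabla_\mathbf{v}\mathbf{u}=\mathbf{v}^\top\nabla_\mathbf{v} g_i$, so $\nabla_\mathbf{v}\psi_i=g_i^\top$. Hence $\nabla_\mathbf{v} g_i=\nabla^2_\mathbf{v}\psi_i$ is a Hessian and therefore symmetric.

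\textbf{Converse.} Suppose $\nabla_\mathbf{v} g_i$ is symmetric. On the convex (hence simply connected) image of $\mathcal D$ in $\mathbf{v}$-space, the Poincar\'e lemma gives potentials $\psi_i$ with $\nabla_\mathbf{v}\psi_i=g_i^\top$. Set $q_i=\mathbf{v}^\top g_i-\psi_i$ and reverse the computation above. This yields $\nabla_\mathbf{v} q_i=\mathbf{v}^\top\nabla_\mathbf{v} g_i$, and multiplying on the right by $\nabla_\mathbf{u}\mathbf{v}$ gives $\nabla_\mathbf{u} q_i=\nabla_\mathbf{u}\eta\,\nabla_\mathbf{u} f_i$.

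\textbf{Main obstacle.} The step I expect to need care is this one: the $\mathbf{v}$-domain is the image of a convex set under a nonlinear map, so it need not itself be convex. I would first try to show it is star-shaped or simply connected, since it is the diffeomorphic image of a convex set. Failing that, I would assume the standard convention that the entropy-variable domain is simply connected. The symmetrized form \eqref{eq:prelim_symmetric_form} follows from the chain rule, $\partial_t\mathbf{u}=\nabla_\mathbf{v}\mathbf{u}\,\partial_t\mathbf{v}$ and $\partial_{x_i}f_i=\nabla_\mathbf{v} g_i\,\partial_{x_i}\mathbf{v}$.

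\textbf{Similarity and hyperbolicity.} By the chain rule, $\nabla_\mathbf{u} f_i=\nabla_\mathbf{v} g_i\,\nabla_\mathbf{u}\mathbf{v}$, so $A(\mathbf{u};\mathbf{n})=S H$ with $S=\sum_i n_i\nabla_\mathbf{v} g_i$ symmetric and $H=\nabla_\mathbf{u}\mathbf{v}$ symmetric positive definite. Then
\[
H^{1/2}AH^{-1/2}=H^{1/2}SH^{1/2},
\]
which is exactly \eqref{eq:prelim_similarity_matrix} and is symmetric. A symmetric matrix is orthogonally diagonalizable with real eigenvalues. Similarity preserves the eigenvalues, and conjugating the orthonormal eigenbasis by $H^{-1/2}$ gives $p$ independent eigenvectors of $A$. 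Hence the system is hyperbolic.
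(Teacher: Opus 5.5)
Your proof is correct. Its last part is exactly the paper's argument: the factorization $A(\mathbf{u};\mathbf{n})=SH$ with $S=\sum_i n_i\nabla_{\mathbf{v}} g_i$ and $H=\nabla_{\mathbf{u}}\mathbf{v}$, followed by conjugation with $H^{1/2}$.

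Where you differ is in scope. The paper's proof only derives the chain-rule factorization, the similarity, and the symmetry of $H^{1/2}SH^{1/2}$ when each $\nabla_{\mathbf{v}} g_i$ is symmetric. It states the ``if and only if'' characterization of $\eta$ being an entropy but never proves it, relying implicitly on the classical Godunov--Mock theory. Your Legendre-transform construction supplies that missing half: $\nabla_{\mathbf{v}}\psi_i=g_i^\top$ in one direction, and the Poincar\'e lemma with $q_i=\mathbf{v}^\top g_i-\psi_i$ in the other. The price is mild regularity assumptions: $\eta\in C^2$, $f_i\in C^1$, and $\mathcal{D}$ open.

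The obstacle you flag dissolves. The $\mathbf{v}$-domain is a diffeomorphic image of a convex set, hence contractible and therefore simply connected, so no extra convention is needed. Alternatively, pull the closed form $g_i^\top d\mathbf{v}$ back to $\mathcal{D}$, where it becomes $f_i^\top H\,d\mathbf{u}$, and apply the Poincar\'e lemma on the convex set itself.

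What the paper's shorter route buys is that it rests only on the fact that a symmetric matrix times an SPD matrix is similar to a symmetric matrix. That is precisely what the neural closure later uses. The learned $S_x,S_y$ are not guaranteed to be Jacobians of any flux, since integrability is not enforced. So your entropy equivalence does not transfer to the learned model, while the similarity argument does.
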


\begin{proof}
Since $\mathbf{v}=(\nabla_\mathbf{u} \eta(\mathbf{u}))^\top$, we have $\nabla_\mathbf{u} \mathbf{v}(\mathbf{u})=\mathcal{H}_\mathbf{u}\eta(\mathbf{u})$ and $\nabla_\mathbf{v} \mathbf{u}(\mathbf{v})=\bigl(\nabla_\mathbf{u} \mathbf{v}(\mathbf{u})\bigr)^{-1}$.
For each $i$, by the chain rule,
\begin{equation}
\nabla_\mathbf{v} g_i(\mathbf{v})
=
\nabla_\mathbf{u} f_i(\mathbf{u})\,\nabla_\mathbf{v} \mathbf{u}(\mathbf{v})
=
A_i(\mathbf{u})\,\nabla_\mathbf{v} \mathbf{u}(\mathbf{v}).
\label{eq:prelim_nabla_v_gi}
\end{equation}
Thus
\begin{equation}
A_i(\mathbf{u})
=
\nabla_\mathbf{v} g_i(\mathbf{v})\,\nabla_\mathbf{u} \mathbf{v}(\mathbf{u}).
\label{eq:prelim_Ai_factorization}
\end{equation}
Summing over $i$ with weights $n_i$ gives
\begin{equation}
A(\mathbf{u};\mathbf{n})
=
\sum_{i=1}^d n_i A_i(\mathbf{u})
=
\left(\sum_{i=1}^d n_i \nabla_\mathbf{v} g_i(\mathbf{v})\right)\nabla_\mathbf{u} \mathbf{v}(\mathbf{u}).
\label{eq:prelim_directional_factorization}
\end{equation}
Multiply \eqref{eq:prelim_directional_factorization} on the left by $\bigl(\nabla_\mathbf{u} \mathbf{v}(\mathbf{u})\bigr)^{1/2}$ and on the right by $\bigl(\nabla_\mathbf{u} \mathbf{v}(\mathbf{u})\bigr)^{-1/2}$:
\begin{align}
&\bigl(\nabla_{\mathbf{u}} \mathbf{v}(\mathbf{u})\bigr)^{1/2}
A(\mathbf{u};\mathbf{n})
\bigl(\nabla_{\mathbf{u}} \mathbf{v}(\mathbf{u})\bigr)^{-1/2} \\
&=
\bigl(\nabla_{\mathbf{u}} \mathbf{v}(\mathbf{u})\bigr)^{1/2}
\left(\sum_{i=1}^d n_i \nabla_{\mathbf{v}} \mathbf{g}_i(\mathbf{v})\right)
\nabla_{\mathbf{u}} \mathbf{v}(\mathbf{u})
\bigl(\nabla_{\mathbf{u}} \mathbf{v}(\mathbf{u})\bigr)^{-1/2} \\
&=
\bigl(\nabla_{\mathbf{u}} \mathbf{v}(\mathbf{u})\bigr)^{1/2}
\left(\sum_{i=1}^d n_i \nabla_{\mathbf{v}} \mathbf{g}_i(\mathbf{v})\right)
\bigl(\nabla_{\mathbf{u}} \mathbf{v}(\mathbf{u})\bigr)^{1/2}.
\end{align}
Hence, $A(\mathbf{u};\mathbf{n})$ is similar to the matrix $\bigl(\nabla_{\mathbf{u}} \mathbf{v}(\mathbf{u})\bigr)^{1/2}
\left(\sum_{i=1}^d n_i \nabla_{\mathbf{v}} \mathbf{g}_i(\mathbf{v})\right)
\bigl(\nabla_{\mathbf{u}} \mathbf{v}(\mathbf{u})\bigr)^{1/2}$.

If each $\nabla_{\mathbf{v}} \mathbf{g}_i(\mathbf{v})$ is symmetric, then the matrix $\bigl(\nabla_{\mathbf{u}} \mathbf{v}(\mathbf{u})\bigr)^{1/2}
\left(\sum_{i=1}^d n_i \nabla_{\mathbf{v}} \mathbf{g}_i(\mathbf{v})\right)
\bigl(\nabla_{\mathbf{u}} \mathbf{v}(\mathbf{u})\bigr)^{1/2}$ is symmetric because it is equal to its transpose:
\begin{align}
&\Biggl[
\bigl(\nabla_{\mathbf{u}} \mathbf{v}(\mathbf{u})\bigr)^{1/2}
\left(\sum_{i=1}^d n_i \nabla_{\mathbf{v}} \mathbf{g}_i(\mathbf{v})\right)
\bigl(\nabla_{\mathbf{u}} \mathbf{v}(\mathbf{u})\bigr)^{1/2}
\Biggr]^\top \\
&=
\left(\bigl(\nabla_{\mathbf{u}} \mathbf{v}(\mathbf{u})\bigr)^{1/2}\right)^\top
\left(\sum_{i=1}^d n_i \nabla_{\mathbf{v}} \mathbf{g}_i(\mathbf{v})\right)^\top
\left(  \bigl(\nabla_{\mathbf{u}} \mathbf{v}(\mathbf{u})\bigr)^{1/2} \right)^\top \\
&=
\bigl(\nabla_{\mathbf{u}} \mathbf{v}(\mathbf{u})\bigr)^{1/2}
\left(\sum_{i=1}^d n_i \nabla_{\mathbf{v}} \mathbf{g}_i(\mathbf{v})\right)
\bigl(\nabla_{\mathbf{u}} \mathbf{v}(\mathbf{u})\bigr)^{1/2}.
\end{align}
Hence $A(\mathbf{u};\mathbf{n})$ is similar to a symmetric matrix, and therefore has only real eigenvalues and a complete eigenbasis.
\end{proof}
% \label{sec:M1}

\section{Proposed method: Hyperbolic neural closure for M1 radiation transfer}
\label{Sec:method}

In this section, we briefly review the M1 moment system derived from the radiative transfer equation. We then introduce the proposed closure model, which is derived from the parametric symmetric Jacobian and serves as a closure for the M1 radiation transfer system.

\subsection{Derivation of the M1 system}

We briefly derive the M1 moment system from the Boltzmann transport equation (cf.~\cite{murchikova2017analytic}).
Let $\mathcal{F}(p^\mu,x^\mu)$ denote the radiation distribution function.
In relativistic form, the Boltzmann equation can be written as
\begin{equation}
    \frac{dx^{\alpha}}{d\tau}
    \frac{\partial \mathcal{F}}{\partial x^{\alpha}}
    +
    \frac{dp^{i}}{d\tau}
    \frac{\partial \mathcal{F}}{\partial p^{i}}
    =
    (-p^{\alpha}u_{\alpha})
    \mathcal{S}(p^\mu,x^\mu,\mathcal{F}),
\end{equation}
where $p^\mu$ is the radiation four-momentum, $u^\mu$ is the fluid four-velocity, and $\mathcal{S}$ is the collision term describing interactions of radiation with matter.

In this work, we consider flat Cartesian geometry, neglect external forces,
and take the radiation particles to be massless. Hence
$dp^i/d\tau=0$ and $dx^\alpha/d\tau=p^\alpha$. Substituting into the Boltzmann equation gives
\[
p^\alpha \frac{\partial \mathcal{F}}{\partial x^\alpha}
=
(-p^\alpha u_\alpha)\mathcal{S}.
\]
Writing
$p^\alpha=(p^0,\mathbf{p})$, this becomes
\[
p^0\frac{\partial\mathcal{F}}{\partial t}
+
\mathbf{p}\cdot\nabla_{\mathbf{x}}\mathcal{F}
=
(-p^\alpha u_\alpha)\mathcal{S}.
\]
Using
$\mathbf{p}=p^0\boldsymbol{\omega}$ with
$\boldsymbol{\omega}\in\mathbb{S}^{d-1}$, we obtain
\begin{equation}
    p^0
    \left(
    \frac{\partial \mathcal{F}}{\partial t}
    +
    \boldsymbol{\omega}\cdot\nabla_{\mathbf{x}}\mathcal{F}
    \right)
    =
    (-p^\alpha u_\alpha)\mathcal{S}.
\end{equation}
After dividing by $p^0$, the transport equation becomes
\begin{equation}
    \frac{\partial \mathcal{F}}{\partial t}
    +
    \boldsymbol{\omega}\cdot\nabla_{\mathbf{x}}\mathcal{F}
    =
    \widetilde{\mathcal{S}},
\end{equation}
where
\[
\widetilde{\mathcal{S}}
:=
\frac{-p^\alpha u_\alpha}{p^0}\mathcal{S}.
\]
Taking angular moments over the unit sphere $\mathbb S^{d-1}$ yields the moment system.

The zeroth, first, and second angular moments are defined by
\begin{equation}
    E(\mathbf{x},t)
    := \int_{\mathbb{S}^{d-1}} \mathcal{F}\,d\boldsymbol{\omega},
    \qquad
    \mathbf{F}(\mathbf{x},t)
    := \int_{\mathbb{S}^{d-1}}
    \boldsymbol{\omega}\mathcal{F}\,d\boldsymbol{\omega},
    \qquad
    \mathbf{P}(\mathbf{x},t)
    := \int_{\mathbb{S}^{d-1}}
    \boldsymbol{\omega}\otimes\boldsymbol{\omega}\mathcal{F}\,
    d\boldsymbol{\omega}.
\end{equation}
Taking the zeroth angular moment gives
\begin{equation}
    \frac{\partial E}{\partial t}
    +
    \nabla_{\mathbf{x}}\cdot\mathbf{F}
    =
    S^{(0)},
    \qquad
    S^{(0)}
    :=
    \int_{\mathbb{S}^{d-1}}
    \widetilde{\mathcal{S}}\,d\boldsymbol{\omega}.
\end{equation}
Multiplying the transport equation by $\boldsymbol{\omega}$ and integrating over $\mathbb{S}^{d-1}$ gives
\begin{equation}
    \frac{\partial \mathbf{F}}{\partial t}
    +
    \nabla_{\mathbf{x}}\cdot\mathbf{P}
    =
    \mathbf{S}^{(1)},
    \qquad
    \mathbf{S}^{(1)}
    :=
    \int_{\mathbb{S}^{d-1}}
    \boldsymbol{\omega}\widetilde{\mathcal{S}}\,
    d\boldsymbol{\omega}.
\end{equation}
The two-moment system is not closed because the pressure tensor
$\mathbf{P}$ is a second angular moment and must be expressed in terms of
the lower moments $E$ and $\mathbf{F}$.

The M1 closure assumes that $\mathbf{P}$ depends locally on $E$ and
$\mathbf{F}$. Define the flux factor and flux direction by
\begin{equation}
    f := \frac{\|\mathbf{F}\|}{E},
    \qquad
    \mathbf{n}:=
    \begin{cases}
        \mathbf{F}/\|\mathbf{F}\|, & \|\mathbf{F}\|>0,\\
        \mathbf{0}, & \|\mathbf{F}\|=0.
    \end{cases}
\end{equation}
The physically admissible range is $0\le f\le 1$. The optically thick and
free-streaming limits are
\begin{equation}
    \mathbf{P}_{\mathrm{thick}}
    =
    \frac{E}{d}\mathbf{I},
    \qquad
    \mathbf{P}_{\mathrm{thin}}
    =
    E\,\mathbf{n}\otimes\mathbf{n}.
\end{equation}
The M1 pressure tensor is written as
\begin{equation}
    \mathbf{P}(E,\mathbf{F})
    =
    E\left[
    \frac{1-\chi(f)}{d-1}\mathbf{I}
    +
    \frac{d\chi(f)-1}{d-1}
    \mathbf{n}\otimes\mathbf{n}
    \right],
    \qquad d\ge 2,
\end{equation}
where $\chi(f)$ is the Eddington factor. Equivalently,
\begin{equation}
    \mathbf{P}(E,\mathbf{F})
    =
    \frac{d\chi(f)-1}{d-1}
    \mathbf{P}_{\mathrm{thin}}
    +
    \frac{d[1-\chi(f)]}{d-1}
    \mathbf{P}_{\mathrm{thick}}.
\end{equation}
This form gives $\mathbf{P}=E\mathbf{I}/d$ in the optically thick limit
and $\mathbf{P}=E\mathbf{n}\otimes\mathbf{n}$ in the free-streaming limit.

In three spatial dimensions, analytic M1 closures are commonly written
using $p=\chi(f)$ as the scalar Eddington factor. For example, the
Levermore closure~\cite{levermore1996moment} is
\begin{equation}
    \chi_{\mathrm{Lev}}(f)
    =
    \frac{3+4f^2}{5+2\sqrt{4-3f^2}}.
    \label{eq:Levermore}
\end{equation}
Other analytic closures, such as the Kershaw, Wilson, Minerbo, and Janka closures (cf.~\cite{murchikova2017analytic}), differ by the choice of the scalar function $\chi(f)$.

Neglecting source terms, the homogeneous M1 system can be written as a
system of conservation laws for
$
    \mathbf{u}:=
    (E,\mathbf{F}^{\top})^{\top}
    \in\mathbb{R}^{d+1}.
$
For each coordinate direction $x_i$, the flux is
\begin{equation}
    \mathbf{f}_i(\mathbf{u})
    =
    \begin{bmatrix}
        F_i\\
        P_{1i}(E,\mathbf{F})\\
        \vdots\\
        P_{di}(E,\mathbf{F})
    \end{bmatrix},
    \qquad i=1,\ldots,d.
\end{equation}
Therefore,
\begin{equation}
    \frac{\partial \mathbf{u}}{\partial t}
    +
    \sum_{i=1}^{d}
    \frac{\partial \mathbf{f}_i(\mathbf{u})}{\partial x_i}
    =
    \mathbf{0}.
\end{equation}
Equivalently,
\begin{equation}
    \frac{\partial}{\partial t}
    \begin{bmatrix}
        E\\
        \mathbf{F}
    \end{bmatrix}
    +
    \nabla_{\mathbf{x}}\cdot
    \begin{bmatrix}
        \mathbf{F}\\
        \mathbf{P}(E,\mathbf{F})
    \end{bmatrix}
    =
    \mathbf{0}.
\end{equation}
Thus, the M1 closure problem is the construction of the constitutive map
$(E,\mathbf{F})\mapsto\mathbf{P}(E,\mathbf{F})$.

% \subsection{Hyperbolic conservation laws with gradient-based input augmentation}
\subsection{Hyperbolic neural closure}

Figure~\ref{fig:sensitivity_3dbar_journal_augmented} shows that the off-diagonal component $P_{xy}$ is significantly more sensitive to gradient-based features such as $\nabla\!\cdot\!\mathbf{F}$, $(\nabla\times\mathbf{F})_z$, and $\|\nabla E\|^2$ than to the local state variables alone. 
The gradient-based features provide information beyond the local moment state $(E,\mathbf{F})$, which is not accounted for in classical analytic closures such as Eq.~\ref{eq:Levermore}.
Accordingly, we introduce a collection of local differential features $\mathcal{G}(\mathbf{u})$ and incorporate them only in the closure term of the flux.

Based on the Eq.~\eqref{eq:prelim_general_claw}, we write the augmented system as
\begin{equation}
\frac{\partial \mathbf{u}}{\partial t}
+
\sum_{i=1}^d \frac{\partial \widehat f_i\bigl(\mathbf{u},\mathcal{G}(\mathbf{u})\bigr)}{\partial x_i}
=0,
\qquad
x \in \Omega \subset \mathbb R^d,
\qquad
t \in (0,T),
\label{eq:prelim_general_claw_augmented}
\end{equation}
where $\widehat f_i\bigl(\mathbf{u},\mathcal{G}(\mathbf{u})\bigr)$ is chosen so that the components already determined by $\mathbf{u}$ remain unchanged.
For the $M_1$ moment system with $\mathbf{u}=(E,\mathbf{F})^\top$, the energy flux remains $F_i$, while the pressure tensor closure is augmented as $\widetilde{\mathbf{P}}=\widetilde{\mathbf{P}}\bigl(\mathbf{u},\mathcal{G}(\mathbf{u})\bigr)$. Hence the $i$th flux becomes
\begin{equation}
\widehat f_i\bigl(\mathbf{u},\mathcal{G}(\mathbf{u})\bigr)
=
\left[
F_i,\,
\widetilde P_{1i}\bigl(\mathbf{u},\mathcal{G}(\mathbf{u})\bigr),\,
\ldots,\,
\widetilde P_{di}\bigl(\mathbf{u},\mathcal{G}(\mathbf{u})\bigr)
\right]^\top,
\qquad 1 \le i \le d.
\end{equation}
Equivalently,
\begin{equation}
\frac{\partial}{\partial t}
\begin{bmatrix}
E\\
\mathbf{F}
\end{bmatrix}
+
\nabla_{\mathbf{x}}\cdot
\begin{bmatrix}
\mathbf{F}\\
\widetilde{\mathbf{P}}\bigl(\mathbf{u},\mathcal{G}(\mathbf{u})\bigr)
\end{bmatrix}
=0.
\label{eq:m1_block_augmented}
\end{equation}

To analyze hyperbolicity, we freeze the differential feature vector and regard it as a parameter $\boldsymbol{\gamma}\in\Gamma$. For each fixed $\boldsymbol{\gamma}$, we consider the parametric conservation law
\begin{equation}
\frac{\partial \mathbf{u}}{\partial t}
+
\sum_{i=1}^d \frac{\partial \widehat f_i(\mathbf{u},\boldsymbol{\gamma})}{\partial x_i}
=0,
\qquad
x \in \Omega \subset \mathbb R^d,
\qquad
t \in (0,T),
\label{eq:prelim_general_claw_frozen}
\end{equation}
with flux Jacobians $\widehat A_i(\mathbf{u},\boldsymbol{\gamma}):=\nabla_{\mathbf{u}}\widehat f_i(\mathbf{u},\boldsymbol{\gamma})$ and directional Jacobian
\begin{equation}
\widehat A(\mathbf{u},\boldsymbol{\gamma};\mathbf{n})
:=
\sum_{i=1}^d n_i \widehat A_i(\mathbf{u},\boldsymbol{\gamma}),
\qquad
\mathbf{n}\in\mathbb{R}^d.
\end{equation}
In the present setting, $\boldsymbol{\gamma}$ is identified locally with the frozen value of $\mathcal{G}(\mathbf{u})$.
Then \eqref{eq:prelim_general_claw_frozen} can be recast as
\begin{equation}
\nabla_{\mathbf{v}} \mathbf{u}(\mathbf{v})\,\frac{\partial \mathbf{v}}{\partial t}
+
\sum_{i=1}^d
\nabla_{\mathbf{v}} \widehat g_i(\mathbf{v},\boldsymbol{\gamma})\,\frac{\partial \mathbf{v}}{\partial x_i}
=0.
\label{eq:prelim_parametric_symmetric_form}
\end{equation}

\begin{theorem}[Frozen-parameter symmetrization]
\label{thm:prelim_parametric_symmetrization}
Let $\boldsymbol{\gamma}\in\Gamma$ be fixed and let $\eta(\cdot)$ be strictly convex. Then $\eta(\cdot)$ is an entropy for \eqref{eq:prelim_general_claw_frozen} if and only if $\nabla_{\mathbf{v}}\mathbf{u}(\mathbf{v})$ is symmetric positive definite and, for each $1\le i\le d$, the matrix $\nabla_{\mathbf{v}} \widehat g_i(\mathbf{v},\boldsymbol{\gamma})$ is symmetric. In that case, \eqref{eq:prelim_parametric_symmetric_form} is a symmetrized form of \eqref{eq:prelim_general_claw_frozen}, and for every $\mathbf{n}\in\mathbb{R}^d$, the matrix $\widehat A(\mathbf{u},\boldsymbol{\gamma};\mathbf{n})$ is similar to the symmetric matrix
\begin{equation}
\bigl(\nabla_{\mathbf{u}} \mathbf{v}(\mathbf{u})\bigr)^{1/2}
\left(
\sum_{i=1}^d n_i \nabla_{\mathbf{v}} \widehat g_i(\mathbf{v},\boldsymbol{\gamma})
\right)
\bigl(\nabla_{\mathbf{u}} \mathbf{v}(\mathbf{u})\bigr)^{1/2}.
\label{eq:prelim_parametric_similarity_matrix}
\end{equation}
Hence, for each frozen $\boldsymbol{\gamma}$, the existence of a strictly convex entropy implies hyperbolicity.
\end{theorem}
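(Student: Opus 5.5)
The plan is to reduce the statement to Theorem~\ref{thm:prelim_symmetrization}. Once $\boldsymbol{\gamma}\in\Gamma$ is frozen, \eqref{eq:prelim_general_claw_frozen} is an ordinary system of conservation laws with flux $f_i(\mathbf{u}):=\widehat f_i(\mathbf{u},\boldsymbol{\gamma})$. The parameter $\boldsymbol{\gamma}$ enters neither the entropy $\eta$ nor the change of variables $\mathbf{u}\mapsto\mathbf{v}=(\nabla_\mathbf{u}\eta(\mathbf{u}))^\top$. So every quantity in Theorem~\ref{thm:prelim_symmetrization} is well defined pointwise in $\boldsymbol{\gamma}$, and that theorem can be applied verbatim for each fixed $\boldsymbol{\gamma}$.

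\textbf{Step 1: set up the change of variables.} Since $\eta$ is strictly convex, I take its Hessian $\nabla_\mathbf{u}\mathbf{v}(\mathbf{u})=\mathcal{H}_\mathbf{u}\eta(\mathbf{u})$ to be symmetric positive definite on $\mathcal D$. This is the same strong-convexity assumption used in Theorem~\ref{thm:prelim_symmetrization}; I state it explicitly because strict convexity alone permits degenerate Hessians. Then $\mathbf{u}\mapsto\mathbf{v}$ is a local diffeomorphism, and it is injective on the convex set $\mathcal D$ because the gradient of a strictly convex function is strictly monotone. Hence $\nabla_\mathbf{v}\mathbf{u}=(\mathcal{H}_\mathbf{u}\eta)^{-1}$ is symmetric positive definite. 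Setting $\widehat g_i(\mathbf{v},\boldsymbol{\gamma}):=\widehat f_i(\mathbf{u}(\mathbf{v}),\boldsymbol{\gamma})$, the chain rule in $\mathbf{u}$ alone, with $\boldsymbol{\gamma}$ constant, gives $\nabla_\mathbf{v}\widehat g_i=\widehat A_i(\mathbf{u},\boldsymbol{\gamma})\,\nabla_\mathbf{v}\mathbf{u}$. Substituting $\partial_t\mathbf{u}=\nabla_\mathbf{v}\mathbf{u}\,\partial_t\mathbf{v}$ and $\partial_{x_i}\widehat f_i=\nabla_\mathbf{v}\widehat g_i\,\partial_{x_i}\mathbf{v}$ yields \eqref{eq:prelim_parametric_symmetric_form}.

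\textbf{Step 2: the equivalence.} I would use the standard characterization that $\eta$ is an entropy for the frozen system exactly when there are entropy fluxes $q_i(\cdot,\boldsymbol{\gamma})$ with $\nabla_\mathbf{u}q_i=\nabla_\mathbf{u}\eta\,\widehat A_i$. In $\mathbf{v}$-variables this compatibility condition is equivalent to the existence of potentials $\psi_i(\mathbf{v},\boldsymbol{\gamma}):=\mathbf{v}^\top\widehat g_i-q_i$ with $\nabla_\mathbf{v}\psi_i=\widehat g_i^\top$. On the convex (hence simply connected) image of $\mathcal D$, the Poincaré lemma says such a gradient representation exists if and only if $\nabla_\mathbf{v}\widehat g_i$ is symmetric. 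Combined with the positive definiteness from Step 1, this gives both directions of the ``if and only if''. This is exactly the argument behind Theorem~\ref{thm:prelim_symmetrization}, and in the write-up I would cite it rather than repeat it.

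\textbf{Step 3: similarity and hyperbolicity.} Summing the chain-rule identity with weights $n_i$ gives $\widehat A(\mathbf{u},\boldsymbol{\gamma};\mathbf{n})=\bigl(\sum_i n_i\nabla_\mathbf{v}\widehat g_i\bigr)\nabla_\mathbf{u}\mathbf{v}$. Conjugating by the symmetric positive definite square root $(\nabla_\mathbf{u}\mathbf{v})^{1/2}$ produces \eqref{eq:prelim_parametric_similarity_matrix}. That matrix is symmetric because $(\nabla_\mathbf{u}\mathbf{v})^{1/2}$ and each $\nabla_\mathbf{v}\widehat g_i$ are symmetric. A real symmetric matrix has real eigenvalues and a complete orthonormal eigenbasis, and similarity transfers both properties to $\widehat A$, so the frozen system is hyperbolic. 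Since $\boldsymbol{\gamma}$ was arbitrary, this holds for every $\boldsymbol{\gamma}\in\Gamma$.

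\textbf{Main obstacle.} The only delicate point is Step 2: justifying that freezing $\boldsymbol{\gamma}$ is legitimate. The identification $\boldsymbol{\gamma}=\mathcal{G}(\mathbf{u})$ must not be differentiated, since otherwise $\nabla_\mathbf{u}\widehat f_i$ would pick up terms from $\nabla_\mathbf{u}\mathcal{G}$. I would handle this by stating explicitly that all derivatives are partial derivatives at fixed $\boldsymbol{\gamma}$. Under that convention, the remaining hypotheses needed for the entropy-flux characterization are convexity of the domain and enough smoothness of $\widehat f_i(\cdot,\boldsymbol{\gamma})$.
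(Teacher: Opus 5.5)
Your Steps~1 and~3 are exactly the paper's proof. The chain rule gives $\widehat A_i=\nabla_{\mathbf v}\widehat g_i\,\nabla_{\mathbf u}\mathbf v$, summing gives $\widehat A(\mathbf u,\boldsymbol{\gamma};\mathbf n)=\bigl(\sum_i n_i\nabla_{\mathbf v}\widehat g_i\bigr)\nabla_{\mathbf u}\mathbf v$, and conjugation by $(\nabla_{\mathbf u}\mathbf v)^{1/2}$ produces the symmetric matrix. The paper re-runs the proof of Theorem~\ref{thm:prelim_symmetrization} with $\boldsymbol{\gamma}$ carried along, which amounts to your reduction.

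You genuinely go further in Step~2. The paper never proves the ``if and only if'' entropy characterization. Its proof, like that of Theorem~\ref{thm:prelim_symmetrization}, only shows that symmetric $\nabla_{\mathbf v}\widehat g_i$ together with an SPD Hessian gives a real, diagonalizable $\widehat A$. So you cannot discharge Step~2 by citing Theorem~\ref{thm:prelim_symmetrization}. Keep your potential argument with $\psi_i=\mathbf v^\top\widehat g_i-q_i$ in the write-up, or cite the classical Godunov--Mock/Friedrichs--Lax result. Your route proves the full stated equivalence. The paper's proves only the direction the closure construction actually uses: symmetric $S_x,S_y$ and $H\succ0$ imply real spectra.

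There is one slip in Step~2: $\mathbf v(\mathcal D)$ need not be convex, since the gradient map of a convex function does not in general preserve convexity. It is still simply connected, because $\mathbf v$ maps (open) $\mathcal D$ diffeomorphically onto it. More cleanly, apply the Poincaré lemma on $\mathcal D$ itself. There the form $\nabla_{\mathbf u}\eta\,\widehat A_i\,d\mathbf u$ is closed iff $H\widehat A_i$ is symmetric, which holds iff $\widehat A_iH^{-1}=\nabla_{\mathbf v}\widehat g_i$ is symmetric.

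Your insistence on a positive definite Hessian rather than bare strict convexity is also correct. The statement needs it for $\nabla_{\mathbf v}\mathbf u$ and $(\nabla_{\mathbf u}\mathbf v)^{1/2}$ to exist. It holds in the application because the $\frac{\alpha}{2}\|\mathbf u\|_2^2$ term gives $H\succeq\alpha I$.
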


\begin{proof}
Since $\mathbf{v}=(\nabla_{\mathbf{u}}\eta(\mathbf{u}))^\top$, we have
\begin{equation}
\nabla_{\mathbf{u}} \mathbf{v}(\mathbf{u})
=
\mathcal{H}_{\mathbf{u}}\eta(\mathbf{u}),
\qquad
\nabla_{\mathbf{v}} \mathbf{u}(\mathbf{v})
=
\bigl(\nabla_{\mathbf{u}} \mathbf{v}(\mathbf{u})\bigr)^{-1}.
\end{equation}
For each $i$, the chain rule gives
\begin{equation}
\nabla_{\mathbf{v}} \widehat g_i(\mathbf{v},\boldsymbol{\gamma})
=
\nabla_{\mathbf{u}} \widehat f_i(\mathbf{u},\boldsymbol{\gamma})\,\nabla_{\mathbf{v}} \mathbf{u}(\mathbf{v})
=
\widehat A_i(\mathbf{u},\boldsymbol{\gamma})\,\nabla_{\mathbf{v}} \mathbf{u}(\mathbf{v}),
\label{eq:prelim_parametric_nabla_v_gi}
\end{equation}
hence
\begin{equation}
\widehat A_i(\mathbf{u},\boldsymbol{\gamma})
=
\nabla_{\mathbf{v}} \widehat g_i(\mathbf{v},\boldsymbol{\gamma})\,\nabla_{\mathbf{u}} \mathbf{v}(\mathbf{u}).
\label{eq:prelim_parametric_Ai_factorization}
\end{equation}
Therefore,
\begin{equation}
\widehat A(\mathbf{u},\boldsymbol{\gamma};\mathbf{n})
=
\left(
\sum_{i=1}^d n_i \nabla_{\mathbf{v}} \widehat g_i(\mathbf{v},\boldsymbol{\gamma})
\right)
\nabla_{\mathbf{u}} \mathbf{v}(\mathbf{u}).
\label{eq:prelim_parametric_directional_factorization}
\end{equation}
Multiplying \eqref{eq:prelim_parametric_directional_factorization} on the left by $\bigl(\nabla_{\mathbf{u}} \mathbf{v}(\mathbf{u})\bigr)^{1/2}$ and on the right by $\bigl(\nabla_{\mathbf{u}} \mathbf{v}(\mathbf{u})\bigr)^{-1/2}$ yields \eqref{eq:prelim_parametric_similarity_matrix}. If each $\nabla_{\mathbf{v}} \widehat g_i(\mathbf{v},\boldsymbol{\gamma})$ is symmetric, then the matrix in \eqref{eq:prelim_parametric_similarity_matrix} is symmetric. Hence $\widehat A(\mathbf{u},\boldsymbol{\gamma};\mathbf{n})$ is similar to a symmetric matrix and therefore has only real eigenvalues and a complete eigenbasis. 
% This implies that the learned closure is constructed through a Jacobian representation that satisfies the hyperbolicity condition at each local feature state.
% This implies that the learned closure is constructed through a Jacobian representation constrained by the symmetrization structure.
% This proves hyperbolicity for each frozen $\boldsymbol{\gamma}$.
This proves hyperbolicity for the parametric system corresponding to each frozen feature vector $\gamma$.
\end{proof}

\begin{figure}[htp!]
    \centering
    \begin{subfigure}[b]{0.8\linewidth}
        \centering
        \includegraphics[width=\linewidth]{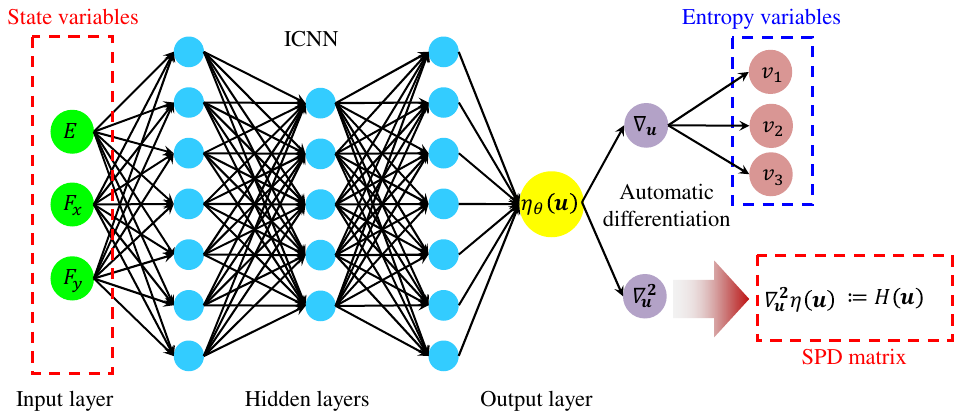}
        \subcaption{Entropy neural network}
    \end{subfigure}  
        \begin{subfigure}[b]{0.8\linewidth}
        \centering
        \includegraphics[width=\linewidth]{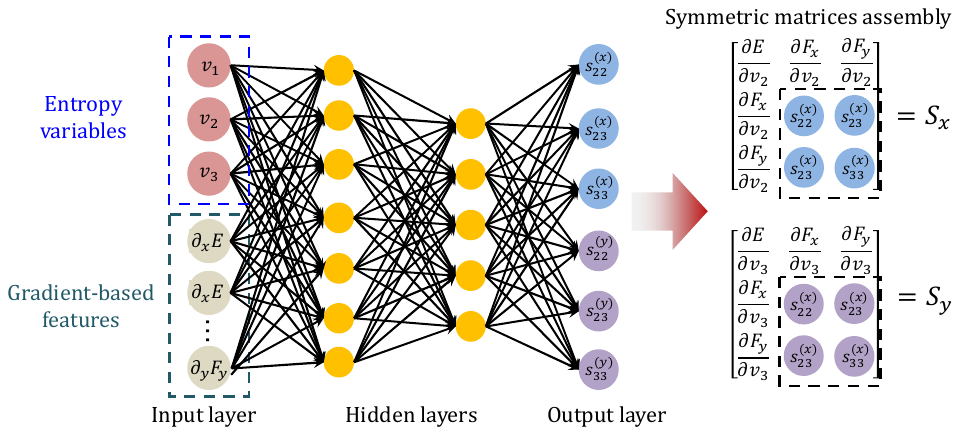}
        \subcaption{Symmetric neural network}
    \end{subfigure}  
\caption{Components of the proposed hyperbolic neural closure for the two-dimensional M1 system with $\mathbf{u}=(E,F_x,F_y)^\top\in\mathbb{R}^3$. 
(a) Input convex neural network (ICNN) that maps state variables to entropy, whose gradient and Hessian provide entropy variables and a symmetric positive definite (SPD) matrix. 
(b) Symmetric neural network that takes entropy variables and gradient-based features as inputs and constructs symmetric flux Jacobians.}
\label{Fig:neural_networks}
\end{figure}

Theorem~\ref{thm:prelim_parametric_symmetrization} establishes hyperbolicity for the parametric system
associated with a fixed feature vector $\gamma$.
When the feature vector is instantiated as local differential features $\gamma=G(\mathbf{u})$, the resulting closure becomes a gradient-dependent model.
A rigorous well-posedness analysis of the corresponding full closed \ac{PDE} system is beyond the scope of the present work.

% Based on Theorem~\ref{thm:prelim_parametric_symmetrization}, we introduce a neural closure framework that preserves hyperbolicity for the M1 radiative transfer system. 
Based on Theorem~\ref{thm:prelim_parametric_symmetrization}, we introduce a neural closure framework that constructs locally symmetrizable closure Jacobians for the M1 radiative transfer system.
Let $\mathbf{u}=[E,F_x,F_y]^\top$ and $\mathcal{G}(\mathbf{u})=[\partial_x E,\partial_y E,\partial_x F_x,\partial_y F_x,\partial_x F_y,\partial_y F_y]^\top \in \mathbb{R}^6$. 
We parameterize the entropy network as an input convex neural network (ICNN) represented in Fig.~\ref{Fig:neural_networks}a:
\begin{align}
\mathbf{z}^{(0)} &= \mathbf{u}, \\
\mathbf{z}^{(l+1)} 
&= \sigma\!\big(
W^{(l)} \mathbf{z}^{(l)} 
+ U^{(l)} \mathbf{u} 
+ \mathbf{b}^{(l)}
\big),
\qquad l = 0,\dots,L-1,
\\
\eta_\theta(\mathbf{u})
&= \mathbf{w}_\eta^\top \mathbf{z}^{(L)}
+ \frac{\alpha}{2}\|\mathbf{u}\|_2^2
+ b_{\eta,0},
\end{align}
subject to
\begin{equation}
W^{(l)} \ge 0, 
\quad
\mathbf{w}_\eta \ge 0,
\quad
\alpha > 0.
\end{equation}
where $\sigma(z)=\log(1+e^z)$ is the softplus activation. We define the entropy variables and Hessian as
$\mathbf{v}(\mathbf{u})=\nabla_{\mathbf{u}}\eta_\theta(\mathbf{u})$,
$\mathbf{H}(\mathbf{u})=\nabla_{\mathbf{u}}^2 \eta_\theta(\mathbf{u})$,
and
$\nabla_{\mathbf{v}}\mathbf{u}(\mathbf{v})=\mathbf{H}(\mathbf{u})^{-1}$.
Under the ICNN architecture, $\mathbf{H}(\mathbf{u})$ is SPD due to the convex network structure, and the additional quadratic term guarantees strict positive definiteness. Therefore, with $\alpha>0$, we obtain $\mathbf{H}(\mathbf{u})\succ0$, which implies that the mapping $\mathbf{u}\mapsto\mathbf{v}$ is one-to-one. In the numerical experiments of this work, we use a single hidden layer ICNN for simplicity.

To parameterize the symmetrizable Jacobian representation introduced in Theorem~\ref{thm:prelim_parametric_symmetrization}, we introduce a second neural network that predicts the free coefficients in the symmetric matrices in Fig.~\ref{Fig:neural_networks}b.
The structural network takes the augmented input $\mathbf{z}(\mathbf{u},\mathcal{G}(\mathbf{u}))=[\mathbf{v}(\mathbf{u}),\mathcal{G}(\mathbf{u})]^\top \in \mathbb{R}^9$ and outputs six coefficients $\mathcal{N}_S(\mathbf{z}(\mathbf{u},\mathcal{G}(\mathbf{u})))=[s_{22}^{(x)},s_{23}^{(x)},s_{33}^{(x)},s_{22}^{(y)},s_{23}^{(y)},s_{33}^{(y)}]^\top$. 
To account for the additive constant introduced by the line integration, we introduce an anchor network 
$\mathcal{N}_A(\mathbf{z}(\mathbf{0},\mathbf{0}))=[P_{xx}^0,P_{yy}^0,P_{xy}^0]^\top$, 
which predicts the reference pressure values used as integration constants in the closure reconstruction.

The symmetric matrices $S_x(\mathbf{u},\mathcal{G}(\mathbf{u}))$ and $S_y(\mathbf{u},\mathcal{G}(\mathbf{u}))$ are defined by
\begin{equation}
S_x
=
\begin{bmatrix}
\frac{\partial E}{\partial v_2} & \frac{\partial F_x}{\partial v_2} & \frac{\partial F_y}{\partial v_2}\\
\frac{\partial F_x}{\partial v_2} & s_{22}^{(x)} & s_{23}^{(x)}\\
\frac{\partial F_y}{\partial v_2} & s_{23}^{(x)} & s_{33}^{(x)}
\end{bmatrix},
\qquad
S_y
=
\begin{bmatrix}
\frac{\partial E}{\partial v_3} & \frac{\partial F_x}{\partial v_3} & \frac{\partial F_y}{\partial v_3}\\
\frac{\partial F_x}{\partial v_3} & s_{22}^{(y)} & s_{23}^{(y)}\\
\frac{\partial F_y}{\partial v_3} & s_{23}^{(y)} & s_{33}^{(y)}
\end{bmatrix},
\label{eq:m1_Sx_Sy_explicit}
\end{equation}
and the flux Jacobians are parameterized as $J_x(\mathbf{u},\mathcal{G}(\mathbf{u})) = S_x(\mathbf{u},\mathcal{G}(\mathbf{u}))\,H(\mathbf{u})$ and $J_y(\mathbf{u},\mathcal{G}(\mathbf{u})) = S_y(\mathbf{u},\mathcal{G}(\mathbf{u}))\,H(\mathbf{u})$.
% By construction, the first rows satisfy $(J_x)_{1,:}=[0,1,0]$ and $(J_y)_{1,:}=[0,0,1]$, so the energy flux remains $(F_x,F_y)$.
Since the M1 system has the fixed flux structure $f_E=(F_x,F_y)$, the first row of each Jacobian is constrained to preserve this exact relation. As a result, $(J_x)_{1,:}=[0,1,0]$ and $(J_y)_{1,:}=[0,0,1]$, so only the closure part of the flux is learned while the energy flux remains unchanged.
Utilizing the relation $\nabla_{\mathbf{v}}\mathbf{u}(\mathbf{v})=H(\mathbf{u})^{-1}$, the terms $\frac{\partial E}{\partial v_j}$, $\frac{\partial F_x}{\partial v_j}$, and $\frac{\partial F_y}{\partial v_j}$ are computed explicitly as the entries of the inverse Hessian $\bigl(\nabla_{\mathbf{u}}^2\eta_\theta(\mathbf{u})\bigr)^{-1}$.

Using the straight-line path $\mathbf{r}(t)=t\mathbf{u}$, $t\in[0,1]$,
% we reconstruct the pressure components by line integration:
we reconstruct the pressure components along a prescribed straight-line integration path:
\begin{equation}
\begin{aligned}
P_{xx}(\mathbf{u},\mathcal{G}(\mathbf{u}))
&=
P_{xx}^0
+
\int_0^1 \nabla_{\mathbf{u}} P_{xx}(t\mathbf{u},\mathcal{G}(\mathbf{u}))\cdot \mathbf{u}\,dt, \\
P_{xy}^{(x)}(\mathbf{u},\mathcal{G}(\mathbf{u}))
&=
P_{xy}^0
+
\int_0^1 \nabla_{\mathbf{u}} P_{xy}^{(x)}(t\mathbf{u},\mathcal{G}(\mathbf{u}))\cdot \mathbf{u}\,dt, \\
P_{xy}^{(y)}(\mathbf{u},\mathcal{G}(\mathbf{u}))
&=
P_{xy}^0
+
\int_0^1 \nabla_{\mathbf{u}} P_{xy}^{(y)}(t\mathbf{u},\mathcal{G}(\mathbf{u}))\cdot \mathbf{u}\,dt, \\
P_{yy}(\mathbf{u},\mathcal{G}(\mathbf{u}))
&=
P_{yy}^0
+
\int_0^1 \nabla_{\mathbf{u}} P_{yy}(t\mathbf{u},\mathcal{G}(\mathbf{u}))\cdot \mathbf{u}\,dt,
\end{aligned}
\label{eq:m1_pressure_line_integrals}
\end{equation}
and define the final off-diagonal closure by $P_{xy}(\mathbf{u},\mathcal{G}(\mathbf{u}))=\frac12\bigl(P_{xy}^{(x)}(\mathbf{u},\mathcal{G}(\mathbf{u}))+P_{xy}^{(y)}(\mathbf{u},\mathcal{G}(\mathbf{u}))\bigr)$.
Since the learned Jacobian field is parameterized directly through neural networks,
exact integrability conditions are not explicitly enforced.
In the present work, the path integration serves as a practical reconstruction
procedure for recovering closure quantities, and numerical experiments show that
the resulting closure remains stable and accurate in DG simulations.

In implementation, the line integrals are evaluated by midpoint quadrature with $t_k = \frac{k-\tfrac12}{N_q}$ for $k=1,\dots,N_q$, so that $P_{xx}(\mathbf{u},\mathcal{G}(\mathbf{u})) \approx P_{xx}^0 + \frac{1}{N_q}\sum_{k=1}^{N_q} \nabla_{\mathbf{u}}P_{xx}(t_k\mathbf{u},\mathcal{G}(\mathbf{u}))\cdot \mathbf{u}$, and analogously for $P_{xy}^{(x)}$, $P_{xy}^{(y)}$, and $P_{yy}$.

For each frozen value $\boldsymbol{\gamma}=\mathcal{G}(\mathbf{u})$, the matrices $S_x(\mathbf{u},\boldsymbol{\gamma})$ and $S_y(\mathbf{u},\boldsymbol{\gamma})$ are symmetric by construction, and therefore $\nabla_{\mathbf{v}}\widehat g_x(\mathbf{v},\boldsymbol{\gamma})=S_x(\mathbf{u},\boldsymbol{\gamma})$ and $\nabla_{\mathbf{v}}\widehat g_y(\mathbf{v},\boldsymbol{\gamma})=S_y(\mathbf{u},\boldsymbol{\gamma})$ are symmetric. Since $\eta_\theta(\mathbf{u})$ is chosen to be strictly convex, Theorem~\ref{thm:prelim_parametric_symmetrization} implies that the resulting frozen-parameter system is symmetrizable hyperbolic. 
Hence the pressure closures $P_{xx}(\mathbf{u},\mathcal{G}(\mathbf{u}))$, 
$P_{yy}(\mathbf{u},\mathcal{G}(\mathbf{u}))$, and 
$P_{xy}(\mathbf{u},\mathcal{G}(\mathbf{u}))$ are reconstructed from a Jacobian representation that satisfies the hyperbolicity condition in the frozen-feature setting.

To regularize the wave-speed bound during training, for each unit direction $\mathbf{n}\in\mathcal{D}$ we construct the directional symmetric matrix as $\mathbf{S}{\mathbf{n}} = n_x \mathbf{S}x + n_y \mathbf{S}y$. Since $\mathbf{H}\succ 0$, we use its Cholesky factorization $\mathbf{H}=\mathbf{L}\mathbf{L}^{\top}$ and define the symmetrized directional operator $\mathbf{B}{\mathbf{n}}=\mathbf{L}^{\top}\mathbf{S}{\mathbf{n}}\mathbf{L}$. 
The directional wave speed is computed from the symmetrized operator as
$a_{\mathbf{n}} = \max_i \left| \lambda_i \left( \mathbf{B}_{\mathbf{n}} \right) \right|$,
and violations of the target speed limit $c$ are measured using a smooth hinge function
$\phi_{\tau}(a_{\mathbf{n}}; c) = \tau \, \mathrm{softplus} \left( \frac{a_{\mathbf{n}} - c}{\tau} \right)$,
where $\tau > 0$ controls the smoothness.
The wave-speed penalty is averaged over the direction set and added to the training objective as
$
\mathcal{L}{\mathrm{ws}}
=\frac{1}{|\mathcal{D}|}\sum{\mathbf{n}\in\mathcal{D}}
\phi_{\tau}(a_{\mathbf{n}};c)^2.
$
To ensure scale consistency, the input variables are nondimensionalized as 
$F_i/(cE)$, $\partial_i E/(\sigma_t E)$, and $\partial_i F_j/(\sigma_t cE)$, for $i,j\in\{x,y\}$.

\begin{algorithm}[htp!]
\caption{Training procedure for the hyperbolic neural closure}
\label{alg:hpnn_training}

\KwRequire{Training data $\mathcal{S}=\{(\mathbf{u}^m,\mathbf{G}^m,\mathbf{P}^m)\}_{m=1}^{N_s}$.}
\KwEnsure{Hyperbolic neural closure $\widehat{\mathbf{P}}(\mathbf{u},\mathbf{G})$.}

\textcolor{gray}{$\triangleright$ Data preparation and initialization}\; 
Initialize $\Theta$ for $\eta_\theta$, $\mathcal{N}_S$, and $\mathcal{N}_A$\;
Set $N_q$, direction set $\mathcal{D}$, wave speed bound $c$\;

\While{stopping criterion is not satisfied}{
    Sample a mini batch $\mathcal{B}\subset\mathcal{S}$\;

    Compute $\mathbf{v}=\nabla_{\mathbf{u}}\eta_\theta(\mathbf{u})$ and
    $\mathbf{H}=\nabla_{\mathbf{u}}^2\eta_\theta(\mathbf{u})\succ0$
    for all $(\mathbf{u},\mathbf{G},\mathbf{P})\in\mathcal{B}$\;

    Form $\mathbf{z}=[\mathbf{v},\mathbf{G}]^\top$ and predict
    $\mathbf{S}_x(\mathbf{u},\mathbf{G})$ and $\mathbf{S}_y(\mathbf{u},\mathbf{G})$
    over $\mathcal{B}$ \graycomment{By Eq.~\eqref{eq:m1_Sx_Sy_explicit}}

    Form $\mathbf{J}_x=\mathbf{S}_x\mathbf{H}$ and
    $\mathbf{J}_y=\mathbf{S}_y\mathbf{H}$ over $\mathcal{B}$

    Extract $\nabla_{\mathbf{u}}P_{xx}$,
    $\nabla_{\mathbf{u}}P_{xy}^{(x)}$,
    $\nabla_{\mathbf{u}}P_{xy}^{(y)}$, and
    $\nabla_{\mathbf{u}}P_{yy}$\;

    Reconstruct $\widehat{P}_{ij}$ by midpoint integration with $N_q$ points
    \graycomment{By Eq.~\eqref{eq:m1_pressure_line_integrals}}

    Set $\widehat{P}_{xy}=\frac{1}{2}
    (\widehat{P}_{xy}^{(x)}+\widehat{P}_{xy}^{(y)})$\;

    Compute $\mathcal{L}_{\rm data}$ over $\mathcal{B}$\;

    Compute $\mathcal{L}_{\rm ws}$ over $\mathcal{B}$ and $\mathcal{D}$\;

    Update $\Theta$ by minimizing
    $\mathcal{L}=\mathcal{L}_{\rm data}+\rho_{\rm ws}\mathcal{L}_{\rm ws}$\;
}

\Return $\widehat{\mathbf{P}}(\mathbf{u},\mathbf{G})$\;
\end{algorithm}

\subsection{Modal discontinuous Galerkin discretization}

Consider the two-dimensional conservation law
\begin{equation}
\frac{\partial \mathbf{u}}{\partial t} 
+ \frac{\partial \mathbf{f}^{(x)}(\mathbf{u})}{\partial x}
+ \frac{\partial \mathbf{f}^{(y)}(\mathbf{u})}{\partial y}
= \mathbf{s}(\mathbf{u},\mathbf{x}),
\qquad
\mathbf{u}:\Omega\times(0,T)\to\mathbb{R}^m.
\label{eq:dg_general_system}
\end{equation}
Let $\Omega=\bigcup_{K\in\mathcal{T}_h}K$ be a rectangular mesh. For each element $K$, with reference map $x=x_K+\frac{h_x^K}{2}\xi$, $y=y_K+\frac{h_y^K}{2}\eta$, and $(\xi,\eta)\in[-1,1]^2$, we define $V_h^K=\mathrm{span}\{\psi_{ij}(\xi,\eta)=P_i(\xi)P_j(\eta):0\le i\le p_x,\ 0\le j\le p_y\}$ and expand $\mathbf{u}_h|_K(\mathbf{x},t)=\sum_{i=0}^{p_x}\sum_{j=0}^{p_y}\mathbf{u}_{ij}^K(t)\,\psi_{ij}(\xi,\eta)$.
For $\mathbf{v}_h\in(V_h^K)^m$, the local weak form is
\begin{align}
\int_K \frac{\partial \mathbf{u}_h}{\partial t}\cdot\mathbf{v}_h\,d\mathbf{x}
-\int_K \mathbf{f}^{(x)}(\mathbf{u}_h)\cdot\partial_x \mathbf{v}_h\,d\mathbf{x}
-\int_K \mathbf{f}^{(y)}(\mathbf{u}_h)\cdot\partial_y \mathbf{v}_h\,d\mathbf{x}
+\int_{\partial K}\widehat{\mathbf{f}}(\mathbf{u}_h^-,\mathbf{u}_h^+;\mathbf{n})\cdot\mathbf{v}_h^-\,ds
\nonumber\\
=\int_K \mathbf{s}(\mathbf{u}_h,\mathbf{x})\cdot\mathbf{v}_h\,d\mathbf{x}.
\label{eq:dg_weak_form}
\end{align}

The one-dimensional Legendre matrices are $M_{ij}^x=\int_{-1}^1 P_i(\xi)P_j(\xi)\,d\xi=\frac{2}{2i+1}\delta_{ij}$ and $S_{ij}^x=\int_{-1}^1 P_i'(\xi)P_j(\xi)\,d\xi$, with analogous definitions for $\mathbf{M}^y$ and $\mathbf{S}^y$. Hence
\begin{equation}
\mathbf{M}_K=\frac{h_x^Kh_y^K}{4}\,(\mathbf{M}^x\otimes\mathbf{M}^y),
\qquad
\mathbf{G}_{x,K}=\frac{h_y^K}{2}\,(\mathbf{S}^x\otimes\mathbf{M}^y),
\qquad
\mathbf{G}_{y,K}=\frac{h_x^K}{2}\,(\mathbf{M}^x\otimes\mathbf{S}^y).
\label{eq:dg_element_matrices}
\end{equation}

By substituting the modal expansion into the weak form and evaluating the resulting basis integrals using the element matrices in \eqref{eq:dg_element_matrices}, we obtain the semidiscrete system
\begin{equation}
\mathbf{M}_K \frac{d\mathbf{U}_K}{dt}
=
\mathbf{R}_K^{\mathrm{vol}}(\mathbf{U}_K)
+\mathbf{R}_K^{\mathrm{src}}(\mathbf{U}_K)
+\mathbf{R}_K^{\mathrm{face}}(\mathbf{U}_{K^-},\mathbf{U}_{K^+}),
\label{eq:dg_element_semidiscrete}
\end{equation}
where $\mathbf{U}_K$ contains all modal coefficients $\mathbf{u}_{ij}^K$, and
$\mathbf{R}_K^{\mathrm{vol}}$, $\mathbf{R}_K^{\mathrm{src}}$, and $\mathbf{R}_K^{\mathrm{face}}$
denote the volume, source, and interface residual contributions, respectively.

To assess the numerical flux on element interfaces, we first evaluate the interior modal representation on the corresponding element faces.
For a vertical face $\xi=\pm1$, the trace along the polynomial basis is
$\bigl(\mathbf{R}_{x,\pm}\mathbf{U}_K\bigr)_j=\sum_{i=0}^{p_x}\mathbf{u}_{ij}^K P_i(\pm1)$, and for a horizontal face $\eta=\pm1$, 
$\bigl(\mathbf{R}_{y,\pm}\mathbf{U}_K\bigr)_i=\sum_{j=0}^{p_y}\mathbf{u}_{ij}^K P_j(\pm1)$.

We now incorporate the resulting interface flux contribution back into the element residual. We define the associated lift operators as
$\mathbf{L}_{x,\pm}=\mathbf{R}_{x,\pm}^{\top}\,\frac{h_y^K}{2}\mathbf{M}^y$ and 
$\mathbf{L}_{y,\pm}=\mathbf{R}_{y,\pm}^{\top}\,\frac{h_x^K}{2}\mathbf{M}^x$.
Using the numerical flux evaluated from the traces of neighboring elements, the interface residual contribution for an interior face $e$ shared by $K^-$ and $K^+$ is given by
\begin{equation}
\mathbf{R}_{K^-}^{e,\mathrm{face}}
=
\mathbf{L}_{K^-}^{e}\,
\widehat{\mathbf{F}}^e(\mathbf{R}_{K^-}^{e}\mathbf{U}_{K^-},\mathbf{R}_{K^+}^{e}\mathbf{U}_{K^+};\mathbf{n}_e),
\label{eq:dg_face_trace_lift}
\end{equation}
with the analogous formula on $K^+$.
We use a local Lax--Friedrichs flux
\begin{equation}
\widehat{\mathbf{f}}(\mathbf{u}^-,\mathbf{u}^+;\mathbf{n})
=
\frac12\Bigl(\mathbf{f}_{\mathbf{n}}(\mathbf{u}^-)+\mathbf{f}_{\mathbf{n}}(\mathbf{u}^+)\Bigr)
-\frac{a_e}{2}\,(\mathbf{u}^+-\mathbf{u}^-),
\qquad
\mathbf{f}_{\mathbf{n}}(\mathbf{u})=n_x\mathbf{f}^{(x)}(\mathbf{u})+n_y\mathbf{f}^{(y)}(\mathbf{u}).
\label{eq:dg_rusanov_flux}
\end{equation}
Substitution of \eqref{eq:dg_face_trace_lift} into \eqref{eq:dg_element_semidiscrete} yields
\begin{align}
\mathbf{M}_K\frac{d\mathbf{U}_K}{dt}
=
\mathbf{R}_K^{\mathrm{vol}}(\mathbf{U}_K)
+\mathbf{R}_K^{\mathrm{src}}(\mathbf{U}_K)
-\sum_{e\subset\partial K}\mathbf{L}_K^e
\Bigl[
\tfrac12\bigl(\mathbf{f}_{\mathbf{n}_e}(\mathbf{U}_{K^-}^e)+\mathbf{f}_{\mathbf{n}_e}(\mathbf{U}_{K^+}^e)\bigr)
-\tfrac{a_e}{2}\bigl(\mathbf{U}_{K^+}^e-\mathbf{U}_{K^-}^e\bigr)
\Bigr].
\label{eq:dg_semidiscrete_residual}
\end{align}

By grouping the modal coefficients associated with each physical variable separately, we write
$\mathbf{E}_K$, $\mathbf{F}_{x,K}$, and $\mathbf{F}_{y,K}$ for the element coefficient vectors of $E$, $F_x$, and $F_y$, respectively. Here, $\mathbf{M}_K$ denotes the element mass matrix, $\mathbf{G}_{x,K}$ and $\mathbf{G}_{y,K}$ denote the derivative operators in the $x$- and $y$-directions, and $\mathbf{r}_{K}^{\mathrm{face}}$ represents the interface flux residual contribution.
For the M1 system, with $\mathbf{u}=(E,F_x,F_y)^\top$, the HN closure predicts 
\[
\mathbf{P}_{\theta,K}
=
\mathcal{P}_{\theta}(\bar{E}_K,\bar{\mathbf{F}}_K,\mathbf{G}_K)
=
\begin{bmatrix}
P_{xx,K} & P_{xy,K}\\
P_{xy,K} & P_{yy,K}
\end{bmatrix},
\]
where 
\[
\bar{\mathbf{F}}_K
=
\bigl((F_x)_{00}^K,(F_y)_{00}^K\bigr)^\top,
\]
with $(F_x)_{00}^K$ and $(F_y)_{00}^K$ corresponding to the element-averaged flux components, and defines
\[
\mathbf{D}_{\theta,K}
=
\bar{E}_K^{-1}\mathbf{P}_{\theta,K}.
\]
The elementwise specialization used in the solver is
\begin{equation}
\sigma_{a,K}\mathbf{M}_K\mathbf{E}_K
-\mathbf{G}_{x,K}\mathbf{F}_{x,K}
-\mathbf{G}_{y,K}\mathbf{F}_{y,K}
=
\mathbf{M}_K\mathbf{q}_K-\mathbf{r}_{E,K}^{\mathrm{face}},
\label{eq:dg_m1_phi}
\end{equation}
\begin{equation}
\sigma_{t,K}\mathbf{M}_K\mathbf{F}_{x,K}
-\bigl((D_{\theta,K})_{11}\mathbf{G}_{x,K}
+(D_{\theta,K})_{12}\mathbf{G}_{y,K}\bigr)\mathbf{E}_K
=
-\mathbf{r}_{F_x,K}^{\mathrm{face}},
\label{eq:dg_m1_jx}
\end{equation}
\begin{equation}
\sigma_{t,K}\mathbf{M}_K\mathbf{F}_{y,K}
-\bigl((D_{\theta,K})_{21}\mathbf{G}_{x,K}
+(D_{\theta,K})_{22}\mathbf{G}_{y,K}\bigr)\mathbf{E}_K
=
-\mathbf{r}_{F_y,K}^{\mathrm{face}},
\label{eq:dg_m1_jy}
\end{equation}
If the closure is also used in the interface dissipation, the face speed is chosen using a wave-speed based on the closure tensor,
\begin{equation}
a_e
=
\max\!\left\{
\sqrt{\mathbf{n}_e^\top \mathbf{D}_{\theta,K^-}\mathbf{n}_e},
\sqrt{\mathbf{n}_e^\top \mathbf{D}_{\theta,K^+}\mathbf{n}_e}
\right\}.
\end{equation}

\section{Numerical experiments}
\label{Sec:numerical_experiments}

In this section, we numerically validate the proposed HN closure and compare its performance with the classical Levermore closure through three tests:
(i) closure accuracy,
(ii) eigenvalue violation test, and
(iii) DG simulations for radiative transfer problems.
The HN closure was trained using two \texttt{NVIDIA GeForce RTX 4090} GPUs. 
For the DG simulations, the DG solver was executed on an \texttt{AMD Ryzen Threadripper PRO 5955WX 16-Core} CPU. 
During the DG simulation, the trained HN closure was called whenever closure evaluation was required by the moment system. 
For training and quantitative comparison, high-fidelity closure data and reference solutions were generated using a Monte Carlo solver~\cite{krotz2024hybrid}.

\subsection{Experiment 1: Lattice problem}
\label{ex1}

We first test the proposed closure on a two-dimensional steady-state lattice problem. This example serves as a benchmark for evaluating both closure accuracy and solution quality.
Let $\mathbf{F}=(F_x,F_y)^\top$ and
\[
\mathbf{P}=
\begin{pmatrix}
P_{xx} & P_{xy}\\
P_{yx} & P_{yy}
\end{pmatrix},
\qquad
\sigma_t=\sigma_a+\sigma_s.
\]
The two-dimensional steady-state radiation moment system is
\begin{equation}
\sigma_a E+\nabla_{\mathbf{x}}\cdot \mathbf{F}=Q,
\label{eq:exp1_energy}
\end{equation}
\begin{equation}
\sigma_t \mathbf{F}+\nabla_{\mathbf{x}}\cdot \mathbf{P}=\mathbf{0},
\label{eq:exp1_flux}
\end{equation}
that is,
\[
\sigma_a E+\frac{\partial F_x}{\partial x}+\frac{\partial F_y}{\partial y}=Q,
\]
\[
\sigma_t F_x+\frac{\partial P_{xx}}{\partial x}+\frac{\partial P_{xy}}{\partial y}=0,
\qquad
\sigma_t F_y+\frac{\partial P_{yx}}{\partial x}+\frac{\partial P_{yy}}{\partial y}=0.
\]
Equivalently, with $\mathbf{u}=(E,F_x,F_y)^\top$, the system may be written as
\begin{equation}
\begin{pmatrix}
\sigma_a E\\
\sigma_t F_x\\
\sigma_t F_y
\end{pmatrix}
+
\frac{\partial}{\partial x}
\begin{pmatrix}
F_x\\
P_{xx}\\
P_{yx}
\end{pmatrix}
+
\frac{\partial}{\partial y}
\begin{pmatrix}
F_y\\
P_{xy}\\
P_{yy}
\end{pmatrix}
=
\begin{pmatrix}
Q\\
0\\
0
\end{pmatrix}.
\label{eq:exp1_conservative}
\end{equation}

The computational setup is shown in Fig.~\ref{fig:configuration_ex1}. 
The domain is a \(7\times7\) square partitioned into unit cells. 
The blue cells represent obstacle regions with strong absorption, the white cells correspond to background regions where scattering is dominant, and the red central cell acts as a localized source. 
This heterogeneous checkerboard configuration produces complex transport behavior, including anisotropic propagation and shadowing effects, providing a test for both closure accuracy and overall solution quality.

\begin{figure}[h!]
    \centering
    \begin{subfigure}[b]{0.8\linewidth}
        \centering
        \includegraphics[width=\linewidth]{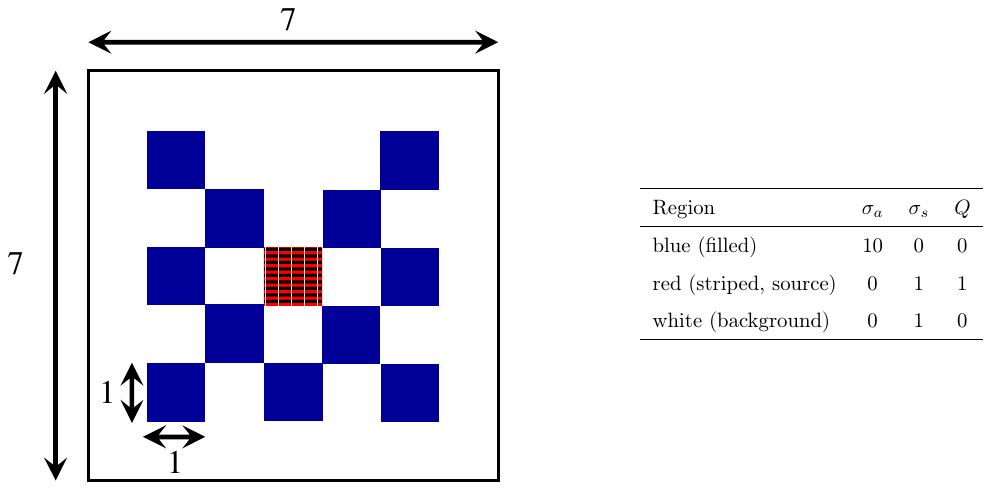}
    \end{subfigure} 
    \caption{Lattice benchmark configuration on a \(7\times7\) square domain with \((x,y)\in[0,7]\times[0,7]\). The blue filled cells are the obstacle (Obs) regions, with \((\sigma_a,\sigma_s,Q)=(10,0,0)\). The white cells are the background (Bg) regions, with \((\sigma_a,\sigma_s,Q)=(0,1,0)\). The red striped central cell is the source region, with \((\sigma_a,\sigma_s,Q)=(0,1,1)\). Each highlighted block has size \(1\times1\).}
    \label{fig:configuration_ex1}
\end{figure}

\begin{table}[htbp!]
\footnotesize
\renewcommand{\arraystretch}{1.08}
\setlength{\tabcolsep}{4pt}
\centering
\caption{Neural network specifications of the HN closure for the two-dimensional steady-state lattice problem (Fig.~\ref{fig:configuration_ex1}).}
\label{tab:hpnn_network_specs_ex1}
\begin{tabular}{lccc}
\hline
 & \makecell[c]{Entropy \\ network $\eta_\theta$}
 & \makecell[c]{Symmetric \\ network $\mathcal{N}_S$}
 & \makecell[c]{Integration constant \\ network $\mathcal{N}_A$} \\
\hline
Input dimension
& $3$
& $9$
& $9$ \\

Output dimension
& $1$
& $6$
& $3$ \\

Hidden layers
& $[128]$
& $[128,128]$
& $[64,64]$ \\

Activation
& \makecell[c]{Softplus}
& Tanh
& Tanh \\

Trainable parameters
& $641$
& $18{,}566$
& $4{,}995$ \\
\hline
Total trainable parameters
& \multicolumn{3}{c}{$24{,}202$} \\
\hline
\end{tabular}
\end{table}

\begin{figure}[h!]
    \centering
    \begin{subfigure}[b]{0.8\linewidth}
        \centering
        \includegraphics[width=\linewidth]{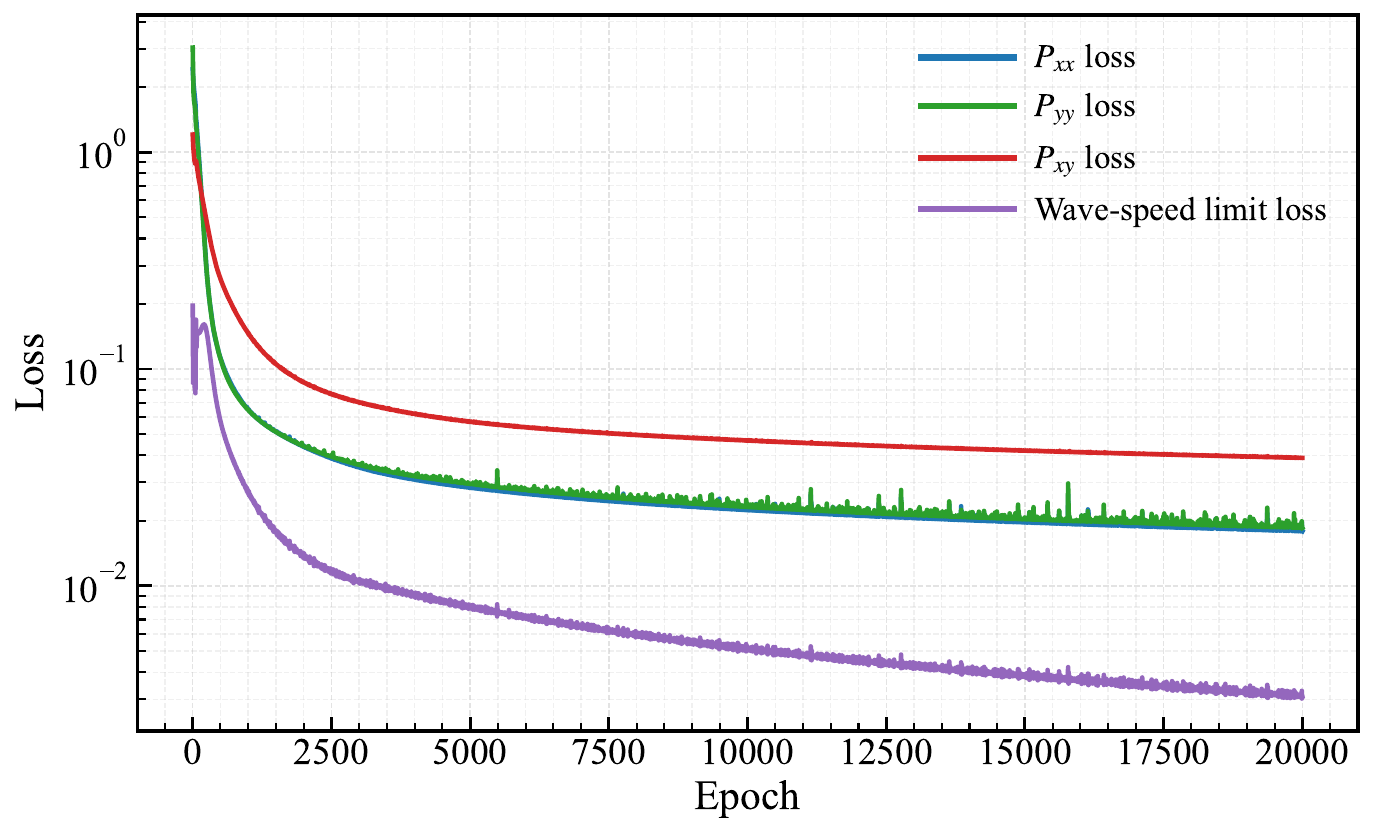}
    \end{subfigure} 
    \caption{Training loss histories of the HN closure, showing the component losses for $P_{xx}$, $P_{yy}$, $P_{xy}$, and the wave-speed limit penalty.}
    \label{fig:loss_ex1}
\end{figure}

\begin{figure}[h!]
    \centering
    \begin{subfigure}[b]{0.8\linewidth}
        \centering
        \includegraphics[width=\linewidth]{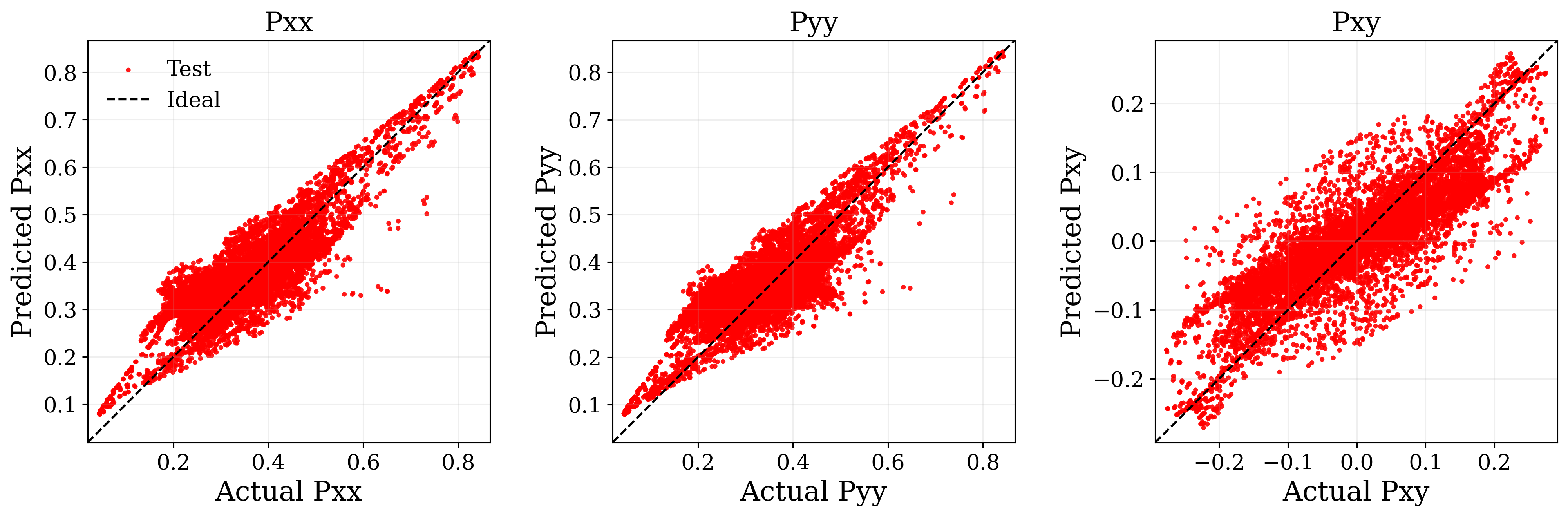}
        \subcaption{Levermore closure}
    \end{subfigure} \\
    \begin{subfigure}[b]{0.8\linewidth}
        \centering
        \includegraphics[width=\linewidth]{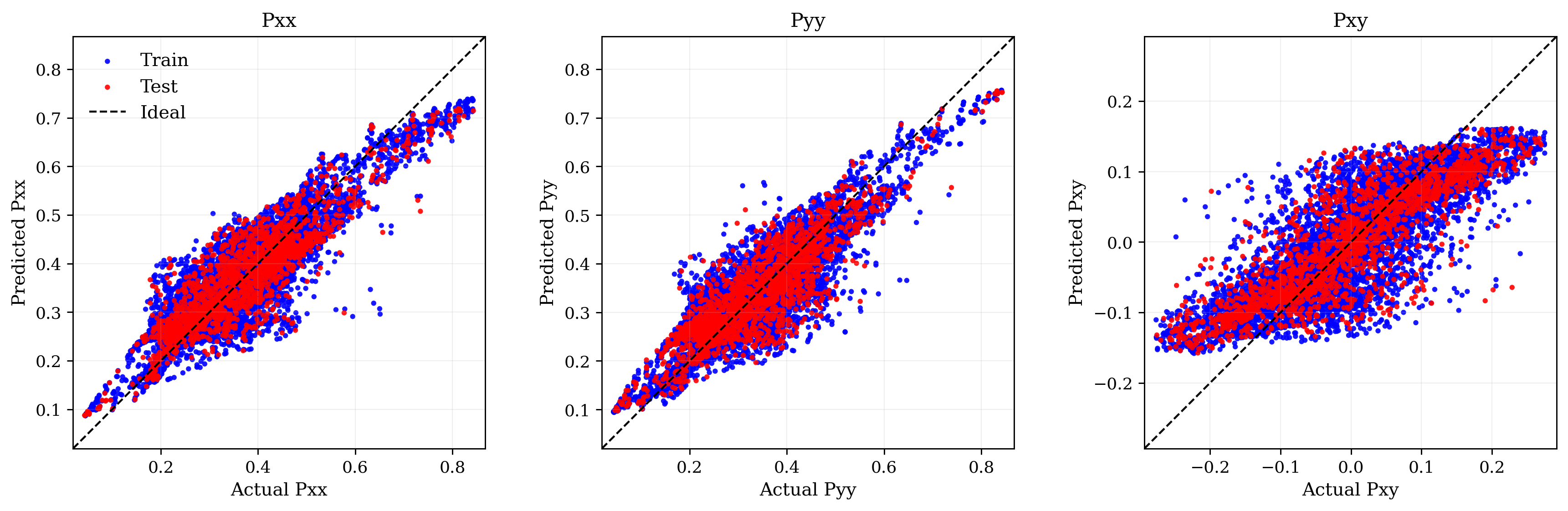}
        \subcaption{HN closure (local)}
    \end{subfigure} \\
    \begin{subfigure}[b]{0.8\linewidth}
        \centering
        \includegraphics[width=\linewidth]{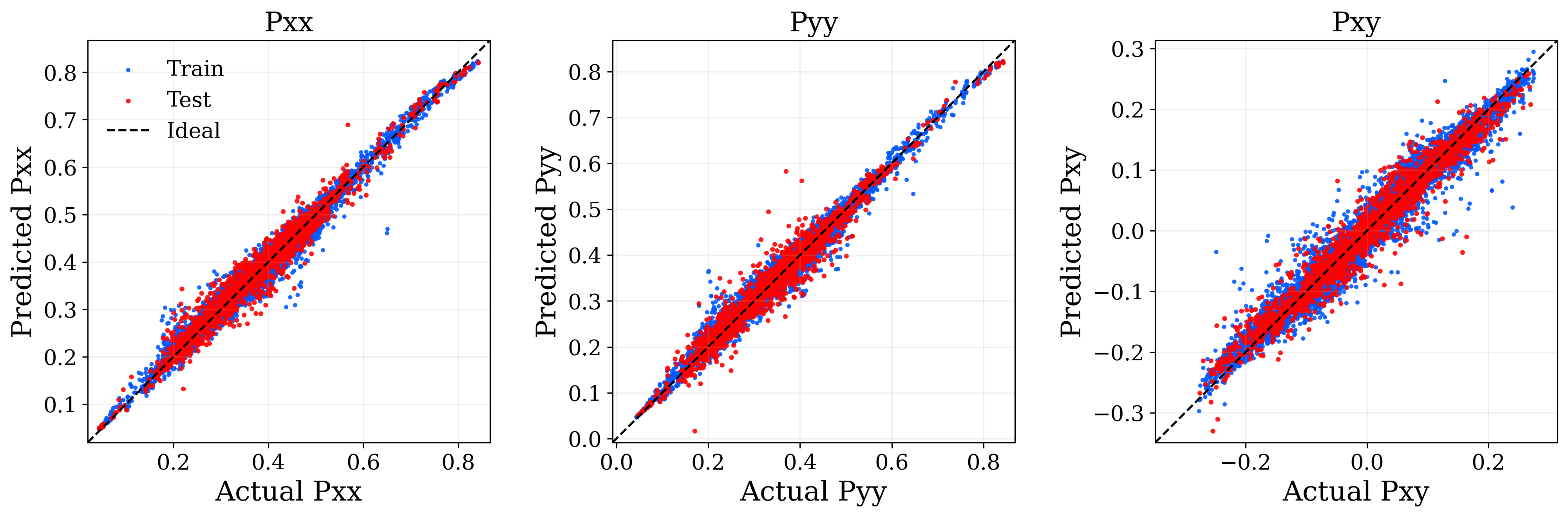}
        \subcaption{\textbf{HN closure (local + nonlocal)}}
    \end{subfigure}
    \caption{Comparison between reference and predicted closure outputs $(P_{xx},P_{yy},P_{xy})$ for the steady lattice radiative transfer problem. The top panel shows the analytical Levermore closure. The middle panel shows the HN closure using only local state variables. The bottom panel shows the hyperbolic neural closure using both local state variables and gradient information. For the learned closures, training samples are shown in blue, testing samples in red, and the dashed line represents the ideal relation $y=x$. The model using gradient information shows the best agreement with the reference targets.}
    \label{fig:actual_predicted_ex1}
\end{figure}

\begin{figure}[h!]
    \centering
    \begin{subfigure}[b]{0.8\linewidth}
        \centering
        \includegraphics[width=\linewidth]{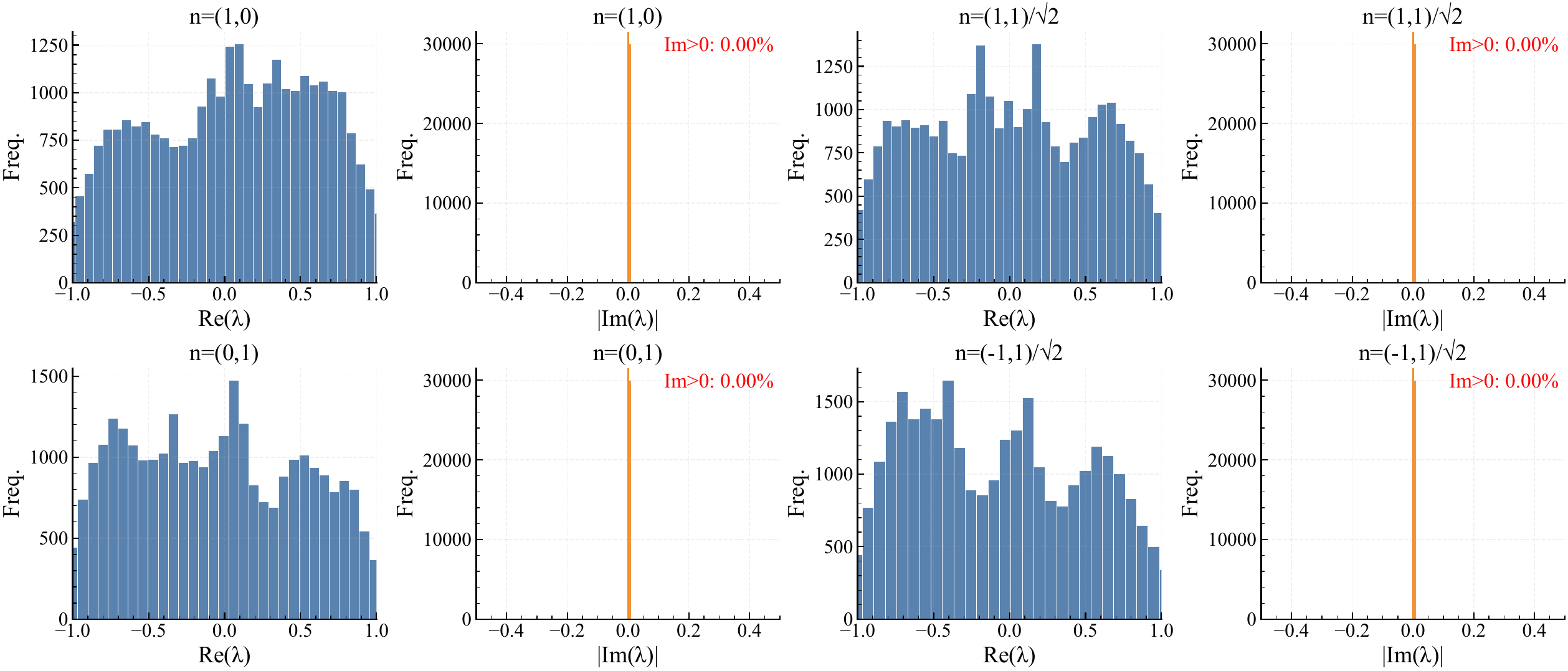}
    \end{subfigure}
    \caption{Eigenvalue histograms of the directional Jacobian $A(\mathbf{n})=\mathbf{n}_x J_x+\mathbf{n}_y J_y$ for the trained HN closure in the two-dimensional lattice radiative transfer problem. Histograms are computed from a total of \(100{,}000\) sampled states. For each direction $\mathbf{n}\in\{(1,0),(0,1),(1,1)/\sqrt{2},(-1,1)/\sqrt{2}\}$, the left panel shows the distribution of $\mathrm{Re}(\lambda)$ and the right panel shows $|\mathrm{Im}(\lambda)|$. The annotated fraction $\mathrm{Im}>0$ is $0.00\%$ in all cases, indicating real-valued spectra for these sampled states.}
    \label{fig:histogram_ex1}
\end{figure}

\begin{table}[h!]
\footnotesize
\centering
\caption{Comparison of closure accuracy for each pressure component in the lattice radiative transfer problem on a $7\times7$ domain. The HN closure is trained using Monte Carlo reference data generated on a $100\times100$ lattice with $N_p=10^7$ particles, with 80\% of the data used for training and 20\% for testing. The neural network uses hidden width 128 with Tanh activation, quadrature points $N_q=4$, and is trained for 20{,}000 epochs. Reported metrics are evaluated for $P_{xx}$, $P_{yy}$, and $P_{xy}$, where $\hat P$ denotes the predicted pressure tensor.}
\label{tab:closure_metrics_p}
\begin{tabular}{lllcccc}
\hline
Closure & Split & Component & MSE & $R^2$ & $\max\lvert P-\hat P\rvert$ & $\mathrm{mean}\,\lvert P-\hat P\rvert$ \\
\hline
\multirow{6}{*}{HN closure}
& \multirow{3}{*}{Train}
& $P_{xx}$ & $2.7980\times10^{-4}$ & $0.9823$ & $1.8971\times10^{-1}$ & $1.1502\times10^{-2}$ \\
& & $P_{yy}$ & $2.6626\times10^{-4}$ & $0.9816$ & $1.6438\times10^{-1}$ & $1.1159\times10^{-2}$ \\
& & $P_{xy}$ & $4.8071\times10^{-4}$ & $0.9609$ & $2.1331\times10^{-1}$ & $1.5250\times10^{-2}$ \\ \cline{2-7}
& \multirow{3}{*}{Test} 
& $P_{xx}$ & $4.6160\times10^{-4}$ & $0.9696$ & $1.2528\times10^{-1}$ & $1.4677\times10^{-2}$ \\
& & $P_{yy}$ & $5.4556\times10^{-4}$ & $0.9611$ & $2.1268\times10^{-1}$ & $1.5137\times10^{-2}$ \\
& & $P_{xy}$ & $6.2433\times10^{-4}$ & $0.9503$ & $1.9345\times10^{-1}$ & $1.7475\times10^{-2}$ \\
\hline
\multirow{3}{*}{Levermore closure}
& \multirow{3}{*}{All (Test)}
& $P_{xx}$ & $3.7428\times10^{-3}$ & $0.7612$ & $3.1305\times10^{-1}$ & $4.9941\times10^{-2}$ \\
& & $P_{yy}$ & $4.0208\times10^{-3}$ & $0.7205$ & $3.0191\times10^{-1}$ & $5.2808\times10^{-2}$ \\
& & $P_{xy}$ & $3.5868\times10^{-3}$ & $0.7095$ & $2.5367\times10^{-1}$ & $4.5525\times10^{-2}$ \\
\hline
\end{tabular}
\label{table:closure_accuracy_ex1}
\end{table}

\begin{figure}[h!]
    \centering
    \begin{subfigure}[b]{0.49\linewidth}
        \centering
        \includegraphics[width=\linewidth]{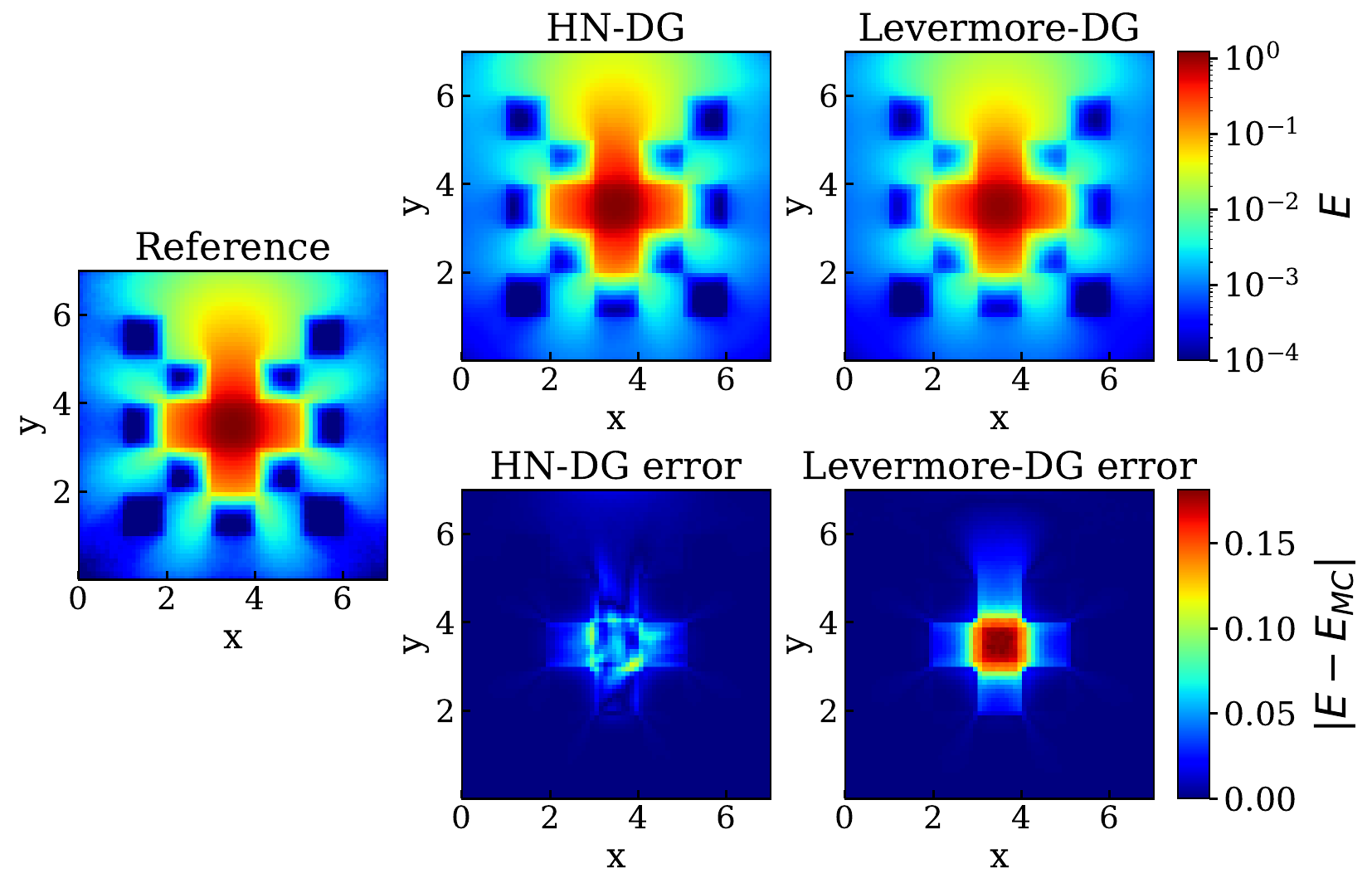}
        \subcaption{Trained setting: Obs$(10,0)$, Bg$(0,1)$}
    \end{subfigure}
    \begin{subfigure}[b]{0.49\linewidth}
        \centering
        \includegraphics[width=\linewidth]{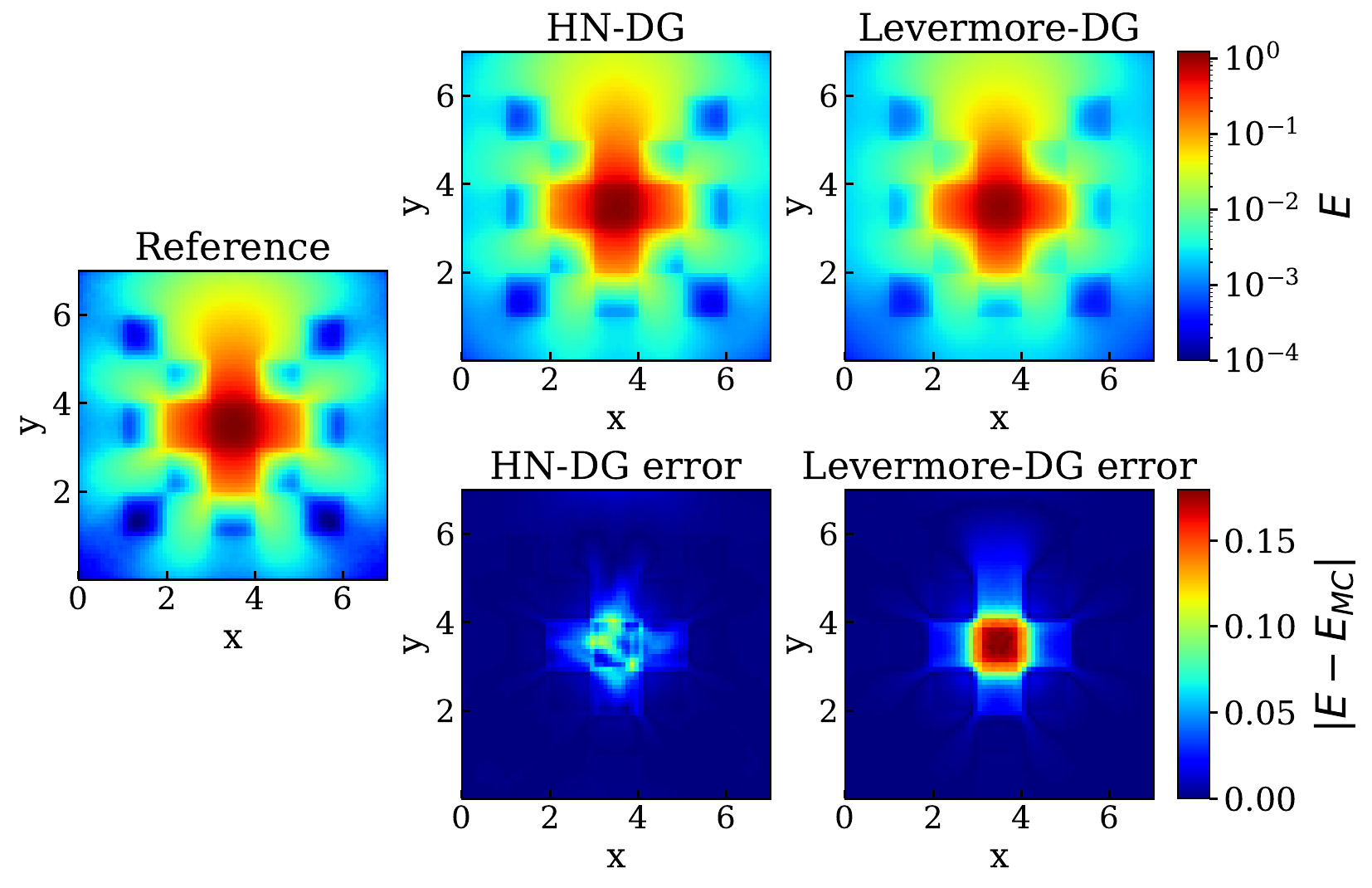}
        \subcaption{Tested setting: Obs$(5,0)$, Bg$(0,1)$}
    \end{subfigure} \\
    \begin{subfigure}[b]{0.49\linewidth}
        \centering
        \includegraphics[width=\linewidth]{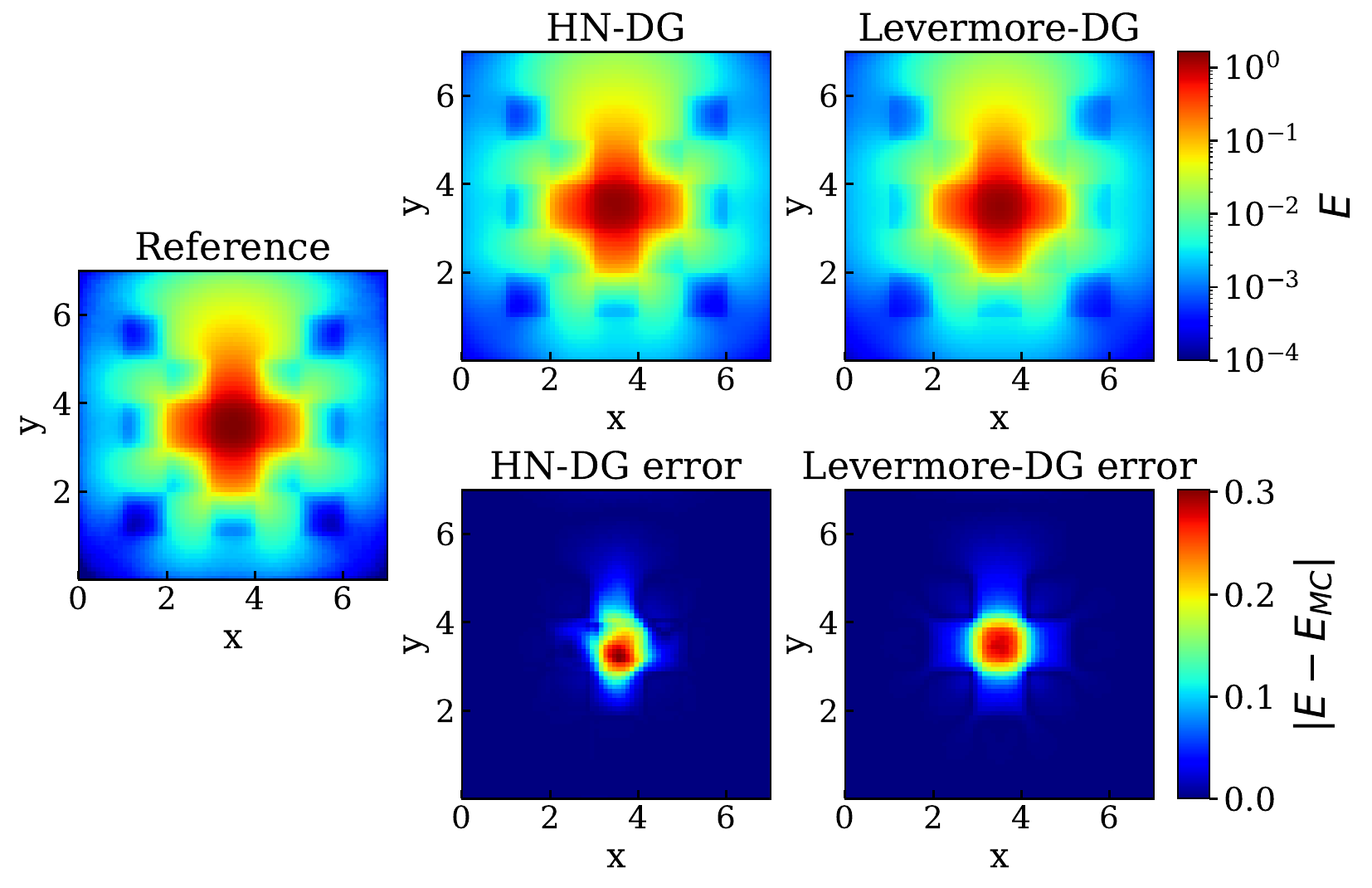}
        \subcaption{Tested setting: Obs$(4,0)$, Bg$(0,2)$}
    \end{subfigure}
    \begin{subfigure}[b]{0.49\linewidth}
        \centering
        \includegraphics[width=\linewidth]{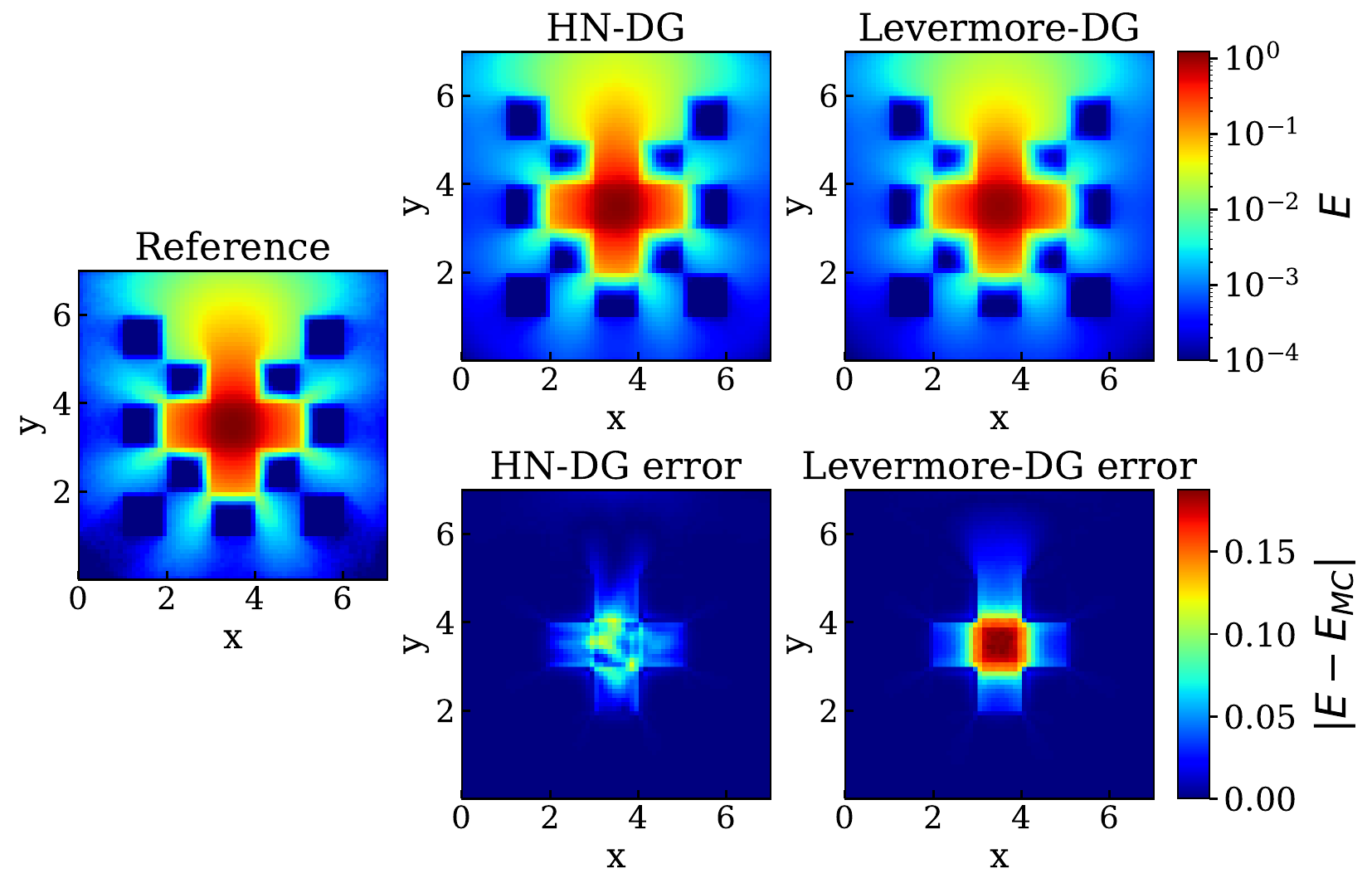}
        \subcaption{Tested setting: Obs$(15,1)$, Bg$(0,1)$}
    \end{subfigure}
\caption{Comparison across four material settings of the checkerboard problem, where Obs denotes the obstacle (blue filled cells) coefficients \((\sigma_a,\sigma_s)\) and Bg denotes the background (white cells) coefficients \((\sigma_a,\sigma_s)\). In each subfigure, the top row shows the Monte Carlo ground truth (left), the HN closure solution (middle), and the Levermore closure solution (right). The bottom row shows the corresponding absolute errors with respect to the Monte Carlo reference.}
    \label{fig:solution_ex1}
\end{figure}

\begin{table}[h!]
\centering
\footnotesize
\caption{Error comparison for four lattice benchmark settings. Here, Obs denotes the obstacle (blue filled cells) material coefficients \((\sigma_a,\sigma_s)\), and Bg denotes the background (white cells) material coefficients \((\sigma_a,\sigma_s)\). The reference is the Monte Carlo solution (\(N_p=5\times10^6\), \(70\times70\) grid). RMSE (root mean square error) and MAE (mean absolute error) are reported together with the relative \(L^2\) error. Smaller error is highlighted in bold for each metric within each setting.}
\label{tab:four_case_error}
\begin{tabular}{llccc}
\hline
Setting & Closure & Relative \(L^2\) error & RMSE & MAE \\
\hline
\multirow{2}{*}{Fig.~\ref{fig:solution_ex1}a: Obs\((10,0)\), Bg\((0,1)\)}
& HN  & \(\mathbf{7.5280\times10^{-2}}\) & \(\mathbf{1.3481\times10^{-2}}\) & \(\mathbf{5.2890\times10^{-3}}\) \\
& Levermore & \(1.7146\times10^{-1}\)          & \(3.0705\times10^{-2}\)          & \(9.9785\times10^{-3}\) \\
\hline
\multirow{2}{*}{Fig.~\ref{fig:solution_ex1}b: Obs\((5,0)\), Bg\((0,1)\)}
& HN     & \(\mathbf{7.8590\times10^{-2}}\) & \(\mathbf{1.4209\times10^{-2}}\) & \(\mathbf{5.6324\times10^{-3}}\) \\
& Levermore & \(1.6613\times10^{-1}\)          & \(3.0035\times10^{-2}\)          & \(1.0285\times10^{-2}\) \\
\hline
\multirow{2}{*}{Fig.~\ref{fig:solution_ex1}c: Obs\((4,0)\), Bg\((0,2)\)}
& HN    & \(\mathbf{1.5853\times10^{-1}}\) & \(\mathbf{3.7045\times10^{-2}}\) & \(\mathbf{1.0249\times10^{-2}}\) \\
& Levermore & \(1.7643\times10^{-1}\)          & \(4.1227\times10^{-2}\)          & \(1.2411\times10^{-2}\) \\
\hline
\multirow{2}{*}{Fig.~\ref{fig:solution_ex1}d: Obs\((15,1)\), Bg\((0,1)\)}
& HN    & \(\mathbf{9.3253\times10^{-2}}\) & \(\mathbf{1.6780\times10^{-2}}\) & \(\mathbf{5.9224\times10^{-3}}\) \\
& Levermore & \(1.7829\times10^{-1}\)          & \(3.2082\times10^{-2}\)          & \(1.0201\times10^{-2}\) \\
\hline
\end{tabular}
\label{table:solution_accuracy_ex1}
\end{table}

The HN closure is implemented using three neural networks with a total of 24,202 trainable parameters, as summarized in Table~\ref{tab:hpnn_network_specs_ex1}. The model is trained on Monte Carlo reference data using an 80/20 train-test split, and the training process (Fig.~\ref{fig:loss_ex1}) shows stable convergence across all tensor components together with the wave-speed penalty.
The learned HN closure achieves high accuracy in predicting the radiation pressure tensor, as shown in Fig.~\ref{fig:actual_predicted_ex1}. Quantitatively, Table~\ref{tab:closure_metrics_p} reports test-set errors of $4.62\times10^{-4}$, $5.46\times10^{-4}$, and $6.24\times10^{-4}$ for $P_{xx}$, $P_{yy}$, and $P_{xy}$, respectively, with corresponding $R^2$ values of $0.9696$, $0.9611$, and $0.9503$. In contrast, the Levermore closure yields errors on the order of $3.6\times10^{-3}$ to $4.0\times10^{-3}$ with significantly lower $R^2$ values ($\approx 0.71$--$0.76$), indicating that the HN closure improves accuracy by nearly an order of magnitude.

The spectral properties of the flux Jacobian induced by the HN closure are examined in Fig.~\ref{fig:histogram_ex1}, where all sampled directional eigenvalues remain real with zero fraction of nonzero imaginary parts, indicating no observed violation of real-valued spectral structure.
This improved closure accuracy translates directly into better solution quality. As shown in Fig.~\ref{fig:solution_ex1} and Table~\ref{table:solution_accuracy_ex1}, the HN closure consistently reduces the relative $L^2$ error across all tested configurations. For example, in the reference setting (Obs$(10,0)$, Bg$(0,1)$), the relative error is reduced from $1.71\times10^{-1}$ (Levermore) to $7.53\times10^{-2}$ (HN). Similar improvements are observed across all other configurations, demonstrating both improved predictive accuracy and stable numerical behavior of the proposed closure.
Although the wave-speed penalty reduced most violations during training,
approximately \(1\)--\(2\%\) of the evaluated states still exceeded the prescribed
wave-speed bound in the DG simulations. Therefore, the face wave speed was clipped
by an upper bound when computing the local Lax--Friedrichs flux.

\subsection{Experiment 2: Beam-crossing problem}
\label{ex2}

We next test the proposed closure on a two-dimensional \emph{time-dependent} radiation moment problem with two intersecting source regions. Unlike the steady-state lattice example, this problem evolves from vacuum initial data and is driven by a continuous volumetric source over a finite time interval.
Let $\mathbf{F}=(F_x,F_y)^\top$ and
\[
\mathbf{P}=
\begin{pmatrix}
P_{xx} & P_{xy}\\
P_{yx} & P_{yy}
\end{pmatrix},
\qquad
\sigma_t=\sigma_a+\sigma_s.
\]
The two-dimensional transient radiation moment system is
\begin{equation}
\frac{1}{c}\frac{\partial E}{\partial t}+\sigma_a E+\nabla_{\mathbf{x}}\cdot \mathbf{F}=Q,
\label{eq:exp2_energy}
\end{equation}
\begin{equation}
\frac{1}{c}\frac{\partial \mathbf{F}}{\partial t}+\sigma_t \mathbf{F}+\nabla_{\mathbf{x}}\cdot \mathbf{P}=\mathbf{0},
\label{eq:exp2_flux}
\end{equation}
that is,
\[
\frac{1}{c}\frac{\partial E}{\partial t}
+\sigma_a E
+\frac{\partial F_x}{\partial x}
+\frac{\partial F_y}{\partial y}
=Q,
\]
\[
\frac{1}{c}\frac{\partial F_x}{\partial t}
+\sigma_t F_x
+\frac{\partial P_{xx}}{\partial x}
+\frac{\partial P_{xy}}{\partial y}=0,
\qquad
\frac{1}{c}\frac{\partial F_y}{\partial t}
+\sigma_t F_y
+\frac{\partial P_{yx}}{\partial x}
+\frac{\partial P_{yy}}{\partial y}=0.
\]
Equivalently, with $\mathbf{u}=(E,F_x,F_y)^\top$, the system can be written as
\begin{equation}
\frac{1}{c}\frac{\partial}{\partial t}
\begin{pmatrix}
E\\
F_x\\
F_y
\end{pmatrix}
+
\begin{pmatrix}
\sigma_a E\\
\sigma_t F_x\\
\sigma_t F_y
\end{pmatrix}
+
\frac{\partial}{\partial x}
\begin{pmatrix}
F_x\\
P_{xx}\\
P_{yx}
\end{pmatrix}
+
\frac{\partial}{\partial y}
\begin{pmatrix}
F_y\\
P_{xy}\\
P_{yy}
\end{pmatrix}
=
\begin{pmatrix}
Q\\
0\\
0
\end{pmatrix}.
\label{eq:exp2_conservative}
\end{equation}

In our beam-crossing setup, as illustrated in Fig.~\ref{fig:configuration_ex2}, the computational domain is
$
\Omega=[0,7]\times[0,7],
$
with transport speed
$
c=1.
$
The initial condition is vacuum:
\[
E(\mathbf{x},0)=0,
\qquad
\mathbf{F}(\mathbf{x},0)=\mathbf{0}.
\]
The source term consists of two perpendicular volumetric source strips,
\[
Q(x,y)=
\begin{cases}
1, & 0.5<x<1.0,\;\; 2.5<y<4.5,\\
1, & 2.5<x<4.5,\;\; 0.5<y<1.0,\\
0, & \text{otherwise},
\end{cases}
\]
which continuously emit over the time interval $0\le t\le T_f$. In the Monte Carlo solver, particles are born uniformly in time over $[0,T_f]$ and are emitted isotropically from these two source regions.

The material coefficients are piecewise constant:
\[
\sigma_s(x,y)=0
\qquad \text{for all } (x,y)\in\Omega,
\]
and
\[
\sigma_a(x,y)=
\begin{cases}
10, & x<0.5 \text{ or } x>6.5 \text{ or } y<0.5 \text{ or } y>6.5,\\
0.02, & \text{otherwise}.
\end{cases}
\]
Thus, the interior is a weakly absorbing medium, while a boundary frame of width $0.5$ acts as a strong absorbing layer to suppress leakage and reflections.
For the time horizon we use
$T_f=3.5,
N_t=10,
\Delta t=\frac{T_f}{N_t}=0.35.$
This example is designed to test how well a closure handles the interaction of two crossing radiation fields. Since the underlying angular distribution becomes highly nontrivial near the overlap region of the two source-driven fronts, the problem provides a useful benchmark for comparing the standard Levermore closure and the learned HN closure against the Monte Carlo reference data.

\begin{figure}[h!]
    \centering
    \begin{subfigure}[b]{0.8\linewidth}
        \centering
        \includegraphics[width=\linewidth]{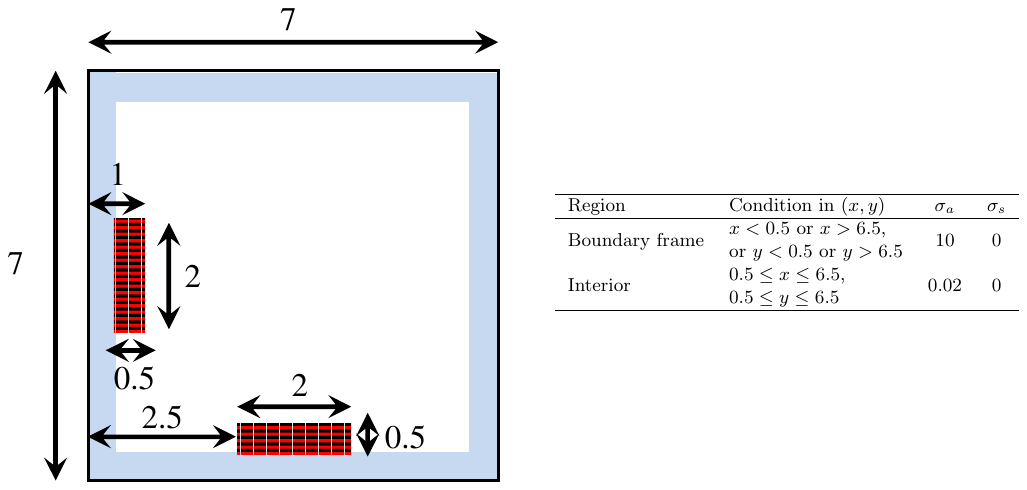}
    \end{subfigure}
    \caption{Beam-crossing configuration on a \(7\times7\) square domain \(\Omega=[0,7]\times[0,7]\), where \(x\in[0,7]\) and \(y\in[0,7]\). Material coefficients: \(\sigma_s=0\) everywhere, \(\sigma_a=10\) in the boundary frame (\(x<0.5\) or \(x>6.5\), or \(y<0.5\) or \(y>6.5\)), and \(\sigma_a=0.02\) in the interior. The problem is driven by two source patches, \((0.5<x<1.0,\;2.5<y<4.5)\) and \((2.5<x<4.5,\;0.5<y<1.0)\), emitting over \(0\le t\le T_f\).}
    \label{fig:configuration_ex2}
\end{figure}

\begin{table}[htbp!]
\footnotesize
\renewcommand{\arraystretch}{1.08}
\setlength{\tabcolsep}{4pt}
\centering
\caption{Neural network specifications of the HN closure used for the beam-crossing simulation.}
\label{tab:hpnn_network_specs}
\begin{tabular}{lccc}
\hline
 & \makecell[c]{Entropy \\ network $\eta_\theta$}
 & \makecell[c]{Symmetric \\ network $\mathcal{N}_S$}
 & \makecell[c]{Integration constant \\ network $\mathcal{N}_A$} \\
\hline
Input dimension
& $3$
& $9$
& $9$ \\

Output dimension
& $1$
& $6$
& $3$ \\

Hidden layers
& $[128]$
& $[128,128]$
& $[64,64]$ \\

Activation
& \makecell[c]{Softplus}
& Tanh
& Tanh \\

Trainable parameters
& $641$
& $18{,}566$
& $4{,}995$ \\
\hline
Total trainable parameters
& \multicolumn{3}{c}{$24{,}202$} \\
\hline
\end{tabular}
\end{table}

\begin{figure}[h!]
    \centering
    \begin{subfigure}[b]{0.8\linewidth}
        \centering
        \includegraphics[width=\linewidth]{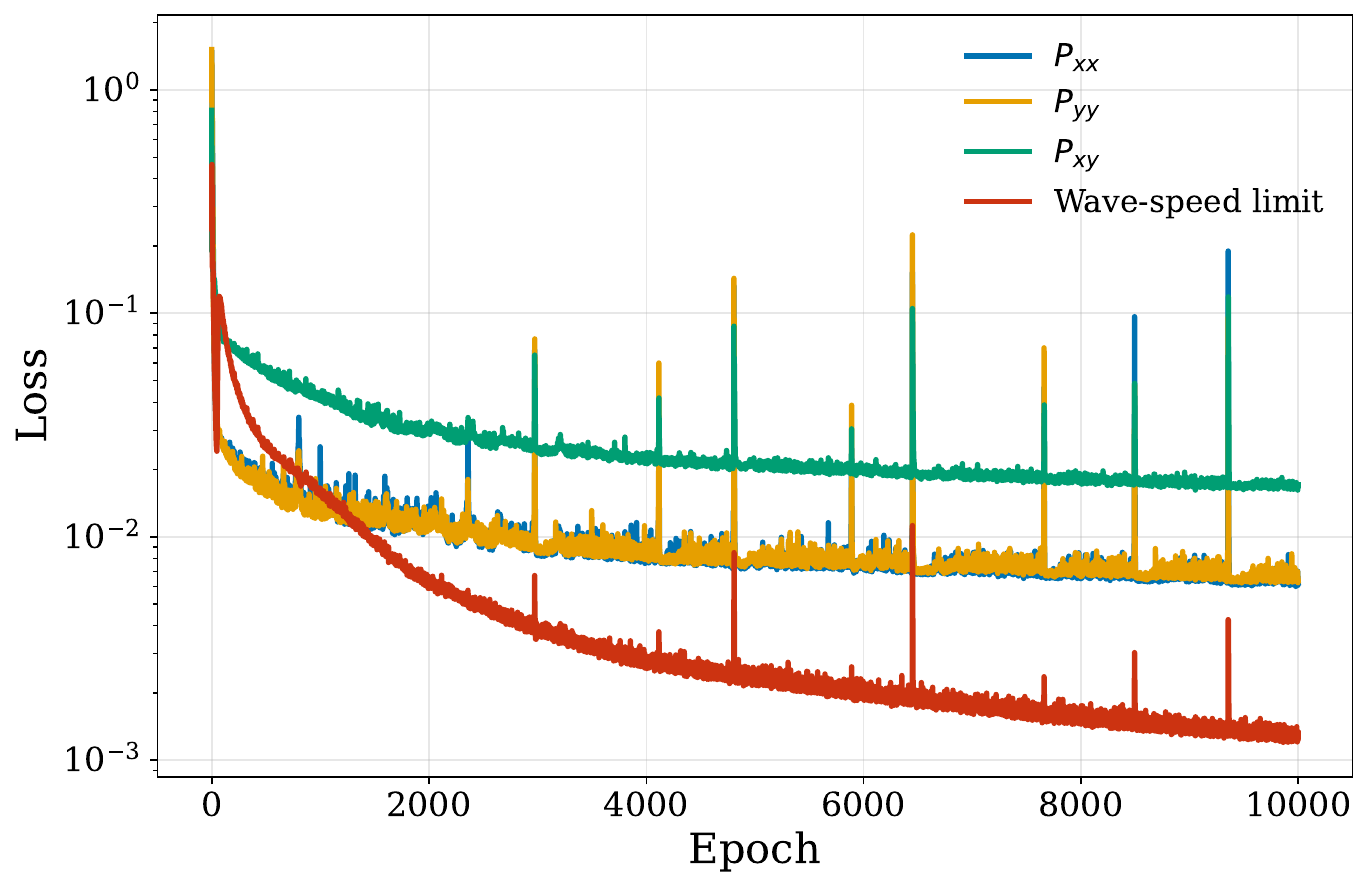}
    \end{subfigure} 
    \caption{Training loss histories of the HN closure for the beam crossing problem, showing the losses for $P_{xx}$, $P_{yy}$, $P_{xy}$, and the wave speed penalty.}
    \label{fig:loss_ex2}
\end{figure}

\begin{figure}[h!]
    \centering
    \begin{subfigure}[b]{0.8\linewidth}
        \centering
        \includegraphics[width=\linewidth]{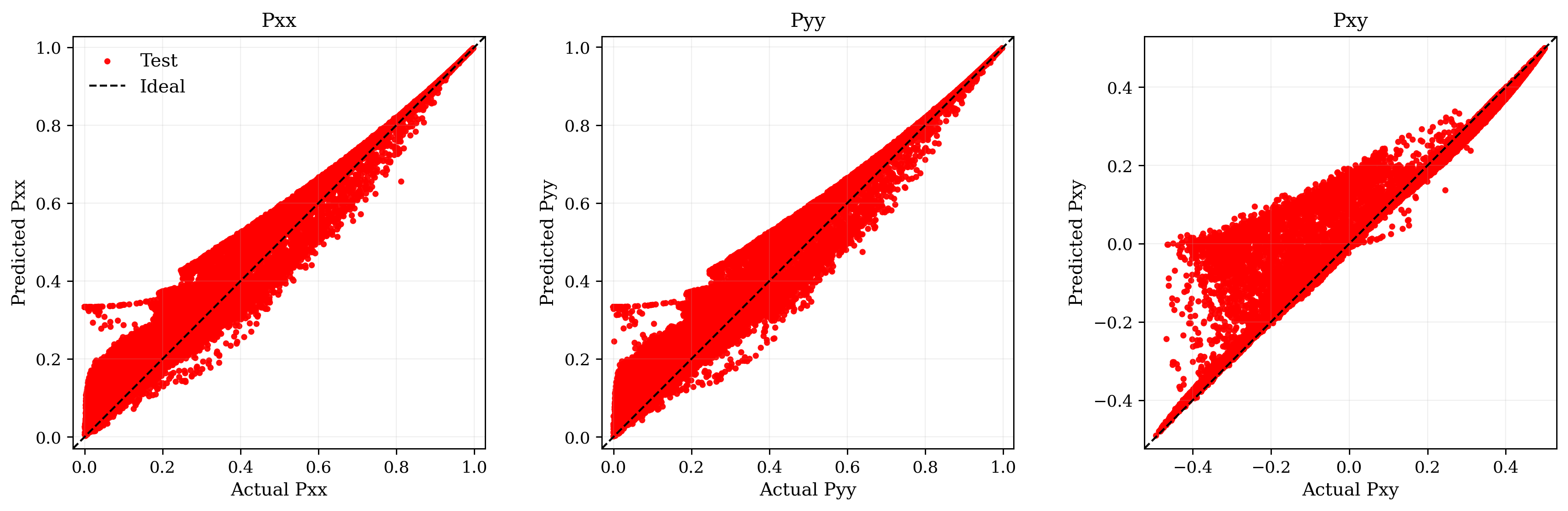}
        \subcaption{Levermore closure}
    \end{subfigure} \\
    \begin{subfigure}[b]{0.8\linewidth}
        \centering
        \includegraphics[width=\linewidth]{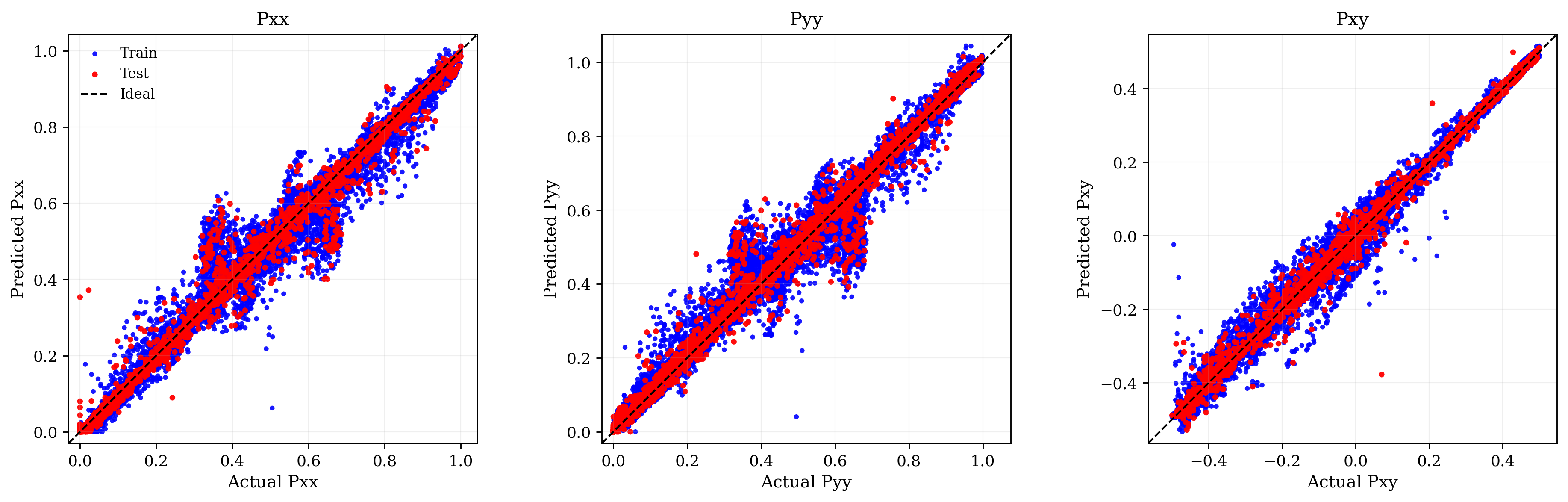}
        \subcaption{\textbf{HN closure}}
    \end{subfigure}
    \caption{Scatter plots comparing reference and predicted closure outputs $(P_{xx}, P_{yy}, P_{xy})$ for the steady lattice problem. In the HN results, training samples are shown in blue and testing samples in red. The dashed line represents the ideal relation $y=x$.}
    \label{fig:actual_predicted_ex2}
\end{figure}

\begin{figure}[h!]
    \centering
    \begin{subfigure}[b]{0.8\linewidth}
        \centering
        \includegraphics[width=\linewidth]{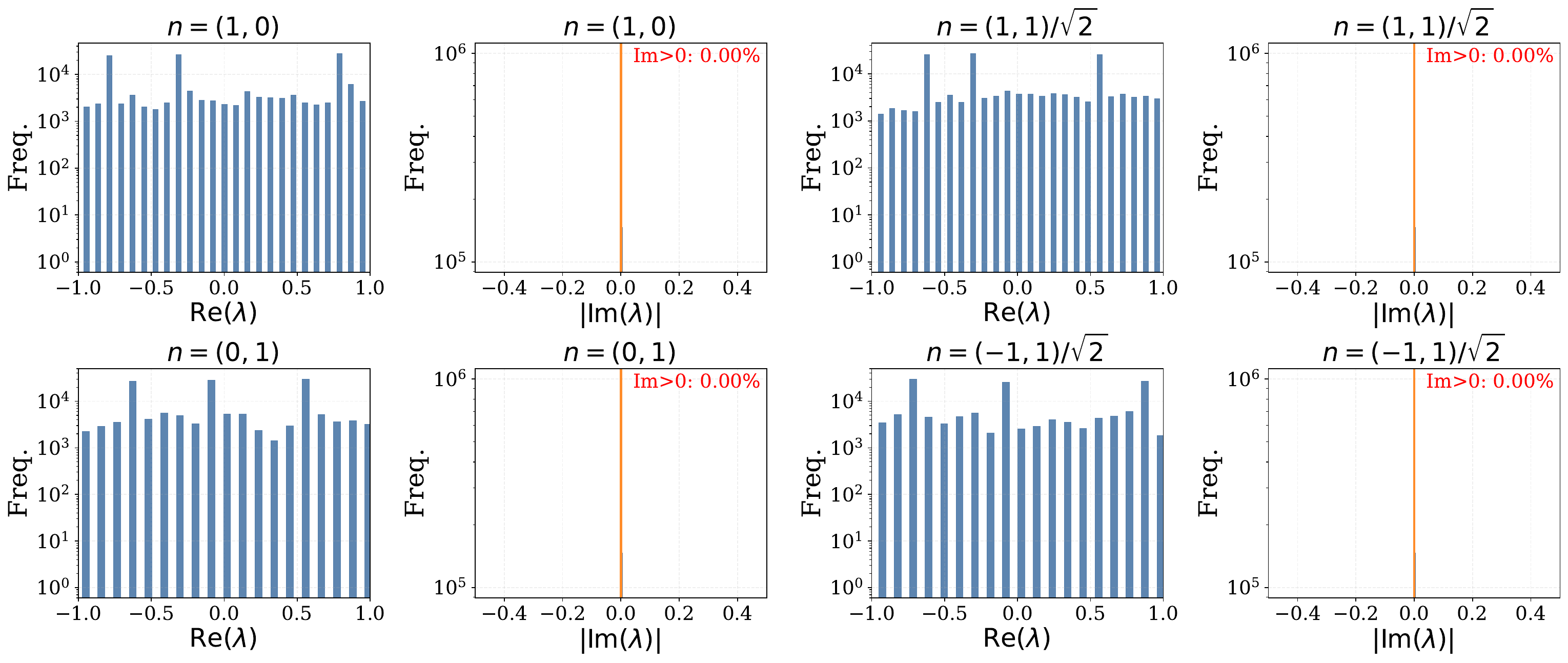}
    \end{subfigure}
    \caption{Eigenvalue histograms of the directional Jacobian $A(\mathbf{n})=\mathbf{n}_x J_x+\mathbf{n}_y J_y$ for the trained closure in the beam-crossing problem. Histograms are computed from a total of \(4900\) sampled states. For each direction $\mathbf{n}\in\{(1,0),(0,1),(1,1)/\sqrt{2},(-1,1)/\sqrt{2}\}$, the left panel shows the distribution of $\mathrm{Re}(\lambda)$ and the right panel shows $|\mathrm{Im}(\lambda)|$. The annotated fraction $\mathrm{Im}>0$ is $0.00\%$ in all cases, indicating real-valued spectra for these sampled states.}
    \label{fig:histrogram_ex2}
\end{figure}

\begin{table}[h!]
\footnotesize
\centering
\caption{Closure accuracy comparison for the time dependent beam crossing problem on a $7\times7$ domain. The neural closure is trained using Monte Carlo reference data generated on a $70\times70$ beam-crossing simulation with $N_p=10^6$ particles. The training data include all snapshots up to $T_f=3.5$ using $N_t=10$ uniform time steps, with 80\% of the data used for training and 20\% for testing. The neural network uses hidden width 128 with Tanh activation, quadrature points $N_q=4$, wave speed limiting in four directions, and is trained for 10{,}000 epochs with learning rate $10^{-3}$. Errors are reported for $P_{xx}$, $P_{yy}$, and $P_{xy}$.}
\label{tab:beam_crossing_closure_metrics_p}
\begin{tabular}{lllcccc}
\hline
Closure & Split & Component & MSE & $R^2$ & $\max\lvert P-\hat P\rvert$ & $\mathrm{mean}\,\lvert P-\hat P\rvert$ \\
\hline
\multirow{6}{*}{HN closure}
& \multirow{3}{*}{Train}
& $P_{xx}$ & $4.6542\times10^{-4}$ & $0.9938$ & $3.6696\times10^{-1}$ & $1.1035\times10^{-2}$ \\
& & $P_{yy}$ & $4.8359\times10^{-4}$ & $0.9936$ & $3.7132\times10^{-1}$ & $1.0643\times10^{-2}$ \\
& & $P_{xy}$ & $4.4348\times10^{-4}$ & $0.9832$ & $3.5996\times10^{-1}$ & $8.4952\times10^{-3}$ \\ \cline{2-7}
& \multirow{3}{*}{Test}
& $P_{xx}$ & $5.1325\times10^{-4}$ & $0.9928$ & $2.0566\times10^{-1}$ & $1.1338\times10^{-2}$ \\
& & $P_{yy}$ & $5.1303\times10^{-4}$ & $0.9934$ & $2.2795\times10^{-1}$ & $1.0875\times10^{-2}$ \\
& & $P_{xy}$ & $5.2971\times10^{-4}$ & $0.9812$ & $2.9528\times10^{-1}$ & $9.0392\times10^{-3}$ \\
\hline
\multirow{3}{*}{Levermore closure}
& \multirow{3}{*}{All (Test)}
& $P_{xx}$ & $5.7021\times10^{-2}$ & $0.6804$ & $3.3333\times10^{-1}$ & $1.8739\times10^{-1}$ \\
& & $P_{yy}$ & $5.6980\times10^{-2}$ & $0.7144$ & $3.3333\times10^{-1}$ & $1.8683\times10^{-1}$ \\
& & $P_{xy}$ & $3.2647\times10^{-3}$ & $0.8844$ & $4.1301\times10^{-1}$ & $1.8340\times10^{-2}$ \\
\hline
\end{tabular}
\end{table}

\begin{figure}[h!]
    \centering
    \begin{subfigure}[b]{1\linewidth}
        \centering
        \includegraphics[width=\linewidth]{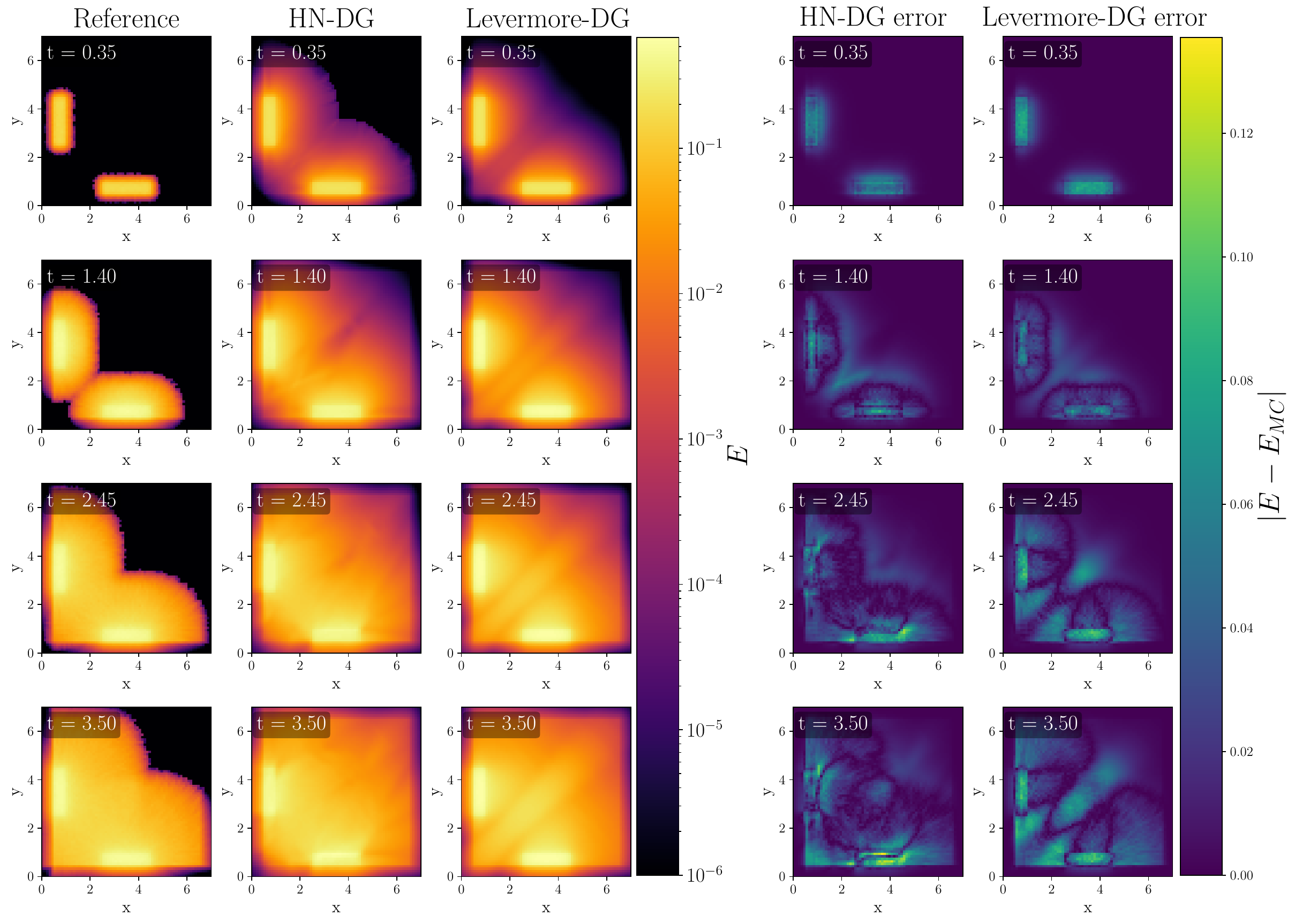}
    \end{subfigure}
\caption{Beam crossing comparison at four time instances \(t=\{0.35,\,1.40,\,2.45,\,3.50\}\) on a \(70\times70\) grid with source scale \(Q=1.0\). Each row corresponds to one time snapshot. Columns show the Monte Carlo solution, the HN-DG solution, the Levermore-DG solution, and the corresponding absolute errors \(|E-E_{\mathrm{MC}}|\) for HN-DG and Levermore-DG.}
    \label{fig:solution_ex2}
\end{figure}

\begin{table}[h!]
\centering
\footnotesize
\caption{Error comparison for the beam crossing time-dependent problem at source scale \(1.0\) (grid \(70\times70\), endpoint comparison, reference: MC 2D-isotropic). RMSE and MAE are reported together with relative \(L^2\) error. Smaller error is highlighted in bold for each time.}
\label{tab:beam_time_error_endpoint_src1}
\begin{tabular}{llccc}
\hline
Time & Closure & Relative \(L^2\) error & RMSE & MAE \\
\hline
\multirow{2}{*}{\(t=0.35\)}
& HN   & \(\mathbf{4.5114\times10^{-1}}\) & \(\mathbf{1.3412\times10^{-2}}\) & \(\mathbf{5.1016\times10^{-3}}\) \\
& Levermore  & \(5.2848\times10^{-1}\)          & \(1.5711\times10^{-2}\)          & \(5.3511\times10^{-3}\) \\
\hline
\multirow{2}{*}{\(t=1.40\)}
& HN   & \(\mathbf{1.3102\times10^{-1}}\) & \(\mathbf{1.3665\times10^{-2}}\) & \(\mathbf{8.1473\times10^{-3}}\) \\
& Levermore  & \(1.4530\times10^{-1}\) & \(1.4135\times10^{-2}\) & \(8.2548\times10^{-3}\) \\
\hline
\multirow{2}{*}{\(t=2.45\)}
& HN   & \(\mathbf{1.6136\times10^{-1}}\) & \(\mathbf{1.8102\times10^{-2}}\) & \(\mathbf{1.1137\times10^{-2}}\) \\
& Levermore  & \(2.0611\times10^{-1}\)          & \(2.3122\times10^{-2}\)          & \(1.4174\times10^{-2}\) \\
\hline
\multirow{2}{*}{\(t=3.50\)}
& HN   & \(\mathbf{1.8927\times10^{-1}}\) & \(\mathbf{2.3493\times10^{-2}}\) & \(\mathbf{1.5946\times10^{-2}}\) \\
& Levermore  & \(2.2992\times10^{-1}\)          & \(2.8539\times10^{-2}\)          & \(1.9480\times10^{-2}\) \\
\hline
\multirow{2}{*}{All four times}
& HN   & \(\mathbf{2.4070\times10^{-1}}\) & \(\mathbf{1.7668\times10^{-2}}\) & \(\mathbf{1.0258\times10^{-2}}\) \\
& Levermore  & \(2.7745\times10^{-1}\)          & \(2.0377\times10^{-2}\)          & \(1.1815\times10^{-2}\) \\
\hline
\end{tabular}
\end{table}

The HN closure uses the same architecture as in the steady case, with a total of 24,202 trainable parameters (Table~\ref{tab:hpnn_network_specs}). The model is trained on time-dependent Monte Carlo data collected over all time instances, and the training history (Fig.~\ref{fig:loss_ex2}) shows convergence of all tensor components together with the wave-speed penalty.
As shown in Fig.~\ref{fig:actual_predicted_ex2}, the HN closure accurately reproduces the radiation pressure tensor components, with predictions closely aligned to the reference values. This is confirmed quantitatively in Table~\ref{tab:beam_crossing_closure_metrics_p}, where the test errors are on the order of $5.13\times10^{-4}$ for $P_{xx}$, $5.13\times10^{-4}$ for $P_{yy}$, and $5.30\times10^{-4}$ for $P_{xy}$, with corresponding $R^2$ values of $0.9928$, $0.9934$, and $0.9812$. 
In contrast, the Levermore closure exhibits substantially larger errors for the diagonal components ($\approx 5.7\times10^{-2}$) with lower correlation ($R^2\approx0.68$--$0.71$), while also showing low accuracy for the off-diagonal component $P_{xy}$ ($R^2=0.8844$).

The spectral behavior of the learned closure is examined in Fig.~\ref{fig:histrogram_ex2}, where the directional Jacobian eigenvalues remain real across all sampled directions, with zero fraction of nonzero imaginary parts.
The resulting DG solutions are shown in Fig.~\ref{fig:solution_ex2}, and the corresponding quantitative errors are reported in Table~\ref{tab:beam_time_error_endpoint_src1}. 
Overall, the HN closure consistently provides a moderate improvement in accuracy over the Levermore closure across all examined time instances. For example, at $t=0.35$, the relative $L^2$ error is reduced from $5.28\times10^{-1}$ to $4.51\times10^{-1}$, while at $t=3.50$, it decreases from $2.30\times10^{-1}$ to $1.89\times10^{-1}$. 
However, compared to the lattice problem (Example~\ref{ex1}), the improvement observed in this beam-crossing test is less pronounced. The strongly anisotropic transport behavior induced by the crossing beams makes this problem considerably more challenging for low-order moment closures.

\subsection{Experiment 3: Crooked pipe problem}
\label{ex3}

We next consider a more challenging crooked-pipe benchmark designed to assess the closure performance in a strongly anisotropic transport environment. The computational configuration is shown in Fig.~\ref{fig:configuration_ex3}. The domain consists of an absorbing background region (\(\sigma_a=10,\ \sigma_s=0\)) surrounding a scattering pipe structure (\(\sigma_a=0,\ \sigma_s=1\)). Radiation is injected through a localized unit source patch located near the pipe entrance, and the resulting transport is constrained to propagate through multiple turns of the narrow channel. This geometry induces pronounced directional transport and nonlocal interactions, making accurate reconstruction of the radiation pressure tensor particularly important.

\begin{figure}[htp!]
    \centering
    \begin{subfigure}[b]{0.8\linewidth}
        \centering
        \includegraphics[width=\linewidth]{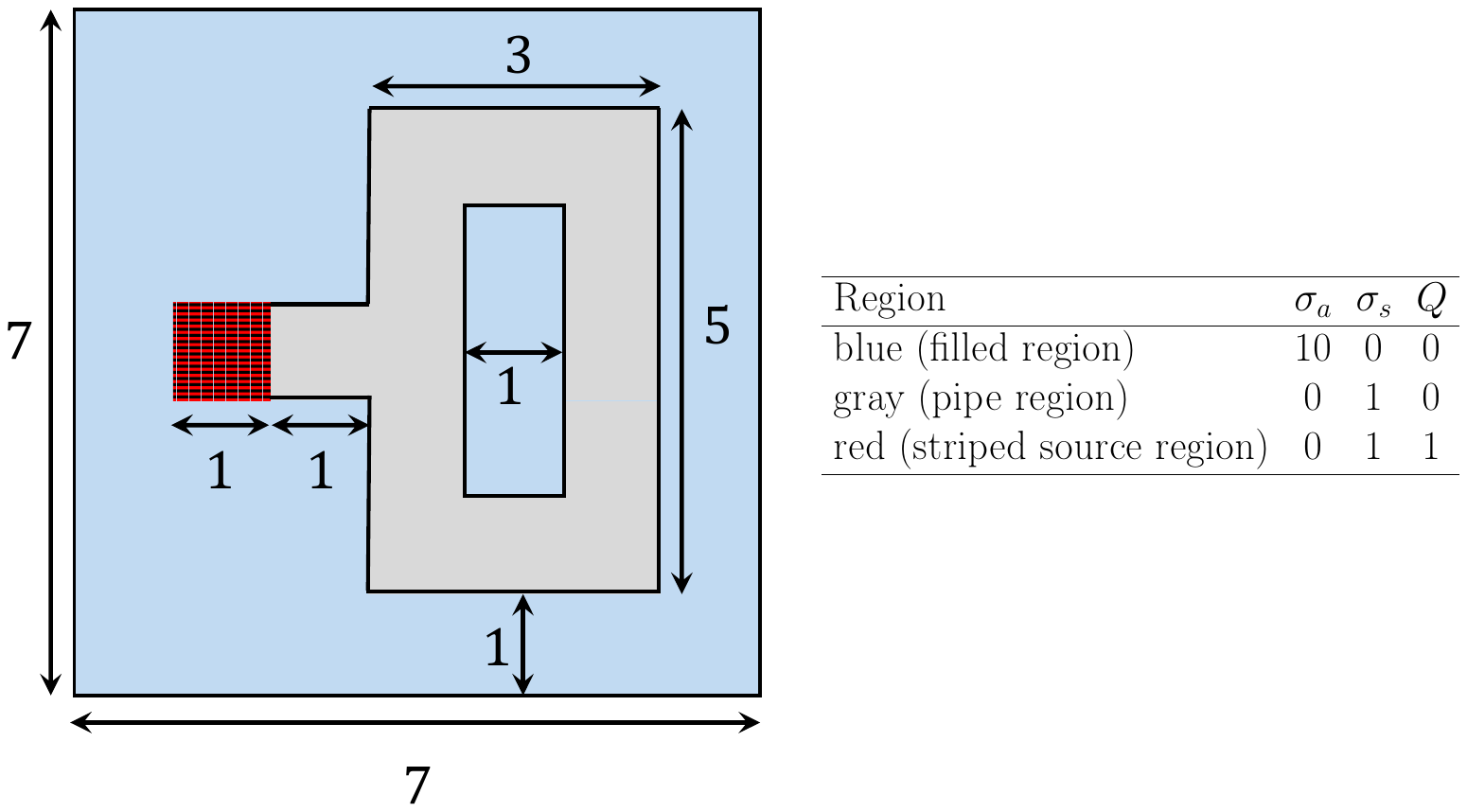}
    \end{subfigure}
    \caption{Crooked pipe configuration on a \(7\times7\) square domain \(\Omega=[0,7]\times[0,7]\), where \(x\in[0,7]\) and \(y\in[0,7]\). The blue filled region is absorbing, with \(\sigma_a=10\) and \(\sigma_s=0\). The gray pipe region is scattering, with \(\sigma_a=0\) and \(\sigma_s=1\). A unit square source patch, shown in red stripes, is located at \((1<x<2,\;3<y<4)\), where \(Q=1\); elsewhere \(Q=0\).}
    \label{fig:configuration_ex3}
\end{figure}

The HN closure employs the same neural network architecture used in the previous experiments, with the detailed network specifications summarized in Table~\ref{tab:hpnn_network_specs_3}. The model contains a total of 24,202 trainable parameters and is trained using Monte Carlo reference data. 
As shown in Fig.~\ref{fig:loss_ex3}, the losses associated with \(P_{xx}\), \(P_{yy}\), \(P_{xy}\), and the wave speed penalty decrease steadily throughout training, indicating stable optimization of both closure accuracy and the wave-speed constraint.

\begin{table}[htbp!]
\footnotesize
\renewcommand{\arraystretch}{1.08}
\setlength{\tabcolsep}{4pt}
\centering
\caption{Neural network specifications of the HN closure for the crooked pipe problem.}
\label{tab:hpnn_network_specs_3}
\begin{tabular}{lccc}
\hline
 & \makecell[c]{Entropy \\ network $\eta_\theta$}
 & \makecell[c]{Symmetric \\ network $\mathcal{N}_f$}
 & \makecell[c]{Integration constant \\ network $\mathcal{N}_A$} \\
\hline
Input dimension
& $3$
& $9$
& $9$ \\

Output dimension
& $1$
& $6$
& $3$ \\

Hidden layers
& $[128]$
& $[128,128]$
& $[64,64]$ \\

Activation
& \makecell[c]{Softplus}
& Tanh
& Tanh \\

Trainable parameters
& $641$
& $18{,}566$
& $4{,}995$ \\
\hline
Total trainable parameters
& \multicolumn{3}{c}{$24{,}202$} \\
\hline
\end{tabular}
\end{table}

\begin{figure}[htp!]
    \centering
    \begin{subfigure}[b]{0.8\linewidth}
        \centering
        \includegraphics[width=\linewidth]{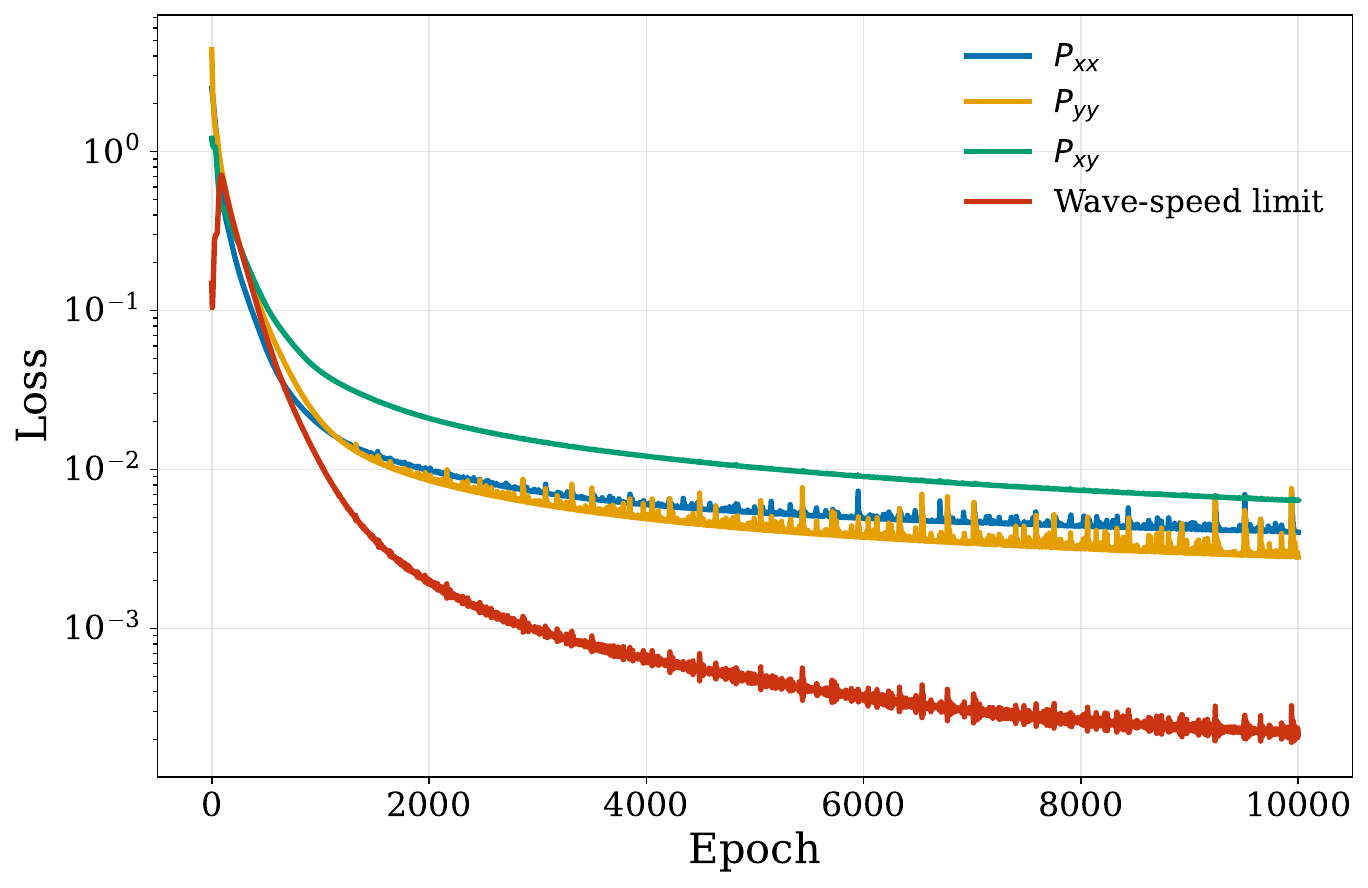}
    \end{subfigure} 
    \caption{Training loss histories of the HN closure, showing the component losses for $P_{xx}$, $P_{yy}$, $P_{xy}$, and the wave-speed limit penalty.}
    \label{fig:loss_ex3}
\end{figure}

\begin{figure}[htp!]
    \centering
    \begin{subfigure}[b]{0.8\linewidth}
        \centering
        \includegraphics[width=\linewidth]{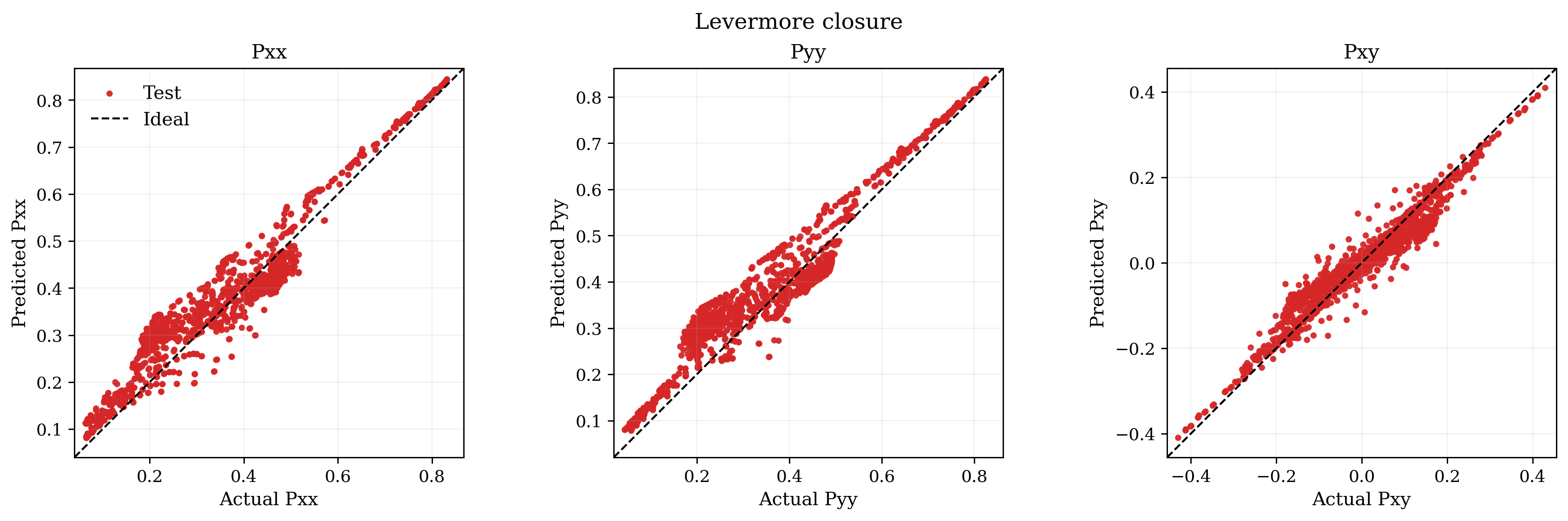}
        \subcaption{Levermore closure}
    \end{subfigure} \\
    \begin{subfigure}[b]{0.8\linewidth}
        \centering
        \includegraphics[width=\linewidth]{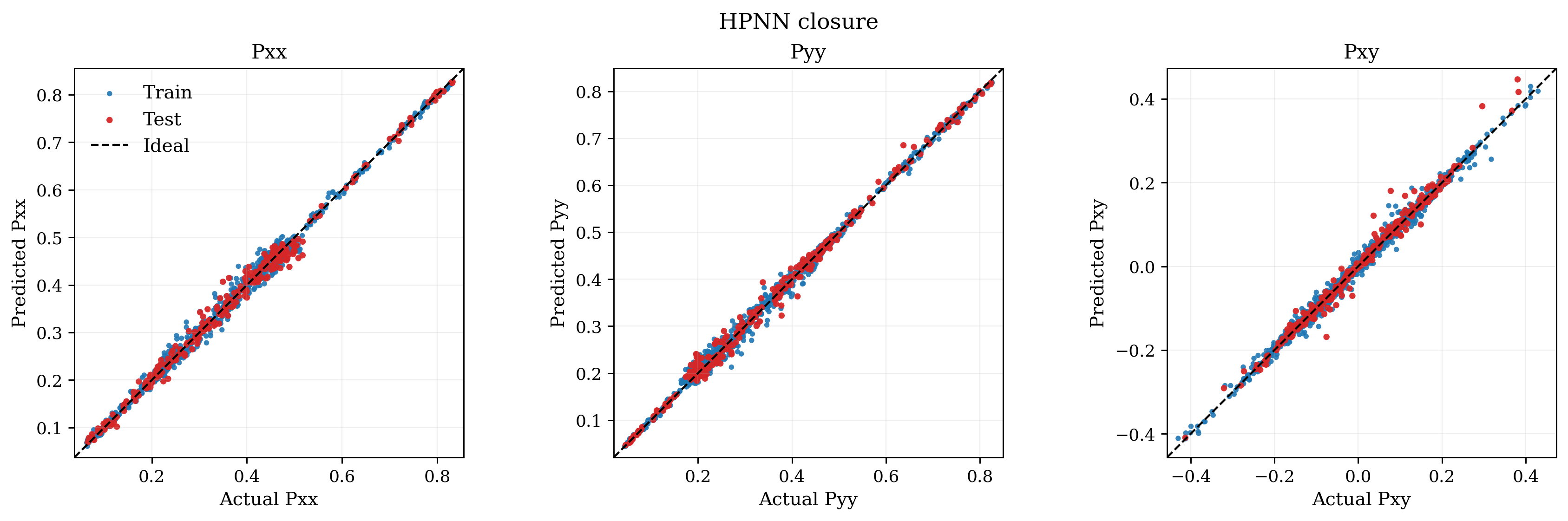}
        \subcaption{\textbf{HN closure}}
    \end{subfigure}
    \caption{Actual vs predicted scatter comparison of closure outputs $(P_{xx}, P_{yy}, P_{xy})$ for the two-dimensional steady radiation moment system (crooked pipe problem). In the HN panel, training samples are shown in blue and testing samples in red; the dashed line indicates the ideal relation $y=x$.}
    \label{fig:actual_predicted_ex3}
\end{figure}

The closure accuracy is summarized in Table~\ref{tab:crooked_pipe_closure_metrics_p}. On the test set, the HN closure achieves \(R^2\) values of 0.9936, 0.9945, and 0.9840 for \(P_{xx}\), \(P_{yy}\), and \(P_{xy}\), respectively. In comparison, the Levermore closure yields lower \(R^2\) values and larger prediction errors across all tensor components. These results indicate that the proposed closure provides a more accurate approximation of the radiation pressure tensor for the crooked pipe configuration.

The spectral properties of the learned closure are examined in Fig.~\ref{fig:histrogram_ex3}. For all sampled states and all four directions considered, the directional Jacobian exhibits real-valued eigenvalues with no nonzero imaginary components observed.
This behavior is consistent with the symmetrizable construction of the proposed HN closure and confirms that all sampled flux Jacobians remain free from complex eigenvalue violations.

\begin{table}[h!]
\footnotesize
\centering
\caption{Closure accuracy comparison for Example 3 (crooked pipe problem) on a $7\times7$ domain. The HN closure is trained using Monte Carlo reference data generated on a $70\times70$ crooked pipe simulation with a total of $8\times5\times10^6$ particles. The source is fixed at $Q=1$ on $(1<x<2,\;3<y<4)$, with 80\% of the data used for training and 20\% for testing. The neural network uses hidden width 128 with Tanh activation, quadrature points $N_q=4$, wave speed limiting in four directions, and is trained for 10{,}000 epochs. Errors are reported for $P_{xx}$, $P_{yy}$, and $P_{xy}$.}
\label{tab:crooked_pipe_closure_metrics_p}
\begin{tabular}{lllcccc}
\hline
Closure & Split & Component & MSE & $R^2$ & $\max\lvert P-\hat P\rvert$ & $\mathrm{mean}\,\lvert P-\hat P\rvert$ \\
\hline
\multirow{6}{*}{HN closure}
& \multirow{3}{*}{Train}
& $P_{xx}$ & $1.1535\times10^{-4}$ & $0.9960$ & $5.7151\times10^{-2}$ & $7.3368\times10^{-3}$ \\
& & $P_{yy}$ & $9.1294\times10^{-5}$ & $0.9972$ & $5.8339\times10^{-2}$ & $6.4190\times10^{-3}$ \\
& & $P_{xy}$ & $1.0932\times10^{-4}$ & $0.9937$ & $7.1976\times10^{-2}$ & $7.0609\times10^{-3}$ \\ \cline{2-7}
& \multirow{3}{*}{Test}
& $P_{xx}$ & $1.8354\times10^{-4}$ & $0.9936$ & $5.7327\times10^{-2}$ & $9.6903\times10^{-3}$ \\
& & $P_{yy}$ & $1.7560\times10^{-4}$ & $0.9945$ & $5.5433\times10^{-2}$ & $9.2756\times10^{-3}$ \\
& & $P_{xy}$ & $2.5415\times10^{-4}$ & $0.9840$ & $1.0250\times10^{-1}$ & $9.1376\times10^{-3}$ \\
\hline
\multirow{3}{*}{Levermore closure}
& \multirow{3}{*}{All (Test)}
& $P_{xx}$ & $3.1311\times10^{-3}$ & $0.8905$ & $1.2833\times10^{-1}$ & $4.6629\times10^{-2}$ \\
& & $P_{yy}$ & $4.0439\times10^{-3}$ & $0.8752$ & $1.3308\times10^{-1}$ & $5.4434\times10^{-2}$ \\
& & $P_{xy}$ & $1.0780\times10^{-3}$ & $0.9370$ & $1.3004\times10^{-1}$ & $2.3554\times10^{-2}$ \\
\hline
\end{tabular}
\end{table}

\begin{figure}[h!]
    \centering
    \begin{subfigure}[b]{0.8\linewidth}
        \centering
        \includegraphics[width=\linewidth]{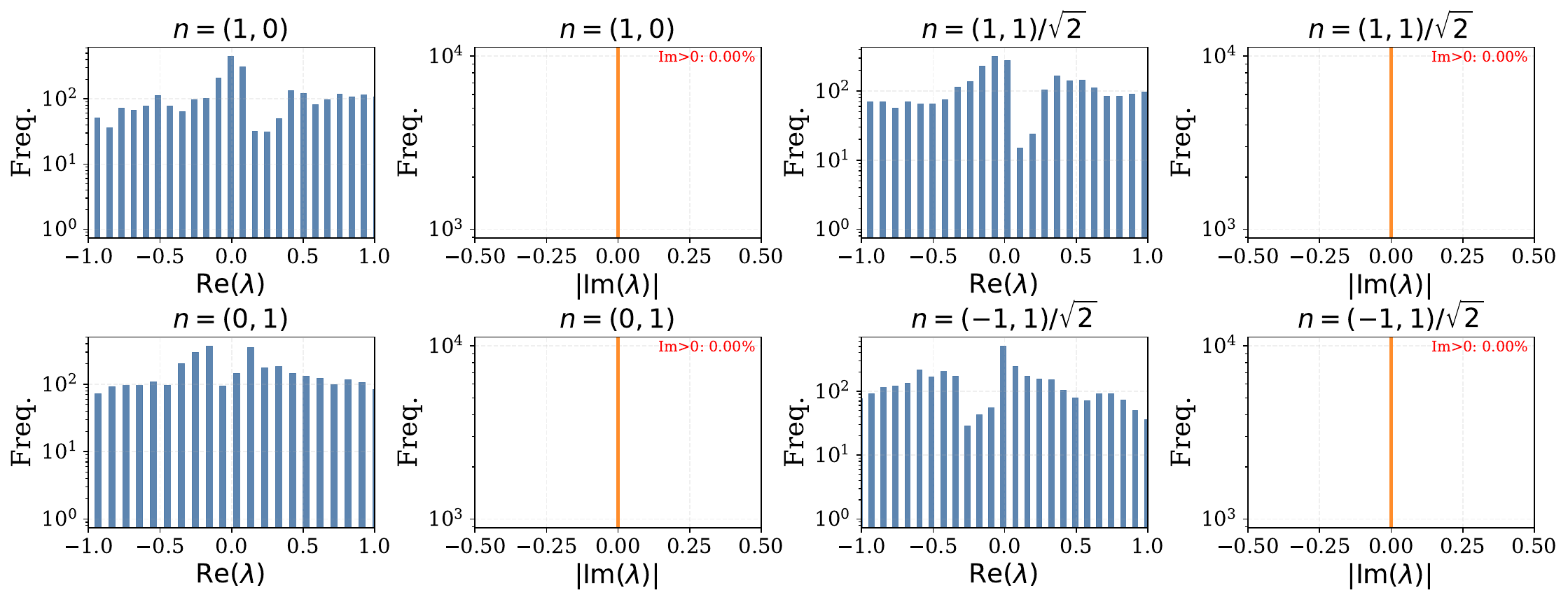}
    \end{subfigure}
    \caption{Eigenvalue histograms of the directional Jacobian $A(\mathbf{n})=\mathbf{n}_x J_x+\mathbf{n}_y J_y$ for the trained HN closure in the crooked pipe radiative transfer problem. Histograms are computed from a total of \(4900\) sampled states. For each direction $\mathbf{n}\in\{(1,0),(0,1),(1,1)/\sqrt{2},(-1,1)/\sqrt{2}\}$, the left panel shows the distribution of $\mathrm{Re}(\lambda)$ and the right panel shows $|\mathrm{Im}(\lambda)|$. The annotated fraction $\mathrm{Im}>0$ is $0.00\%$ in all cases, indicating real-valued spectra for these sampled states.}
    \label{fig:histrogram_ex3}
\end{figure}

The solution comparisons for three material configurations are shown in Fig.~\ref{fig:solution_separate_ex3}, and the corresponding error metrics are reported in Table~\ref{tab:three_case_error_ex3}. In all cases, both closures reproduce the overall transport pattern through the scattering pipe, while the HN-DG solution remains closer to the Monte Carlo reference. 
The error fields in Fig.~\ref{fig:solution_separate_ex3} show that the largest discrepancies are concentrated near the high-intensity source region, where the Levermore-DG exhibits numerical dissipation and tends to underestimate the energy distribution.
Compared to the Levermore-DG, the HN-DG exhibits smaller local errors and preserves sharper energy profiles, indicating reduced numerical dissipation.

\begin{figure}[h!]
    \centering
    \begin{subfigure}[b]{1\linewidth}
        \centering
        \includegraphics[width=\linewidth]{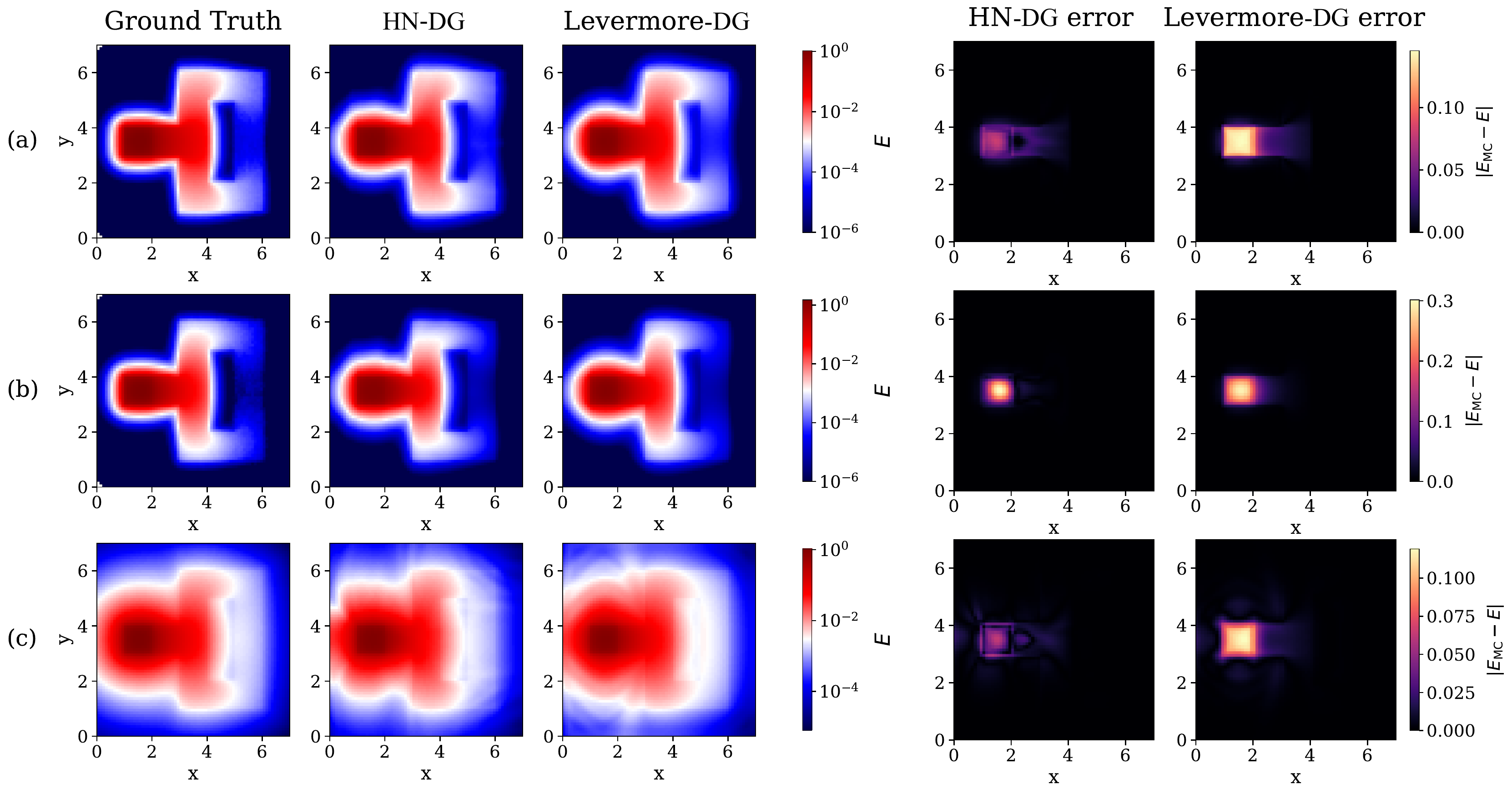}
    \end{subfigure}
    \caption{Solution comparison for the crooked pipe problem on a \(70\times70\) grid over \(\Omega=[0,7]\times[0,7]\), with source strength fixed at \(Q=1\) on \((1<x<2,\;3<y<4)\). Each row corresponds to one material configuration: (a) the baseline case with pipe coefficients \((\sigma_a,\sigma_s)=(0,1)\) and filled-region coefficients \((10,0)\); (b) the high-scattering pipe case with \((0,3)\) in the pipe and \((10,0)\) in the filled region; and (c) the barrier leakage case with \((0,1)\) in the pipe and \((2,0)\) in the filled region. Columns show the Monte Carlo reference, the HN-DG solution, the Levermore-DG solution, and the corresponding absolute-error fields \(|E_{\mathrm{MC}}-E|\) for HN-DG and Levermore-DG, respectively. All Monte Carlo references are ensemble averages based on \(8\times 5\times10^6\) particles.}
    \label{fig:solution_separate_ex3}
\end{figure}

Quantitatively, the HN-DG achieves lower relative \(L^2\) error, RMSE, and MAE for all three material settings. For the base configuration in Fig.~\ref{fig:solution_separate_ex3}a, the relative \(L^2\) error is reduced from \(1.66\times10^{-1}\) to \(7.74\times10^{-2}\). Similar improvements are observed for the high-scattering pipe case in Fig.~\ref{fig:solution_separate_ex3}b and the barrier leakage configuration in Fig.~\ref{fig:solution_separate_ex3}c. These results indicate that the improved closure accuracy translates into improved solution accuracy across a range of material parameters.

\begin{table}[h!]
\centering
\footnotesize
\caption{Error comparison for three material settings in the crooked pipe problem (Example 3). Here, Pipe denotes the white scattering pipe region with material coefficients \((\sigma_a,\sigma_s)\), and Fill denotes the blue filled region with material coefficients \((\sigma_a,\sigma_s)\). The reference is the Monte Carlo solution computed using \(8\times5\times10^6\) particles on a \(70\times70\) grid. RMSE (root mean square error), MAE (mean absolute error), and the relative \(L^2\) error are reported. Smaller error is highlighted in bold for each metric for each setting.}
\label{tab:three_case_error_ex3}
\begin{tabular}{llccc}
\hline
Setting & Closure & Relative \(L^2\) error & RMSE & MAE \\
\hline
\multirow{2}{*}{Fig.~\ref{fig:solution_separate_ex3}a: Pipe\((0,1)\), Fill\((10,0)\)}
& HN      & \(\mathbf{7.7416\times10^{-2}}\) & \(\mathbf{9.9086\times10^{-3}}\) & \(\mathbf{2.6991\times10^{-3}}\) \\
& Levermore & \(1.6559\times10^{-1}\)          & \(2.1194\times10^{-2}\)          & \(5.0183\times10^{-3}\) \\
\hline
\multirow{2}{*}{Fig.~\ref{fig:solution_separate_ex3}b: Pipe\((0,3)\), Fill\((10,0)\)}
& HN      & \(\mathbf{1.5211\times10^{-1}}\) & \(\mathbf{2.5647\times10^{-2}}\) & \(\mathbf{4.3513\times10^{-3}}\) \\
& Levermore & \(1.9839\times10^{-1}\)          & \(3.3450\times10^{-2}\)          & \(6.6687\times10^{-3}\) \\
\hline
\multirow{2}{*}{Fig.~\ref{fig:solution_separate_ex3}c: Pipe\((0,1)\), Fill\((2,0)\)}
& HN      & \(\mathbf{5.2037\times10^{-2}}\) & \(\mathbf{6.9305\times10^{-3}}\) & \(\mathbf{2.3118\times10^{-3}}\) \\
& Levermore & \(1.2718\times10^{-1}\)          & \(1.6938\times10^{-2}\)          & \(5.0180\times10^{-3}\) \\
\hline
\end{tabular}
\end{table}

\section{Conclusion}
\label{sec:conclusion}
In this work, we proposed a hyperbolic neural closure for the M1 radiative transfer system by constructing the closure through a structured flux Jacobian representation. Instead of directly regressing the radiation pressure tensor, the method learns a Jacobian field using a strictly convex entropy network and a symmetric neural network, and recovers the closure through path integration. 
This construction preserves the M1 flux structure while ensuring that, for each frozen set of gradient-based features, the directional Jacobian is similar to a symmetric matrix.
Another important advantage of the proposed neural closure is its ability to incorporate gradient-based nonlocal information that is not available in conventional entropy-based analytical closures. As illustrated in Fig.~\ref{fig:actual_predicted_ex1}, incorporating gradient information significantly improves closure accuracy compared with both the analytical Levermore closure.

The numerical experiments demonstrate that the proposed approach improves both closure modeling and DG solution quality while maintaining stable wave propagation. In the lattice benchmark (Experiment~\ref{ex1}), the learned closure achieves substantially lower tensor prediction errors than the classical Levermore closure and yields consistently improved numerical solutions across multiple material configurations. 
In the beam-crossing problem (Experiment~\ref{ex2}), the learned closure provides improved representation of anisotropic transport behavior and leads to lower average solution errors under strongly directional transport conditions. 
In the crooked pipe problem (Experiment~\ref{ex3}), the learned closure consistently produces smaller solution errors across all tested material settings.

Overall, the proposed method provides a data-driven closure framework with real-valued directional Jacobian spectra through a symmetrizable construction. The resulting closure improves both closure accuracy and solution quality while remaining compatible with DG-based radiative transfer solvers. Future work includes improving the representation of strongly anisotropic transport and extending the framework toward a unified closure model applicable to a broader range of geometries and boundary conditions.

\section*{Acknowledgment}
We would like to thank the support of National Science Foundation (DMS-2533878, DMS-2053746, DMS-2134209, ECCS-2328241, CBET-2347401 and OAC-2311848), and U.S.~Department of Energy (DOE) Office of Science Advanced Scientific Computing Research program DE-SC0023161, the SciDAC LEADS Institute, and DOE–Fusion Energy Science, under grant number: DE-SC0024583.

\bibliographystyle{unsrt}
\bibliography{mybib}

\appendix

\section{Sensitivity analysis}

To examine the relative importance of the input variables, we perform a correlation-based sensitivity analysis by evaluating the normalized correlation between each input feature and the pressure tensor components. Figure~\ref{fig:sensitivity_3dbar_journal_augmented} reports the relative influence of the state variables $(E,F_x,F_y)$ and the gradient-based features $(\nabla\!\cdot\!F,(\nabla\times F)_z,\|\nabla E\|^2)$ on each component of the predicted pressure tensor.

The results show that the diagonal components $P_{xx}$ and $P_{yy}$ are primarily influenced by the local state variables, with comparatively weaker dependence on gradient-based features. In contrast, the off-diagonal component $P_{xy}$ exhibits strong sensitivity to the gradient-based inputs, particularly $\nabla\!\cdot\!F$, $(\nabla\times F)_z$, and $\|\nabla E\|^2$. This indicates that the accurate representation of anisotropic stress requires local spatial variation information that cannot be captured solely by $(E,F_x,F_y)$.
These observations provide empirical support for the inclusion of gradient-based features in the closure model. In particular, they are consistent with the design choice in Sec.~\ref{Sec:method}, where the closure is augmented by $G(u)$ to improve the representation of nonlocal and anisotropic transport effects.

\begin{figure}[htp!]
    \centering
    \begin{subfigure}[b]{0.5\linewidth}
        \centering
        \includegraphics[width=\linewidth]{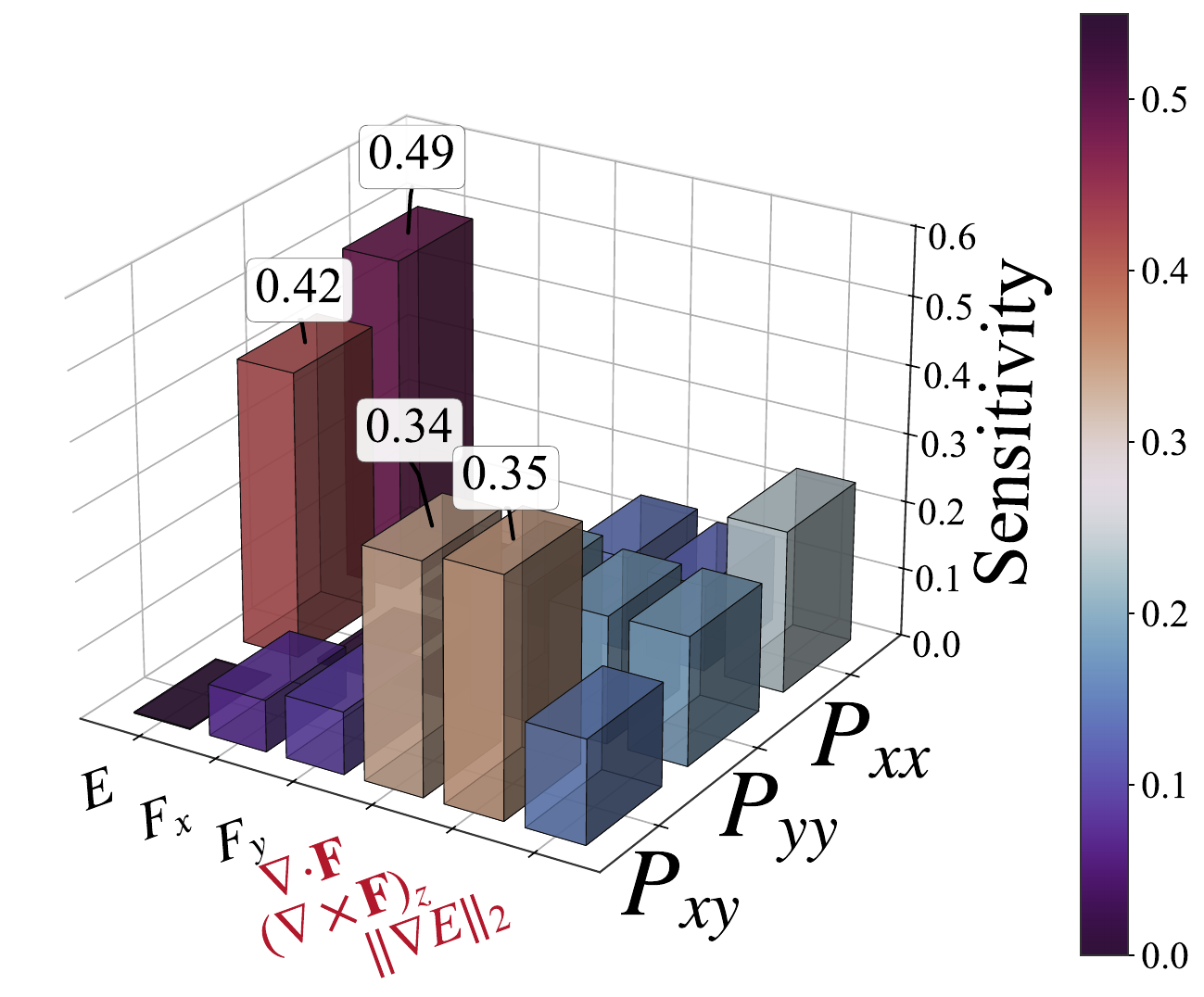}
    \end{subfigure}
\caption{Sensitivity analysis:
The bars indicate the relative influence of the input variables
$(E,F_x,F_y,\nabla\!\cdot\!F,(\nabla\times F)_z,\|\nabla E\|^2)$
on the predicted pressure tensor components $(P_{xx},P_{yy},P_{xy})$.
Notably, the off-diagonal component $P_{xy}$ shows strong dependence on gradient-based features,
such as $\nabla\!\cdot\!F$, $(\nabla\times F)_z$ and $\|\nabla E\|^2$, indicating that anisotropic stress requires local spatial information beyond $(E,F_x,F_y)$.}
\label{fig:sensitivity_3dbar_journal_augmented}
\end{figure}

\section{Determinant and trace comparison of closure models}

To examine the structural behavior of the learned closure, we compare the determinant and trace of the Eddington tensor produced by the HN closure with those of the classical Levermore, MEFD, and Janka closures, using the Monte Carlo solutions as reference. Figure~\ref{fig:det_trace} shows that the analytic closures follow smooth low-dimensional curves determined by their explicit functional forms, whereas both the Monte Carlo reference and the HN closure exhibit a broader distribution over sampled admissible states. The HN closure closely follows the Monte Carlo reference in both determinant and trace, capturing the structural variability that cannot be represented by the analytic closures. At the same time, it preserves spectral characteristics comparable to the classical closures, with the determinant exhibiting a similar decreasing trend in the high flux-factor regime and the trace remaining bounded over the sampled admissible states.

\begin{figure}[htp!]
    \centering
    \begin{subfigure}[b]{1\linewidth}
        \centering
        \includegraphics[width=\linewidth]{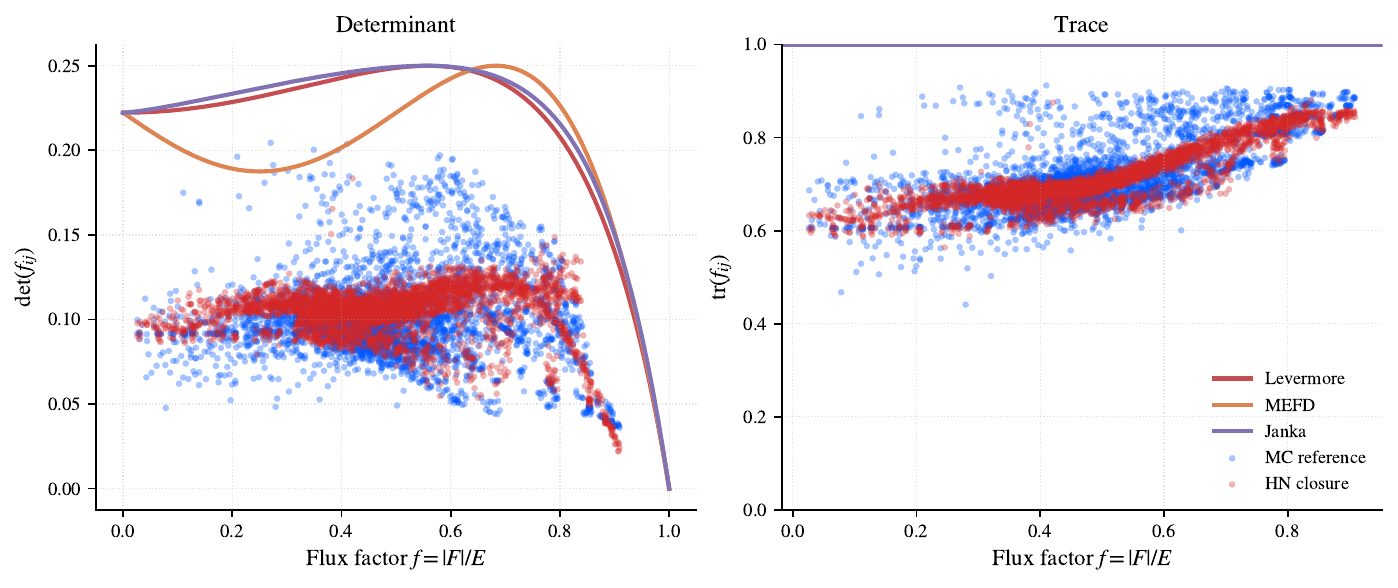}
    \end{subfigure}
\caption{
Determinant and trace of the Eddington tensor for the proposed HN closure, compared with the classical Levermore, MEFD, and Janka closures over sampled admissible states. 
The blue scattered points denote the Monte Carlo reference data, while the red scattered points denote predictions of the trained HN closure. The solid curves correspond to the analytical closures listed in the legend. 
A notable feature is that the HN closure follows the Monte Carlo reference more closely than the analytic closures in both determinant and trace.
}
\label{fig:det_trace}
\end{figure}

\section{Matrix formulation for the hyperbolic neural closure}

To make the structure of the proposed closure explicit, we express the pressure gradients in terms of the entropy Hessian and the factorized Jacobian representation. Recall that the directional Jacobians are constructed as $J_x = S_x H$ and $J_y = S_y H$ by Eq.~\eqref{eq:m1_Sx_Sy_explicit}, where $H(\mathbf{u})$ is the Hessian of the learned entropy. This formulation allows the gradients of the pressure tensor components to be directly extracted from specific rows of the Jacobians. 

Writing
\begin{equation}
H(\mathbf{u})
=
\begin{bmatrix}
\frac{\partial^2 \eta_\theta(\mathbf{u})}{\partial E^2} &
\frac{\partial^2 \eta_\theta(\mathbf{u})}{\partial E\,\partial F_x} &
\frac{\partial^2 \eta_\theta(\mathbf{u})}{\partial E\,\partial F_y}\\
\frac{\partial^2 \eta_\theta(\mathbf{u})}{\partial F_x\,\partial E} &
\frac{\partial^2 \eta_\theta(\mathbf{u})}{\partial F_x^2} &
\frac{\partial^2 \eta_\theta(\mathbf{u})}{\partial F_x\,\partial F_y}\\
\frac{\partial^2 \eta_\theta(\mathbf{u})}{\partial F_y\,\partial E} &
\frac{\partial^2 \eta_\theta(\mathbf{u})}{\partial F_y\,\partial F_x} &
\frac{\partial^2 \eta_\theta(\mathbf{u})}{\partial F_y^2}
\end{bmatrix},
\end{equation}
the pressure gradients are obtained from the lower rows of $J_x$ and $J_y$, namely $\nabla_{\mathbf{u}} P_{xx} = (J_x)_{2,:}$, $\nabla_{\mathbf{u}} P_{xy}^{(x)} = (J_x)_{3,:}$, $\nabla_{\mathbf{u}} P_{xy}^{(y)} = (J_y)_{2,:}$, and $\nabla_{\mathbf{u}} P_{yy} = (J_y)_{3,:}$.
More explicitly,
\begin{equation}
\nabla_{\mathbf{u}} P_{xx}
=
\begin{bmatrix}
\frac{\partial F_x}{\partial v_2}\frac{\partial^2 \eta_\theta(\mathbf{u})}{\partial E^2}
+
s_{22}^{(x)}\frac{\partial^2 \eta_\theta(\mathbf{u})}{\partial E\,\partial F_x}
+
s_{23}^{(x)}\frac{\partial^2 \eta_\theta(\mathbf{u})}{\partial E\,\partial F_y}\\
\frac{\partial F_x}{\partial v_2}\frac{\partial^2 \eta_\theta(\mathbf{u})}{\partial E\,\partial F_x}
+
s_{22}^{(x)}\frac{\partial^2 \eta_\theta(\mathbf{u})}{\partial F_x^2}
+
s_{23}^{(x)}\frac{\partial^2 \eta_\theta(\mathbf{u})}{\partial F_x\,\partial F_y}\\
\frac{\partial F_x}{\partial v_2}\frac{\partial^2 \eta_\theta(\mathbf{u})}{\partial E\,\partial F_y}
+
s_{22}^{(x)}\frac{\partial^2 \eta_\theta(\mathbf{u})}{\partial F_x\,\partial F_y}
+
s_{23}^{(x)}\frac{\partial^2 \eta_\theta(\mathbf{u})}{\partial F_y^2}
\end{bmatrix}^{\!\top},
\label{eq:m1_grad_pxx_explicit}
\end{equation}
\begin{equation}
\nabla_{\mathbf{u}} P_{xy}^{(x)}
=
\begin{bmatrix}
\frac{\partial F_y}{\partial v_2}\frac{\partial^2 \eta_\theta(\mathbf{u})}{\partial E^2}
+
s_{23}^{(x)}\frac{\partial^2 \eta_\theta(\mathbf{u})}{\partial E\,\partial F_x}
+
s_{33}^{(x)}\frac{\partial^2 \eta_\theta(\mathbf{u})}{\partial E\,\partial F_y}\\
\frac{\partial F_y}{\partial v_2}\frac{\partial^2 \eta_\theta(\mathbf{u})}{\partial E\,\partial F_x}
+
s_{23}^{(x)}\frac{\partial^2 \eta_\theta(\mathbf{u})}{\partial F_x^2}
+
s_{33}^{(x)}\frac{\partial^2 \eta_\theta(\mathbf{u})}{\partial F_x\,\partial F_y}\\
\frac{\partial F_y}{\partial v_2}\frac{\partial^2 \eta_\theta(\mathbf{u})}{\partial E\,\partial F_y}
+
s_{23}^{(x)}\frac{\partial^2 \eta_\theta(\mathbf{u})}{\partial F_x\,\partial F_y}
+
s_{33}^{(x)}\frac{\partial^2 \eta_\theta(\mathbf{u})}{\partial F_y^2}
\end{bmatrix}^{\!\top},
\label{eq:m1_grad_pxyx_explicit}
\end{equation}
\begin{equation}
\nabla_{\mathbf{u}} P_{xy}^{(y)}
=
\begin{bmatrix}
\frac{\partial F_x}{\partial v_3}\frac{\partial^2 \eta_\theta(\mathbf{u})}{\partial E^2}
+
s_{22}^{(y)}\frac{\partial^2 \eta_\theta(\mathbf{u})}{\partial E\,\partial F_x}
+
s_{23}^{(y)}\frac{\partial^2 \eta_\theta(\mathbf{u})}{\partial E\,\partial F_y}\\
\frac{\partial F_x}{\partial v_3}\frac{\partial^2 \eta_\theta(\mathbf{u})}{\partial E\,\partial F_x}
+
s_{22}^{(y)}\frac{\partial^2 \eta_\theta(\mathbf{u})}{\partial F_x^2}
+
s_{23}^{(y)}\frac{\partial^2 \eta_\theta(\mathbf{u})}{\partial F_x\,\partial F_y}\\
\frac{\partial F_x}{\partial v_3}\frac{\partial^2 \eta_\theta(\mathbf{u})}{\partial E\,\partial F_y}
+
s_{22}^{(y)}\frac{\partial^2 \eta_\theta(\mathbf{u})}{\partial F_x\,\partial F_y}
+
s_{23}^{(y)}\frac{\partial^2 \eta_\theta(\mathbf{u})}{\partial F_y^2}
\end{bmatrix}^{\!\top},
\label{eq:m1_grad_pxyy_explicit}
\end{equation}
\begin{equation}
\nabla_{\mathbf{u}} P_{yy}
=
\begin{bmatrix}
\frac{\partial F_y}{\partial v_3}\frac{\partial^2 \eta_\theta(\mathbf{u})}{\partial E^2}
+
s_{23}^{(y)}\frac{\partial^2 \eta_\theta(\mathbf{u})}{\partial E\,\partial F_x}
+
s_{33}^{(y)}\frac{\partial^2 \eta_\theta(\mathbf{u})}{\partial E\,\partial F_y}\\
\frac{\partial F_y}{\partial v_3}\frac{\partial^2 \eta_\theta(\mathbf{u})}{\partial E\,\partial F_x}
+
s_{23}^{(y)}\frac{\partial^2 \eta_\theta(\mathbf{u})}{\partial F_x^2}
+
s_{33}^{(y)}\frac{\partial^2 \eta_\theta(\mathbf{u})}{\partial F_x\,\partial F_y}\\
\frac{\partial F_y}{\partial v_3}\frac{\partial^2 \eta_\theta(\mathbf{u})}{\partial E\,\partial F_y}
+
s_{23}^{(y)}\frac{\partial^2 \eta_\theta(\mathbf{u})}{\partial F_x\,\partial F_y}
+
s_{33}^{(y)}\frac{\partial^2 \eta_\theta(\mathbf{u})}{\partial F_y^2}
\end{bmatrix}^{\!\top}.
\label{eq:m1_grad_pyy_explicit}
\end{equation}

\section{Line integration}

The pressure tensor is reconstructed from its gradients by line integration in the state space. For implementation, we consider a simple reference path connecting the origin to the target state, parameterized as $t\mathbf{u}$ for $t\in[0,1]$. We emphasize that this reconstruction is not intended to enforce a path-independent representation of the closure. In general, the gradient field obtained from the learned Jacobian may not be exactly integrable, and different integration paths could lead to slightly different values.

In the proposed framework, the primary structural property is enforced at the level of the flux Jacobian, where real eigenvalues are guaranteed through the symmetrization mechanism. The line integration step is therefore used as a consistent and practical procedure to recover the pressure tensor from the learned gradients, and serves mainly to match the reference closure values. In particular, the accuracy of the reconstructed pressure is controlled by the data-driven training objective, rather than by exact path independence of the gradient field.
Under this choice of reference path, Eq.~\eqref{eq:m1_pressure_line_integrals} can be written componentwise as
\begin{equation}
\begin{aligned}
P_{xx}(\mathbf{u},\mathcal{G}(\mathbf{u}))
=\;&
P_{xx}^0
+
\int_0^1
\Bigl(
\frac{\partial F_x}{\partial v_2}\frac{\partial^2 \eta_\theta(t\mathbf{u})}{\partial E^2}
+
s_{22}^{(x)}\frac{\partial^2 \eta_\theta(t\mathbf{u})}{\partial E\,\partial F_x}
+
s_{23}^{(x)}\frac{\partial^2 \eta_\theta(t\mathbf{u})}{\partial E\,\partial F_y}
\Bigr)_{(t\mathbf{u},\mathcal{G}(\mathbf{u}))}
E\,dt \\
&+
\int_0^1
\Bigl(
\frac{\partial F_x}{\partial v_2}\frac{\partial^2 \eta_\theta(t\mathbf{u})}{\partial E\,\partial F_x}
+
s_{22}^{(x)}\frac{\partial^2 \eta_\theta(t\mathbf{u})}{\partial F_x^2}
+
s_{23}^{(x)}\frac{\partial^2 \eta_\theta(t\mathbf{u})}{\partial F_x\,\partial F_y}
\Bigr)_{(t\mathbf{u},\mathcal{G}(\mathbf{u}))}
F_x\,dt \\
&+
\int_0^1
\Bigl(
\frac{\partial F_x}{\partial v_2}\frac{\partial^2 \eta_\theta(t\mathbf{u})}{\partial E\,\partial F_y}
+
s_{22}^{(x)}\frac{\partial^2 \eta_\theta(t\mathbf{u})}{\partial F_x\,\partial F_y}
+
s_{23}^{(x)}\frac{\partial^2 \eta_\theta(t\mathbf{u})}{\partial F_y^2}
\Bigr)_{(t\mathbf{u},\mathcal{G}(\mathbf{u}))}
F_y\,dt,
\end{aligned}
\label{eq:m1_pxx_component_integral}
\end{equation}
\begin{equation}
\begin{aligned}
P_{xy}^{(x)}(\mathbf{u},\mathcal{G}(\mathbf{u}))
=\;&
P_{xy}^0
+
\int_0^1
\Bigl(
\frac{\partial F_y}{\partial v_2}\frac{\partial^2 \eta_\theta(t\mathbf{u})}{\partial E^2}
+
s_{23}^{(x)}\frac{\partial^2 \eta_\theta(t\mathbf{u})}{\partial E\,\partial F_x}
+
s_{33}^{(x)}\frac{\partial^2 \eta_\theta(t\mathbf{u})}{\partial E\,\partial F_y}
\Bigr)_{(t\mathbf{u},\mathcal{G}(\mathbf{u}))}
E\,dt \\
&+
\int_0^1
\Bigl(
\frac{\partial F_y}{\partial v_2}\frac{\partial^2 \eta_\theta(t\mathbf{u})}{\partial E\,\partial F_x}
+
s_{23}^{(x)}\frac{\partial^2 \eta_\theta(t\mathbf{u})}{\partial F_x^2}
+
s_{33}^{(x)}\frac{\partial^2 \eta_\theta(t\mathbf{u})}{\partial F_x\,\partial F_y}
\Bigr)_{(t\mathbf{u},\mathcal{G}(\mathbf{u}))}
F_x\,dt \\
&+
\int_0^1
\Bigl(
\frac{\partial F_y}{\partial v_2}\frac{\partial^2 \eta_\theta(t\mathbf{u})}{\partial E\,\partial F_y}
+
s_{23}^{(x)}\frac{\partial^2 \eta_\theta(t\mathbf{u})}{\partial F_x\,\partial F_y}
+
s_{33}^{(x)}\frac{\partial^2 \eta_\theta(t\mathbf{u})}{\partial F_y^2}
\Bigr)_{(t\mathbf{u},\mathcal{G}(\mathbf{u}))}
F_y\,dt,
\end{aligned}
\label{eq:m1_pxyx_component_integral}
\end{equation}
\begin{equation}
\begin{aligned}
P_{xy}^{(y)}(\mathbf{u},\mathcal{G}(\mathbf{u}))
=\;&
P_{xy}^0
+
\int_0^1
\Bigl(
\frac{\partial F_x}{\partial v_3}\frac{\partial^2 \eta_\theta(t\mathbf{u})}{\partial E^2}
+
s_{22}^{(y)}\frac{\partial^2 \eta_\theta(t\mathbf{u})}{\partial E\,\partial F_x}
+
s_{23}^{(y)}\frac{\partial^2 \eta_\theta(t\mathbf{u})}{\partial E\,\partial F_y}
\Bigr)_{(t\mathbf{u},\mathcal{G}(\mathbf{u}))}
E\,dt \\
&+
\int_0^1
\Bigl(
\frac{\partial F_x}{\partial v_3}\frac{\partial^2 \eta_\theta(t\mathbf{u})}{\partial E\,\partial F_x}
+
s_{22}^{(y)}\frac{\partial^2 \eta_\theta(t\mathbf{u})}{\partial F_x^2}
+
s_{23}^{(y)}\frac{\partial^2 \eta_\theta(t\mathbf{u})}{\partial F_x\,\partial F_y}
\Bigr)_{(t\mathbf{u},\mathcal{G}(\mathbf{u}))}
F_x\,dt \\
&+
\int_0^1
\Bigl(
\frac{\partial F_x}{\partial v_3}\frac{\partial^2 \eta_\theta(t\mathbf{u})}{\partial E\,\partial F_y}
+
s_{22}^{(y)}\frac{\partial^2 \eta_\theta(t\mathbf{u})}{\partial F_x\,\partial F_y}
+
s_{23}^{(y)}\frac{\partial^2 \eta_\theta(t\mathbf{u})}{\partial F_y^2}
\Bigr)_{(t\mathbf{u},\mathcal{G}(\mathbf{u}))}
F_y\,dt,
\end{aligned}
\label{eq:m1_pxyy_component_integral}
\end{equation}
\begin{equation}
\begin{aligned}
P_{yy}(\mathbf{u},\mathcal{G}(\mathbf{u}))
=\;&
P_{yy}^0
+
\int_0^1
\Bigl(
\frac{\partial F_y}{\partial v_3}\frac{\partial^2 \eta_\theta(t\mathbf{u})}{\partial E^2}
+
s_{23}^{(y)}\frac{\partial^2 \eta_\theta(t\mathbf{u})}{\partial E\,\partial F_x}
+
s_{33}^{(y)}\frac{\partial^2 \eta_\theta(t\mathbf{u})}{\partial E\,\partial F_y}
\Bigr)_{(t\mathbf{u},\mathcal{G}(\mathbf{u}))}
E\,dt \\
&+
\int_0^1
\Bigl(
\frac{\partial F_y}{\partial v_3}\frac{\partial^2 \eta_\theta(t\mathbf{u})}{\partial E\,\partial F_x}
+
s_{23}^{(y)}\frac{\partial^2 \eta_\theta(t\mathbf{u})}{\partial F_x^2}
+
s_{33}^{(y)}\frac{\partial^2 \eta_\theta(t\mathbf{u})}{\partial F_x\,\partial F_y}
\Bigr)_{(t\mathbf{u},\mathcal{G}(\mathbf{u}))}
F_x\,dt \\
&+
\int_0^1
\Bigl(
\frac{\partial F_y}{\partial v_3}\frac{\partial^2 \eta_\theta(t\mathbf{u})}{\partial E\,\partial F_y}
+
s_{23}^{(y)}\frac{\partial^2 \eta_\theta(t\mathbf{u})}{\partial F_x\,\partial F_y}
+
s_{33}^{(y)}\frac{\partial^2 \eta_\theta(t\mathbf{u})}{\partial F_y^2}
\Bigr)_{(t\mathbf{u},\mathcal{G}(\mathbf{u}))}
F_y\,dt.
\end{aligned}
\label{eq:m1_pyy_component_integral}
\end{equation}

% ====
% \section{Quadratic-path reconstruction}
\section{Path dependence study}

To assess the sensitivity of the closure reconstruction to the choice of path, one may also consider a curved quadratic path in state space. For example, letting $\mathbf{u}=(E,F_x,F_y)^\top$, we define
\begin{equation}
\gamma_\beta(t)
=
\begin{pmatrix}
\bigl(t+\beta t(1-t)\bigr)E\\
tF_x\\
tF_y
\end{pmatrix},
\qquad t\in[0,1],
\label{eq:quadratic_path}
\end{equation}
where $\beta$ controls the amount of curvature. The endpoints are unchanged, namely
\[
\gamma_\beta(0)=\mathbf{0},
\qquad
\gamma_\beta(1)=\mathbf{u}.
\]

With this path, the reconstruction becomes
\begin{equation}
P_{ij}(\mathbf{u},\mathcal{G}(\mathbf{u}))
=
P_{ij}^0
+
\int_0^1
\nabla_{\mathbf{u}}P_{ij}(\gamma_\beta(t),\mathcal{G}(\mathbf{u}))
\cdot
\dot{\gamma}_\beta(t)\,dt,
\qquad (i,j)\in\{(x,x),(y,y),(x,y)\},
\label{eq:quadratic_path_integral}
\end{equation}
with
\begin{equation}
\dot{\gamma}_\beta(t)
=
\begin{pmatrix}
\bigl(1+\beta(1-2t)\bigr)E\\
F_x\\
F_y
\end{pmatrix}.
\label{eq:quadratic_path_derivative}
\end{equation}
This quadratic-path reconstruction is not used in the main method, but provides a simple diagnostic for examining the path sensitivity of the learned gradient field.

\begin{figure}[h!]
    \centering
    \includegraphics[width=\linewidth]{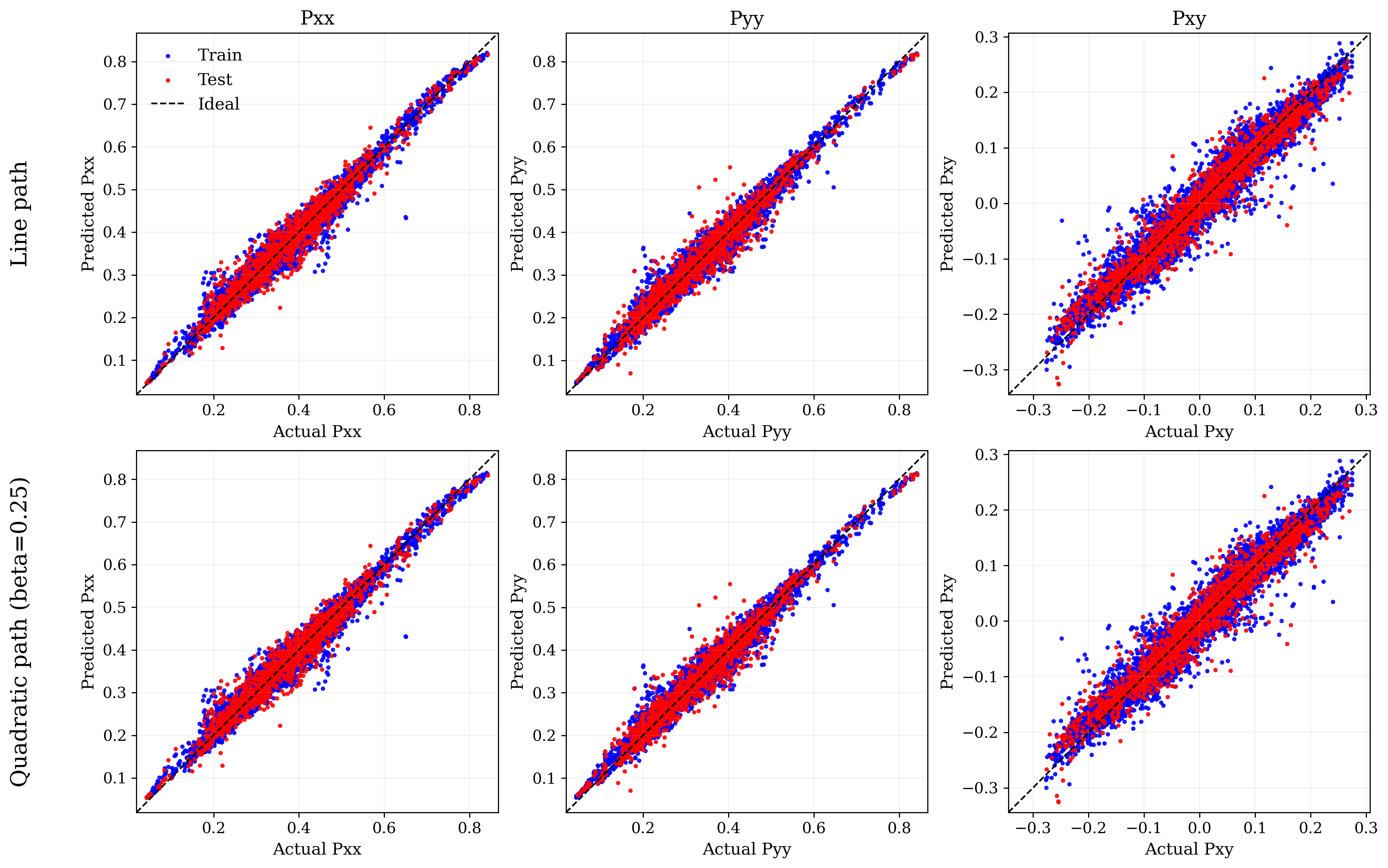}
    \caption{Scatter plots of reference and predicted closure outputs for the HN closure with gradient information in Experiment~\ref{ex1}, comparing line and quadratic integration paths. The top row shows results obtained using line integration, while the bottom row shows results obtained using quadratic integration with $\beta=0.25$. Training samples are shown in blue, testing samples in red, and the dashed line indicates the ideal relation $y=x$.}
    \label{fig:ex1_line_vs_quadratic_scatter}
\end{figure}

\begin{table}[h!]
\footnotesize
\centering
\caption{Comparison of closure accuracy for the lattice radiative transfer problem (Fig.~\ref{fig:configuration_ex1}) on a $7\times7$ domain, using line and quadratic integration paths for the HN closure with gradient information. The closure is trained using Monte Carlo reference data generated on a $100\times100$ lattice with $N_p=10^7$ particles, with 80\% of the data used for training and 20\% for testing. The neural network uses hidden width 128 with Tanh activation and quadrature points $N_q=4$. The results shown here are evaluated after 10{,}000 training epochs. Errors are reported for $P_{xx}$, $P_{yy}$, and $P_{xy}$. The quadratic path uses $\beta=0.25$.}
\label{tab:closure_metrics_path_compare_ex1_grad}
\begin{tabular}{lllcccc}
\hline
Closure & Split & Component & MSE & $R^2$ & $\max\lvert P-\hat P\rvert$ & $\mathrm{mean}\,\lvert P-\hat P\rvert$ \\
\hline
\multirow{6}{*}{\makecell{HN closure\\(line path)}}
& \multirow{3}{*}{Train}
& $P_{xx}$ & $3.5055\times10^{-4}$ & $0.9778$ & $2.1815\times10^{-1}$ & $1.2935\times10^{-2}$ \\
& & $P_{yy}$ & $3.3245\times10^{-4}$ & $0.9770$ & $1.6276\times10^{-1}$ & $1.2490\times10^{-2}$ \\
& & $P_{xy}$ & $5.7838\times10^{-4}$ & $0.9529$ & $2.1690\times10^{-1}$ & $1.6951\times10^{-2}$ \\ \cline{2-7}
& \multirow{3}{*}{Test}
& $P_{xx}$ & $5.0047\times10^{-4}$ & $0.9671$ & $1.3338\times10^{-1}$ & $1.5504\times10^{-2}$ \\
& & $P_{yy}$ & $5.3949\times10^{-4}$ & $0.9616$ & $1.7365\times10^{-1}$ & $1.5476\times10^{-2}$ \\
& & $P_{xy}$ & $7.0924\times10^{-4}$ & $0.9436$ & $1.9701\times10^{-1}$ & $1.9000\times10^{-2}$ \\
\hline
\multirow{6}{*}{\makecell{HN closure\\(quadratic path)}}
& \multirow{3}{*}{Train}
& $P_{xx}$ & $3.6380\times10^{-4}$ & $0.9770$ & $2.2053\times10^{-1}$ & $1.3335\times10^{-2}$ \\
& & $P_{yy}$ & $3.4410\times10^{-4}$ & $0.9762$ & $1.6279\times10^{-1}$ & $1.2909\times10^{-2}$ \\
& & $P_{xy}$ & $5.8382\times10^{-4}$ & $0.9525$ & $2.1680\times10^{-1}$ & $1.7159\times10^{-2}$ \\ \cline{2-7}
& \multirow{3}{*}{Test}
& $P_{xx}$ & $5.1686\times10^{-4}$ & $0.9660$ & $1.3366\times10^{-1}$ & $1.6106\times10^{-2}$ \\
& & $P_{yy}$ & $5.5504\times10^{-4}$ & $0.9604$ & $1.7339\times10^{-1}$ & $1.5929\times10^{-2}$ \\
& & $P_{xy}$ & $7.1774\times10^{-4}$ & $0.9429$ & $1.9886\times10^{-1}$ & $1.9233\times10^{-2}$ \\
\hline
\end{tabular}
\end{table}

To examine the sensitivity of the closure reconstruction to the choice of integration path, we compare the standard line integration with a representative quadratic integration path. As shown in Fig.~\ref{fig:ex1_line_vs_quadratic_scatter}, both reconstruction paths produce nearly identical scatter distributions for all pressure components in both training and testing samples. This observation is further confirmed quantitatively in Table~\ref{tab:closure_metrics_path_compare_ex1_grad}, where the error metrics obtained from the two reconstruction paths remain very close across all components. These results suggest that the closure reconstruction is not strongly sensitive to moderate variations in the integration path.

\end{document}